\def\eqref#1{equation~\ref{#1}}
\def\1{\bm{1}}
\DeclareMathAlphabet{\mathsfit}{\encodingdefault}{\sfdefault}{m}{sl}
\SetMathAlphabet{\mathsfit}{bold}{\encodingdefault}{\sfdefault}{bx}{n}
\newcommand{\R}{\mathbb{R}}
\title{Training invariances and the low-rank phenomenon: beyond linear networks}
\author{Thien Le \& Stefanie Jegelka \\
Massachusetts Institute of Technology\\
\texttt{\{thienle,stefje\}@mit.edu} \\
}
\def\ddefloop#1{\ifx\ddefloop#1\else\ddef{#1}\expandafter\ddefloop\fi}
\def\ddef#1{\expandafter\def\csname
  bb#1\endcsname{\ensuremath{\mathbb{#1}}}} \ddefloop
\def\ddef#1{\expandafter\def\csname
  c#1\endcsname{\ensuremath{\mathcal{#1}}}} \ddefloop
\def\IN{\textsc{in}} 
\def\OUT{\textsc{out}} 
\def\PRE{\textsc{pre}}
\def\POST{\textsc{post}}
\def\D{\textnormal{d}}
\def\sgn{\textup{sgn}}
\newcommand{\ip}[2]{\left\langle #1, #2 \right \rangle}
\newtheorem{definition}{Definition}
\newtheorem{theorem}{Theorem}
\newtheorem{corollary}{Corollary}
\newtheorem{remark}{Remark}
\newtheorem{assumption}{Assumption}
\newtheorem{lemma}{Lemma}
\newtheorem{claim}{Claim}
\begin{document}
\maketitle

\begin{abstract}
The implicit bias induced by the training of neural networks has become a topic of rigorous study.
In the limit of gradient flow and gradient descent with appropriate step size, it has been shown that when one trains a deep linear network with logistic or exponential loss on linearly separable data, the weights converge to rank-$1$ matrices. In this paper, we extend this theoretical result to the last few linear layers of the much wider class of nonlinear ReLU-activated feedforward networks containing fully-connected layers and skip connections. 
Similar to the linear case, the proof relies on specific local training invariances, sometimes referred to as alignment, which we show to hold for submatrices where neurons are stably-activated in all training examples, and it reflects empirical results in the literature. We also show this is not true in general for the full matrix of ReLU fully-connected layers.
Our proof relies on a specific decomposition of the network into a multilinear function and another ReLU network whose weights are constant under a certain parameter directional convergence.

\end{abstract}

\section{Introduction}

Recently, great progress has been made in understanding the trajectory of gradient flow (GF) \citep{jitel_alignment,jitel_convergence,lyulihomogeneous}, gradient descent (GD) \citep{jitel_alignment,arora_linear} and stochastic gradient descent (SGD) \citep{neyshabur_path_norm, neyshabur2017geometry} in the training of neural networks. 
While good theory has been developed for deep linear networks \citep{zhou_linear,arora_linear}, 
practical architectures such as ReLU fully-connected networks or ResNets are highly non-linear. 
This causes the underlying optimization problem (usually empirical risk minimization) to be highly non-smooth (e.g.\ for ReLUs) and non-convex, necessitating special tools such as the Clarke subdifferential \citep{clarke}. 

One of the many exciting results of this body of literature is that gradient-based algorithms exhibit some form of implicit regularization: the optimization algorithm prefers some stationary points to others. In particular, a wide range of implicit biases has been shown in practice \citep{isola_lowrank} 
and proven for deep linear networks \citep{arora_linear, jitel_alignment}, 
 convolutional neural networks \citep{gunasekar2019implicit}
and homogeneous networks \citep{jitel_convergence}. 
One well known result for linearly separable data is that in various practical settings, linear networks converge to the solution of the hard SVM problem, i.e., a max-margin classifier \citep{jitel_alignment}, while a relaxed version is true for CNNs \citep{gunasekar2019implicit}. This holds even when the margin does not explicitly appear in the optimization objective - hence the name implicit regularization. 

Another strong form of implicit regularization 
relates to the structure of weight matrices of fully connected networks. In particular, \citet{jitel_alignment} prove that for deep \emph{linear} networks for binary classification, weight matrices tend to rank-1 matrices in Frobenius norm as a result of GF/GD training, and that adjacent layers' singular vectors align. The max-margin phenomenon follows as a result. In practice, \citet{isola_lowrank} empirically document the low rank bias across different non-linear architectures.
However, their results include ReLU fully-connected networks, CNNs and ResNet, which are not all covered by the existing theory.
Beyond linear fully connected networks, \citet{du2018algorithmic} show vertex-wise invariances for fully-connected ReLU networks and invariances in Frobenius norm differences between layers for CNNs. Yet, despite the evidence in \citep{isola_lowrank}, it has been an open theoretical question how more detailed structural relations between layers, which, e.g., imply the low-rank result, generalize to other, structured or local nonlinear and possibly non-homogeneous architectures, and how to even characterize these.

Hence, in this work, we take steps to addressing a wider set of architectures and invariances. First, we show a class of vertex and edge-wise quantities that remain invariant during gradient flow training. Applying these invariances to architectures containing fully connected, convolutional and residual blocks, arranged appropriately, we obtain invariances of the singular values in adjacent (weight) matrices or submatrices. 
Second, we argue that a matrix-wise invariance is not always true for general ReLU fully-connected layers. Third, we obtain low-rank results for arbitrary non-homogeneous networks whose last few layers contain linear fully-connected and linear ResNet blocks. To the best of our knowledge, this is the first time a low-rank phenomenon is proven rigorously for these architectures.

Our theoretical results offer explanations for empirical observations on more general architectures, and apply to the experiments in \citep{isola_lowrank} for ResNet and CNNs. They also include the squared loss used there, in addition to the exponential or logistic loss used in most theoretical low-rank results. Moreover, our Theorem \ref{thm:low_rank_non_homogeneous} gives an explanation for the ``reduced alignment'' phenomenon observed by \citet{jitel_alignment}, where in experiments on AlexNet over CIFAR-10 the ratio $ \|W\|_2 / \|W\|_F$ converges to a value strictly less than $1$ for some fully-connected layer $W$ towards the end of the network.

One challenge in the analysis is the non-smoothness of the networks and ensuring an ``operational'' chain rule. To cope with this setting, we use a specific decomposition of an arbitrary ReLU architecture into a multilinear function and a ReLU network with +1/-1 weights. This reduction holds in the \emph{stable sign regime}, a certain convergence setting of the parameters. This regime is different from stable activations, and is implied, e.g., by directional convergence of the parameters to a vector with non-zero entries (Lemma \ref{thm:decomp}). This construction may be of independent interest.

In short, we make the following contributions to analyzing implicit biases of general architectures:

\begin{itemize}[leftmargin=8pt]
\item We show vertex and edge-wise weight invariances during training with gradient flow. Via these invariances, we prove that for architectures containing fully-connected layers, convolutional layers and residual blocks, when appropriately organized into matrices, adjacent matrices or submatrices of neurons with stable activation pattern have bounded singular value (Theorem \ref{thm:matrix-wise_invariance}).

\item In the stable sign regime, 
  we show a low-rank bias for arbitrary nonhomogeneous feedforward networks whose last few layers are a composition of linear fully-connected and linear ResNet variant blocks (Theorem \ref{thm:low_rank_non_homogeneous}). 
In particular, if the Frobenius norms of these layers diverge, then the ratio between their operator norm and Frobenius norm is bounded non-trivially by an expression fully specified by the architecture. To the best of our knowledge, this is the first time this type of bias is shown for nonlinear, nonhomogeneous networks.

\item We prove our results via a decomposition that reduces arbitrarily structured  feedforward networks 
  with positively-homogeneous activation (e.g., ReLU) to a multilinear structure 
  (Lemma \ref{thm:decomp}).

\end{itemize}

\subsection{Related works}
Decomposition of fully-connected neural networks into a multilinear part and a  0-1 part has been used by \citet{losssurface, kawaguchi2016}, but their formulation does not apply to trajectory studies. In Section \ref{subsection:path_activation}, we give a detailed comparison between their approach and ours. Our decomposition makes use of the construction of a tree network that \citet{khimpoh} use to analyze generalization of fully-connected networks. We describe their approach in Section~\ref{subsection:khimpoh} and show how we extend their construction to arbitrary feedforward networks. This construction makes up the first part of the proof of our decomposition lemma. The paper also makes use of path enumeration of neural nets, which overlaps with the path-norm literature of \citet{neyshabur_path_norm,neyshabur2017geometry}. The distinction is that we are studying classical gradient flow, as opposed to SGD \citep{neyshabur_path_norm} or its variants \citep{neyshabur2017geometry}.


For linear networks, low-rank bias is proven for separable data and exponential-tailed loss in \citet{jitel_alignment}. \citet{du2018algorithmic} give certain vertex-wise invariances for fully-connected ReLU networks and Frobenius norm difference invariances for CNNs. Compared to their results, ours are slightly stronger since we prove that our invariances hold for almost every time $t$ on the gradient flow trajectory, thus allowing for the use of a Fundamental Theorem of Calculus and downstream analysis. Moreover, the set of invariances we show is strictly larger. 
\citet{uhler} show negative results when generalizing the low-rank bias from \citep{jitel_alignment} to vector-valued neural networks. In our work, we only consider scalar-valued neural networks performing binary classification. For linearly inseparable but rank-$1$ or whitened data, a recent line of work from \citet{tolga} gives explicit close form optimal solution, which is both low rank and aligned, for the regularized objective. This was done for both the linear and ReLU neural networks. In our work, we focus on the properties of the network along the gradient flow trajectory.

\section{Preliminaries and notation}\label{subsection:khimpoh}
For an integer $k \in \mathbb{N}$, we write the set $[k] := \{1, 2, \ldots, k\}$. For vectors, we extend the sign function $\sgn: \R \to \{-1,1\}$ coordinate-wise as $\sgn: (x_i)_{i \in [d]} \mapsto (\sgn(x_i))_{i\in [d]}$. 
%
For some (usually the canonical) basis $(e_i)_{i \in [n]}$ in some vector space $\R^n$, for all $x \in \R^n$ we use the notation $[x]_i = \ip{x}{e_i}$ to denote the $i$-th coordinate of $x$.

\textbf{Clarke subdifferential, definability and nonsmooth analysis.} The analysis of non-smooth functions is central to our results.
For a locally Lipschitz function $f: D \rightarrow \R$ with open domain $D$, there exists a set $D^c \subseteq D$ of full Lebesgue measure on which the derivative $\nabla f$  exists everywhere by Rademacher's theorem. As a result, calculus can usually be done over the \emph{Clarke subdifferential} $
   \partial f(x) := \textsc{conv} \left\{ \lim_{i \rightarrow \infty}  \nabla f(x_i) \mid x_i \in D^c, x_i \rightarrow x \right\} 
 $ where $\textsc{conv}$ denotes the convex hull. 

 The Clarke subdifferential generalizes both the smooth derivative when $f \in C^1$ (continuously differentiable) and the convex subdifferential when $f$ is convex. However, it only admits a chain rule with an inclusion and not equality, which is though necessary for  backpropagation in deep learning. We do not delve in too much depth into Clarke subdifferentials in this paper, but use it when we extend previous results that also use this framework. We refer to e.g.\ \citep{davis_tame,jitel_convergence,bolte2020conservative} for more details.


\textbf{Neural networks.}
Consider a neural network $\nu: \R^d \to \R$. The computation graph of
  $\nu$ is a weighted directed graph $G=(V,E,w)$ with weight function
$w:E \rightarrow \R$. 
For each neuron $v \in V$, let $\IN_v :=
  \left\{ u \in V : uv \in E\right\} $ and $\OUT_v := \left\{ w \in V: vw
  \in E \right\}$ be the input and output neurons of $v$.
  Let $\left\{ i_1, i_2,\ldots, i_d\right\} =: I
  \subset V$ and $O := \left\{ o \right\} \subset V$ be the set of input and
  output neurons defined as $\IN(i) = \emptyset = \OUT(o), \forall i \in
  I$.  Each neuron $v \in V \backslash I$ is equipped with a positively 1-homogeneous activation function $\sigma_v$ (such as the ReLU $x \mapsto \max(x,0)$, leaky ReLU $x \mapsto \max(x,\alpha x)$ for some small positive  $\alpha$ , or the linear activation).

  To avoid unnecessary brackets, we write $w_e := w(e)$ for some  $e \in  E$. We will also write $w\in \R^{\overline{E}}$, where
  $\overline{E}$ is the set of learnable weights, as the vector of
  learnable parameters. Let $P$ be a path in $G$, i.e., a set of edges
  in $E$ that forms a path. We write $v \in P$
  for some $v \in V$ if there exists $u \in V$ such that $uv \in P$ or
  $vu \in P$. Let $\rho$ be the number of distinct paths from any $i \in I$ to $o \in  O$. Let $\cP := \left\{p_1, \ldots, p_{\rho} \right\}$ be the enumeration of these paths. For a path $p \in \cP$ and an input $x$ to the neural network, denote by $x_p$ the coordinate of  $x$ used in  $p$. 

  Given a binary classification dataset $\left\{ (x_i,y_i) \right\}_{i \in [n]} $ with $x_i
  \in \R^{d}, \|x_i\| \leq 1$ and $y_i \in \left\{ -1,1 \right\} $, 
we minimize the empirical risk $
    \mathcal{R}(w) = \frac{1}{n} \sum\nolimits_{i=1}^{n} \ell(y_i \nu (x_i)) =
    \frac{1}{n} \sum\nolimits_{i=1}^{n} \ell(\nu(y_i x_i))$ with loss $\ell: \R \to \R$, using gradient flow $ \frac{\D w(t)}{\D t} \in  - \partial
    \mathcal{R}(w(t)).$ 

    As we detail the architectures used in this paper, we recall that the activation of each neuron is still positively-homogeneous. 
    The networks considered here are assumed to be bias-free. 
    \begin{definition}[Feedforward networks]
      \label{def:ffn}
      A neural net $\nu$ with graph $G$ is a \emph{feedforward network} if $G$ is a directed acyclic graph (DAG). 
    \end{definition}
    
    \begin{definition}[Fully-connected networks]
      \label{def:fcn}
    A feedforward network $\nu$ with graph $G$ is a \emph{fully-connected network} if there exists a partition of $V$ into   $V = (I \equiv V_1) \sqcup V_2  \sqcup \ldots \sqcup (V_{L+1} \equiv O)$ such that for all $u,v \in V, uv \in E$ iff there exists $i \in [L]$ such that $u \in V_i$ and $v \in V_{i+1}$.    \end{definition}
    
    \begin{definition}[Tree networks]
      \label{def:tn}
      A feedforward network $\nu$ with graph  $G$ is a \emph{tree network} if the underlying undirected graph $G$ is a tree (undirected acyclic graph).     
    \end{definition}
     Examples of feedforwards networks include ResNet \citep{resnet}, DenseNet \citep{densenet}, CNNs \citep{cnn-fukushima,cnn-lecun} and other fully-connected ReLU architectures.

     For a fully-connected network $\nu $ with layer partition $V =: V_1 \sqcup \ldots \sqcup V_{L+1}$ where $L$ is the number of (hidden) layers, let $n_i := |V_i|$ be the number of neurons in the $i$-th layer and enumerate $V_i = \{v_{i,j}\}_{j \in [n_i]}$. Weights in this architecture can be organized into matrices $W^{[1]}, W^{[2]}, \ldots, W^{[L]}$ where $\R^{n_{i + 1} \times n_i} \ni W^{[i]}= ((w_{v_{i,j}v_{i+1,k}}))_{j \in [n_i], k \in [n_{i+1}]} $, for all $i \in [L]$.

     \textbf{Tree networks.}
     Most practical architectures are not tree networks, but trees have been used to prove generalization bounds for adversarial risk. In particular, for fully-connected neural networks $f$ whose activations are monotonically increasing and $1$-Lipschitz, \citet{khimpoh} define the \emph{tree transform} as the tree network  $
       Tf(x;w) =\linebreak \sum_{p_{L} = 1}^{n_{L}} W^{[L]}_{1,p_{L}} \sigma\left(\ldots \sum_{p_2 = 1}^{n_2} W^{[2]}_{p_3,p_2} \sigma\left( w_{p_2..p_L} + \sum_{p_1 = 1}^{n_1} W^{[1]}_{p_2,p_1}x_{p_1} \right) \right)
       $ for vectors $w$ with  $\prod_{j = 2}^L n_j$ entries, indexed by an $L$-tuple $(p_2,\ldots,p_L)$.
       We extend this idea in the next section. 

\section{Structural lemma: decomposition of deep networks}\label{subsection:path_activation}
We begin with a decomposition of a neural network into a multilinear and a non-weighted nonlinear part, which will greatly facilitate the chain rule that we need to apply in the analysis.
Before stating the decomposition, we need the following definition of a path enumeration function, which computes the product of all weights and inputs on each path of a neural network.
\begin{definition}[Path enumeration function]
  \label{def:pef}
  Let $\nu $ be a feedforward neural network with graph $G$ and paths $\cP = \left\{ p_1,\ldots,p_\rho\right\} $. The path enumeration function $h$ is defined for this network as
  $ h: (x_1,
      x_2, \ldots, x_d) \mapsto \left( x_p \prod_{e \in p}
      w_e\right)_{p \in \cP} $ where $x_p := x_k$ such that $i_k \in p$.
\end{definition}

We first state the main result  of this section, proven in Appendix \ref{proof:decomp}. 

\begin{lemma}[Decomposition]
  \label{thm:decomp}
Let $\nu: \R^d \rightarrow \R$ be a feedforward network with computation graph  $G$, and $\rho$ the number of distinct maximal paths in  $G$. Then there exists a tree network 
$\mu : \R^{\rho} \rightarrow \R$ such that $\nu = \mu \circ  h$ where $h: \R^d \rightarrow \R^{\rho}$ is the path enumeration function of
    $G$. Furthermore, all weights in $\mu$ are either $-1$ or  $+1$ and fully determined by the signs of the weights in $\nu$.
  \end{lemma}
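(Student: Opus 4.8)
The plan is to build $\mu$ in two stages: first \emph{unravel} the DAG $G$ into an equivalent computation tree, and then exploit positive $1$-homogeneity of the activations to push all weight \emph{magnitudes} out to the leaves, so that only the signs $\sgn(w_e)\in\{-1,+1\}$ remain inside the tree. Composing the two stages yields the claimed factorization $\nu=\mu\circ h$.

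First I would make precise the unraveling that extends the tree transform of \citet{khimpoh} to an arbitrary feedforward $\nu$. Define a rooted tree $T$ recursively: its root is a copy of the output neuron $o$, and whenever $T$ contains a copy of a neuron $v\notin I$, attach to it one child for each $u\in\IN_v$, each child being the root of a freshly duplicated copy of the entire subcomputation feeding $u$ in $G$; leaves are copies of input neurons. Since every copy of $v$ is fed exactly the same inputs as $v$ itself, an easy bottom-up induction shows each copy computes the same value as the original neuron, so the tree network with the inherited edge weights computes $\nu$ unchanged. By construction the root-to-leaf paths of $T$ are in bijection with the maximal paths $\cP$ of $G$, so $T$ has exactly $\rho$ leaves, and its underlying undirected graph is a tree; hence $T$ is a tree network in the sense of Definition~\ref{def:tn}.

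Next I would normalize the weights. Because $T$ is a tree, every non-root copy has a \emph{unique} parent edge, so I can define positive scalars $m_v$ top-down by $m_o=1$ and $m_u=m_v/|w_{uv}|$ for the child $u$ of $v$ across the tree edge copying the DAG edge $uv$. Writing $z_v$ for the value at copy $v$ and using $\sigma_v(c\,y)=c\,\sigma_v(y)$ for $c=m_v>0$, an induction gives $z_v=m_v\,\tilde z_v$, where $\tilde z_v=\sigma_v\!\big(\sum_{u\in\IN_v}\sgn(w_{uv})\,\tilde z_u\big)$ is the value of a tree network whose internal weights are the signs $\sgn(w_{uv})$; the single factor $m_v$ can be pulled through $\sigma_v$ precisely because every summand $w_{uv}z_u=\sgn(w_{uv})\,m_v\,\tilde z_u$ shares the common factor $m_v=|w_{uv}|m_u$, which is where the unique-parent property is essential. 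Telescoping $m$ along a path $p$ shows the corresponding leaf value equals $x_p\prod_{e\in p}|w_e|$, i.e.\ the path enumeration function of Definition~\ref{def:pef} up to the per-path sign $\prod_{e\in p}\sgn(w_e)$. To match the signed $h$ exactly, I absorb that per-path sign into the leaf-incident tree weight (each leaf owns its parent edge, so these choices never conflict); the resulting weights stay in $\{-1,+1\}$ and depend only on the signs of the $w_e$, giving $\nu=\mu\circ h$ with $\mu$ the normalized tree.

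The main obstacle is the interaction between signs and the non-oddness of ReLU. Positive homogeneity lets me pull out only the \emph{positive} factors $m_v$, so unlike the magnitudes the signs cannot be commuted through the activations and collapsed onto the inputs; they must persist as the $\pm1$ internal weights of $\mu$, and the bookkeeping that reconciles them with the signed products in $h$ is the delicate step. The unique-parent property of $T$ is exactly what makes the recursion $m_u=m_v/|w_{uv}|$ well defined and conflict-free, which is the structural reason the decomposition acts on the unraveling rather than on $G$ directly. Degenerate edges with $w_e=0$ are harmless: any path through them has $h_p=0$, and a homogeneous tree maps zero inputs to zero, so those leaves and their $\pm1$ weights may be fixed arbitrarily.
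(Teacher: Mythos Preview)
Your proposal is correct and follows essentially the same two-stage strategy as the paper: unroll the DAG into an equivalent computation tree (the paper formalizes this via a Hasse diagram on paths and inducts on maximal path length; your recursive duplication is the same construction), then use positive $1$-homogeneity to push weight magnitudes toward the leaves so that only signs remain as internal weights. Your top-down scaling recursion $m_u=m_v/|w_{uv}|$ is a clean repackaging of the paper's iterative ``pull-back'' operation applied in BFS order from the root, and your direct absorption of the per-path sign into the leaf-incident weight achieves the same end as (and is slightly more economical than) the paper's edge subdivision, which inserts an extra linear neuron at each leaf.
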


  \textbf{Path activation of ReLU networks in the literature.}
The viewpoint that for every feedforward network $\nu$ there exists a tree network $\mu$ such that $\nu = \mu \circ h$ is not new and our emphasis here is on the fact that the description of $\mu: \R^\rho \to \R$ is fully determined by the signs of the weights. Indeed, in analyses of the loss landscape \citep{losssurface,kawaguchi2016}, ReLU networks are described as a sum over paths: 
\begin{equation}\label{eqn:zp}
    \nu(x;w) = \sum_{p \in \cP} Z_p(x;w) \prod_{e \in p} w_e =\ip{(Z_p(x;w))_{p \in \cP}}{h(x;w)}_{\R^\rho},
  \end{equation} where $Z_p(x;w) = 1$ iff all ReLUs on path $p$ are active (have nonnegative preactivation) and $0$ otherwise. One can then take $\mu$ as a tree network with no hidden layer,  $\rho$ input neurons all connected to a single output neuron. However, this formulation complicates analyses of gradient trajectories, because of the explicit dependence of $Z_p$ on numerical values of $w$. In our lemma,  $\mu$ is a tree network whose description depends only on the signs of the weights. If the weight signs (not necessarily the ReLU activation pattern!) are constant, $\mu$ is fixed, allowing for a chain rule to differentiate through it. That weight signs are constant is realistic, in the sense that it is implied by directional parameter convergence (Section~\ref{sec:stablesign}).
%
  To see this, compare the partial derivative with respect to some $w_e$ (when it exists) between the two approaches, in the limit where weight signs are constant: 
  \begin{align}\text{(using Lemma \ref{thm:decomp}) }\qquad\partial \nu / \partial w_e &= \sum_{p \in \cP | e\in p}[\nabla_w \mu(x;w)]_p \cdot x_p\prod_{f \in p, f\neq e} w_f,\label{eqn:partial_derivative_decomp}\\ \text{(using $Z_p$ in Eqn.~\ref{eqn:zp}) }\qquad\partial \nu / \partial w_e &= \sum_{p \in \cP | e \in p} \left(w_e \partial Z_p(x;w) / \partial w_e  + Z_p(x;w)\right) \prod_{f \in p, f\neq e} w_f .\end{align} In particular, the dependence of Equation \ref{eqn:partial_derivative_decomp} on  $w_e$ is extremely simple. The utility of this fact will be made precise in the next section when we study invariances.

\textbf{Proof sketch.}\label{subsection:decomp_sketch_proof}
The proof contains two main steps. First, we ``unroll'' the feedforward network into a tree network that computes the same function by adding extra vertices, edges and enable weight sharing. This step is part of the tree transform in \citet{khimpoh} if the neural network is a fully-connected network; we generalize it to work with arbitrary feedforward networks. Second, we ``pull back'' the weights towards the input nodes using positive homogeneity of the activations: $a \cdot \sigma(x) = \sgn(a) \cdot \sigma(x|a|)$. This operation is first done on vertices closest  to the output vertex (in number of edges on the unique path between any two vertices in a tree) and continues until all vertices have been processed. Finally, all the residual signs can be subsumed into $\mu$ by subdividing edges incident to input neurons.
We give a quick illustration of the two steps described above for a fully-connected ReLU-activated network with $1$ hidden layer in Appendix \ref{illustration:decomp}.


\section{Main Theorem: Training invariances}
In this section, we put the previous decomposition lemma to use in proving an implicit regularization property of gradient flow when training deep neural networks.

\subsection{Stable sign regime: a consequence of directional convergence}\label{sec:stablesign}
Recall the gradient flow curve $\{w(t)\}_{t \in [0,\infty)}$ defined by the differential inclusion $\frac{\D w(t)}{\D t} \in -\partial \cR(w(t))$. We first state   the main assumption in this section.
\begin{assumption}[Stable sign regime]\label{assumption:stable_sign}
  For some  $t_0 < t_N \in [0,\infty]$, we assume that for all $ t \in[t_0,t_N), \sgn(w(t)) = \sgn(w(t_0))$. If this holds, we say that gradient flow is in a \emph{stable sign regime}. Without loss of generality, when using this assumption, we identify $t_0$ with  $0$ and write  "for some $t \geq 0$" to mean "for some $t \in [t_0, t_N)$". 
\end{assumption}
In fact, the following assumption - the existence and finiteness part of which has been proven in \citep{jitel_convergence} for homogeneous networks, is sufficient. 
  \begin{assumption}[Directional convergence to non-vanishing limit in each entry] \label{assumption:directional_convergence}
  We assume that $\frac{w(t)}{\|w(t)\|_2} \xrightarrow{t \to \infty} \overline{w}$ exists, is finite in each entry and furthermore, for all $e \in E, \overline{w}_e \neq 0$.
  \end{assumption}

  \paragraph{Motivation and justification.} It is straightforward to see that Assumption \ref{assumption:stable_sign} follows from Assumption \ref{assumption:directional_convergence} but we provide a proof in the Appendix (Claim \ref{proof:dir_conv_implies_stable_sign}). Directional convergence was proven by \citet{jitel_convergence} for the exponential/logistic loss and the class of homogeneous networks, under additional mild assumptions.  This fact justifies the first part of Assumption \ref{assumption:directional_convergence} for these architectures.  
  The second part of Assumption \ref{assumption:directional_convergence} is pathological for our case, in the sense that directional convergence alone does not imply stable signs (for example, a weight that converges to $0$ can change sign an infinite number of times).


  \paragraph{Pointwise convergence is too strong in general.} Note also that assuming pointwise convergence of the weights (i.e. $\lim_{t \to \infty} w(t)$ exists and is finite) is a much stronger statement, which is not true for the case of exponential/logistic loss and homogeneous networks (since $\|w(t)\|_2$ diverges, see for example \citet{lyulihomogeneous}, \citet{jitel_convergence}, \citet{jitel_alignment}). Even when pointwise convergence holds, it would immediately reduce statements on asymptotic properties of gradient flow on ReLU activated architectures to that of linearly activated architectures. One may want to assume that gradient flow starts in the final affine piece prior to its pointwise convergence and thus activation patterns are fixed throughout training and the behavior is (multi)linear. In contrast, directional convergence of the weights \emph{does not} imply such a reduction from the ReLU-activation to the linear case. Similarly, with stable signs, the parts of the input where the network is linear are not convex, as opposed to the linearized case \citep{hanin_rolnick} (see also Claim \ref{proof:directional_lalignment_not_stable_activation}).

\paragraph{Stable sign implication.} The motivation for Assumption \ref{assumption:stable_sign} is that  weights in the tree network $\mu$ in Lemma \ref{thm:decomp} are fully determined by the \emph{signs} of the weights in the original feedforward network $\nu$. Thus, under Assumption \ref{assumption:stable_sign}, one can completely fix the weights of $\mu$ - it has no learnable parameters. Since we have the decomposition  $\nu = \mu \circ h$ where $h$ is the path enumeration function, dynamics of $\mu$ are fully determined by dynamics of $h$ in the stable sign regime. To complete the picture, observe that  $h$ is highly multilinear in structure: the degree of a particular edge weight $w_e$ in each entry of $h$ is at most $1$ by definition of a path; and if $\nu$ is fully-connected, then $h$ is a $ \R^{n_1 \times n_2 \times \ldots\times n_L}$ tensor.

\subsection{Training invariances} 
First, we state an assumption on the loss function that holds for most losses  used in practice, such as the logistic, exponential or squared loss.
\begin{assumption}[Differentiable loss]\label{assumption:differentiable_loss}
  The loss function $\ell: \R \to \R$ is differentiable everywhere. 
\end{assumption}

\begin{lemma}[Vertex-wise invariance]
    \label{lemma:vertex_invariance}
    Under Assumptions \ref{assumption:stable_sign}, 
    and \ref{assumption:differentiable_loss}, for all $v \in V \backslash \{I \cup O\}$ such that all edges
    incident to $v$ have learnable weights, for a.e. time $t\geq 0$, 
    \begin{equation}\label{eqn:one_vertex_invariance}
     \sum_{u \in \IN_v} w_{uv}^2(t) - \sum_{b \in \OUT_v}w_{vb}^2(t) =
      \sum_{u \in \IN_v} w_{uv}^2(0) - \sum_{b \in \OUT_v} w_{vb}^2(0).
    \end{equation}
    If we also have $\IN_u = \IN_v= \IN$ and $\OUT_u = \OUT_v=\OUT$ and $u$ and  $v$ have the same activation pattern (preactivation has the same sign) for each training example and for a.e time  $t \geq 0$, then for a.e. time $t\geq 0$, 
    \begin{equation}\label{eqn:pov_invariance}
   \sum_{a \in \IN}  w_{au}(t) w_{av}(t) - \sum_{b \in \OUT}w_{ub}(t) w_{vb}(t) =
      \sum_{a \in \IN} w_{au}(0) w_{av}(0) - \sum_{b \in
        \OUT} w_{ub}(0)w_{vb}(0).
    \end{equation}
  \end{lemma}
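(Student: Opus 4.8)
The plan is to read both displays as conservation laws and show that the time-derivative of each left-hand side vanishes for a.e.\ $t$, after which the absolute continuity of $t \mapsto w(t)$ (the trajectory is locally Lipschitz, so each quadratic expression is absolutely continuous) lets me integrate via the fundamental theorem of calculus to recover the stated equality with the value at $t=0$. For a.e.\ $t$ the flow satisfies $\dot w_e(t) = -\partial\mathcal{R}(w(t))/\partial w_e$ with the classical partial derivative, and since $\mathcal{R}(w)=\tfrac1n\sum_i\ell(y_i\nu(x_i))$ with $\ell$ differentiable (Assumption \ref{assumption:differentiable_loss}), the chain rule gives $\partial\mathcal{R}/\partial w_e = \tfrac1n\sum_i \ell'(y_i\nu(x_i))\,y_i\,\partial\nu(x_i)/\partial w_e$. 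So I would reduce everything to a per-example combinatorial identity on the quantities $w_e\,\partial\nu(x)/\partial w_e$ and then reassemble with the common coefficients $\tfrac1n\ell'(y_i\nu(x_i))y_i$. Throughout I use the formula of Equation \ref{eqn:partial_derivative_decomp} combined with Equation \ref{eqn:zp}: at a point of differentiability (where the indicators are locally constant) $\partial\nu/\partial w_e = \sum_{p\in\cP\,:\,e\in p} Z_p(x;w)\,x_p\prod_{f\in p,\,f\neq e}w_f$, with $Z_p\in\{0,1\}$ flagging that all activations on path $p$ fire.

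For Equation \ref{eqn:one_vertex_invariance} the key observation is that multiplying by the weight restores the full path product, $w_e\,\partial\nu/\partial w_e = \sum_{p\,:\,e\in p} Z_p(x;w)\,[h(x;w)]_p$. Summing over incoming edges of $v$, every maximal path through $v$ enters via exactly one edge $uv$ with $u\in\IN_v$, so $\sum_{u\in\IN_v} w_{uv}\,\partial\nu/\partial w_{uv} = \sum_{p\,:\,v\in p} Z_p[h]_p$; and every such path leaves via exactly one edge $vb$ with $b\in\OUT_v$, so $\sum_{b\in\OUT_v} w_{vb}\,\partial\nu/\partial w_{vb}$ equals the same quantity. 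The two sums coincide, and weighting by $\tfrac1n\ell'(y_i\nu(x_i))y_i$ and summing over $i$ yields $\sum_u w_{uv}\partial\mathcal{R}/\partial w_{uv} = \sum_b w_{vb}\partial\mathcal{R}/\partial w_{vb}$, whence $\frac{\D}{\D t}\big(\sum_u w_{uv}^2 - \sum_b w_{vb}^2\big)=0$. This step uses only $1$-homogeneity of $\sigma_v$ through the path product, not any activation assumption.

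For Equation \ref{eqn:pov_invariance} I would parametrize paths. Since $\IN_u=\IN_v=\IN$ and $\OUT_u=\OUT_v=\OUT$, each maximal path through $u$ is a triple $(a,b,q)$: its incoming edge $au$, outgoing edge $ub$, and the remaining edges $q$; swapping the segment $a\to u\to b$ for $a\to v\to b$ gives the matching path through $v$. Writing $W_q:=\prod_{f\in q}w_f$ and $x_q$ for the shared input coordinate, the hypothesis that $u,v$ have the same activation pattern on $x$ forces the two indicators to agree, $Z_{a,b,q}$, because the two paths share every neuron except $u$ versus $v$. A direct expansion then gives $\sum_a w_{av}\,\partial\nu/\partial w_{au} = \sum_{a,b,q} Z_{a,b,q}\,x_q W_q\, w_{av}w_{ub}$ and $\sum_a w_{au}\,\partial\nu/\partial w_{av} = \sum_{a,b,q} Z_{a,b,q}\,x_q W_q\, w_{au}w_{vb}$, while on the outgoing side $\sum_b w_{vb}\,\partial\nu/\partial w_{ub} = \sum_{a,b,q} Z_{a,b,q}\,x_q W_q\, w_{au}w_{vb}$ and $\sum_b w_{ub}\,\partial\nu/\partial w_{vb} = \sum_{a,b,q} Z_{a,b,q}\,x_q W_q\, w_{av}w_{ub}$. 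Both totals equal $\sum_{a,b,q} Z_{a,b,q}\,x_q W_q\,(w_{au}w_{vb}+w_{av}w_{ub})$, so after weighting and summing over examples $\sum_a(w_{av}\partial\mathcal{R}/\partial w_{au}+w_{au}\partial\mathcal{R}/\partial w_{av}) = \sum_b(w_{vb}\partial\mathcal{R}/\partial w_{ub}+w_{ub}\partial\mathcal{R}/\partial w_{vb})$, which is exactly $\frac{\D}{\D t}\big(\sum_a w_{au}w_{av}-\sum_b w_{ub}w_{vb}\big)=0$.

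The hard part is not this algebra but the nonsmoothness bookkeeping that legitimizes it for a.e.\ $t$. I expect the main obstacle to be showing that at almost every $t$ on the trajectory (i) the classical gradient $\nabla\mathcal{R}(w(t))$ exists and drives the flow, and (ii) the path-product formula for $\partial\nu/\partial w_e$ is valid, i.e.\ no preactivation sits exactly at a kink, so that the $Z_p$ are locally constant and the chain rule through the fixed tree network $\mu$ of Lemma \ref{thm:decomp} holds with genuine equality rather than a Clarke inclusion. Here the stable sign regime (Assumption \ref{assumption:stable_sign}) is what freezes $\mu$ and renders $h$ honestly multilinear in $w$, so that the only nonsmoothness comes from a measure-zero kink set; the delicate step is bounding the preimage of that set under $t\mapsto w(t)$ and combining it with differentiability of $\ell$, after which absolute continuity of the trajectory and the fundamental theorem of calculus promote the a.e.\ derivative identities to the claimed equalities for all $t\ge 0$.
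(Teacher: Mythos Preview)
Your combinatorial core—multiply by $w_e$ to restore the full path product and then double-count the paths through $v$ via incoming versus outgoing edges, and for the second identity parametrize by $(a,b,q)$ and use the $u\leftrightarrow v$ swap—is exactly the paper's argument (the paper calls the swapped path a ``jagged path''). The one place you diverge is the nonsmoothness bookkeeping you flag at the end. You propose to show that for a.e.\ $t$ the trajectory avoids kinks so the $Z_p$ formula and classical chain rule hold; the paper does \emph{not} argue this and instead works in the Clarke subdifferential throughout. Concretely, it writes $\dot w(t)=\tfrac1n\sum_j \ell'(\cdot)y_j g_j$ with $g_j\in\partial\nu_{x_j}(w)$ (via the chain rule of Lyu--Li), then uses the inclusion chain rule $\partial(\mu\circ h)\subseteq\textsc{conv}\{\sum_p[\alpha]_p\beta_p\}$; the stable-sign assumption makes $h$ smooth so $\beta_p$ is a singleton, and since the resulting $d_{j,p}(w)$ does not depend on which edge $e$ one differentiated, the double-counting identity holds for \emph{every} $\alpha\in\partial\mu$ and hence for the convex combination—no kink-avoidance is needed. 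For the second part the paper proves a separate symmetry lemma (Lemma~\ref{lemma:stable_activation}) that $\partial_{p_1\cup\{u\}\cup p_2}\mu=\partial_{p_1\cup\{v\}\cup p_2}\mu$ as Clarke partials under the equal-activation hypothesis, which is the subdifferential analogue of your $Z_{a,b,q}$ equality. Your kink-avoidance route would require an extra measure-zero argument for the preimage that the paper's Clarke approach bypasses.
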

  \paragraph{Comparison to \cite{du2018algorithmic}} A closely related form of Equation \ref{eqn:one_vertex_invariance} in Lemma \ref{lemma:vertex_invariance} has appeared in \citet{du2018algorithmic} (Theorem 2.1) for fully-connected ReLU/leaky-ReLU networks. In particular, the authors showed that the difference between incoming and outgoing weights does not change. Invoking the Fundamental Theorem of Calculus (FTC) over this statement will return ours. However, their proof may not hold on a nonnegligible set of time $t$ due to the use of the operational chain rule 
  that holds only for almost all $w_e$. Thus the FTC can fail. The stronger form we showed here is useful in proving downstream algorithmic consequences, such as the low rank phenomenon. Furthermore, our result also holds for arbitrary feedforward architectures and not just the fully-connected case. 

  Before we put Lemma \ref{lemma:vertex_invariance} to use, we list definitions of some ResNet variants.
  \begin{definition}
    Denote ResNetIdentity, ResNetDiagonal and ResNetFree to be the version of ResNet described in \citet{resnet} where the residual block is defined respectively as
\begin{enumerate}
  \item $r(x;U,Y) = \sigma(U\sigma(Yx) + Ix)$ where $x \in \R^a, Y, U^\top \in \R^{b \times a},$ and $I$ is the identity, 
  \item $r(x;U,Y,D) = \sigma(U\sigma(Yx) + Dx)$  where $x \in \R^a, Y, U^\top \in \R^{b \times a},$ and  $D$ is diagonal,
  \item $r(x;U,Y,Z) = \sigma(U\sigma(Yx) + Zx)$ where $x \in \R^a, Y \in \R^{b \times a}, U \in \R^{c \times b}$ and $Z \in \R^{c \times a}$. 
\end{enumerate}
  \end{definition}
  ResNetIdentity is the most common version of ResNet in practice.
  ResNetIdentity is a special case of ResNetDiagonal, which is a special case of ResNetFree. Yet, theorems for ResNetFree do not generalize trivially to the remaining variants, due to the restriction of Lemma \ref{lemma:vertex_invariance} and Lemma \ref{lemma:edge_invariance} to vertices adjacent to all learnable weights and layers containing all learnable weights. 
%
For readability, we introduce the following notation:
\begin{definition}[Submatrices of active neurons]
  Fix some time $t$, let $W \in R^{a \times b}$ be a weight matrix from some set of $a$ neurons to another set of  $b$ neurons. Let  $I_{\text{active}} \subseteq [b]$ be the set of  $b$ neurons that are active (linear or ReLU with nonnegative preactivation). 
  We write  $[W^\top W]_{\text{active}} \in  \R^{|I_{\text{active}}| \times |I_{\text{active}}|}$ for the submatrix of $W^\top W$ with rows and columns from $I_{\text{active}}$. Similarly, if $W' \in {b \times c}$ is another weight matrix from the same set of $b$ neurons to another set of  $c$ neurons then  $[W'W'^\top]_{\text{active}}$ is defined as the submatrix with rows and columns from $I_{\text{active}}$. 
\end{definition}

  When applying Lemma \ref{lemma:vertex_invariance} to specific architectures, we obtain the following:
  \begin{theorem}[Matrix-wise invariances]\label{thm:matrix-wise_invariance}
    Recall that a  convolutional layer with number of input kernels $a$, kernel size $b$ and number of output kernels $c$ and is a tensor in $\R^{a\times  b \times c}$. Under Assumptions \ref{assumption:stable_sign} and \ref{assumption:differentiable_loss}, we have the following matrix-wise invariance for a.e.\ time $t \geq 0$:
    \begin{equation}\label{eqn:matrix_invariance}
      \frac{\D}{\D t} \left(\left[W_2(t)^\top W_2(t)\right]_{\text{active}} - \left[W_1(t) W_1(t)^\top\right]_{\text{active}}\right) = 0\text{, for:}
    \end{equation}
    \begin{enumerate}[leftmargin=16pt]
      \item (Fully-connected layers) $W_1 \in \R^{b\times a}$ and $W_2 \in \R^{c\times b}$ consecutive fully-connected layers,
      \item (Convolutional layers)  $W_1$ is convolutional, viewed as a flattening to a matrix $\R^{c \times (a\times b)}$, and $W_2$ adjacent convolutional, viewed as a flattening to a matrix $\R^{(d \times e)\times c}$,
      \item (Within residual block of ResNet)  $W_1 = Y$ and  $W_2 = U$ where $r(x;U,Y,Z)$ is a residual block of ResNetIdentity, ResNetDiagonal or ResNetFree,
      \item (Between residual blocks of ResNet) $W_1 = \begin{bmatrix} U_1 &Z_1\end{bmatrix}$,  
        $W_2 = \begin{bmatrix} Y_2 \\ Z_2 \end{bmatrix}$ where $r(x;U_j,Y_j, Z_j)$, $j \in \left\{ 1,2 \right\}$  are consecutive ResNetFree blocks,
      \item (Convolutional-fully-connected layers) $W_1$  convolutional, viewed as a flattening to a matrix $\R^{c \times (a \times b)}$ and $W_2$ adjacent fully-connected layer, viewed as an rearrangement to  $\R^{d \times c}$,
      \item (Convolutional-ResNetFree block) $W_1$  convolutional, viewed as a flattening to a matrix $\R^{c \times (a \times b)}$ and $W_2$ is a rearrangement of $\begin{bmatrix} U &Z\end{bmatrix}$ into an element of $\R^{d \times c}$, where $r(x;U,Y,Z)$ is an adjacent ResNetFree block,
      \item (ResNetFree block-fully-connected layers and vice versa) $W_1 = \begin{bmatrix} U &Z\end{bmatrix} \in \R^{b\times a}$ and $W_2 \in \R^{c \times b}$ adjacent fully-connected or $W_1 \in \R^{b \times a}$ fully-connected and $W_2 =\begin{bmatrix} Y \\ Z \end{bmatrix} \in \R^{c\times b}$ adjacent ResNet block where $r(x;U,Y,Z)$ is the ResNetFree block.
    \end{enumerate}
\end{theorem}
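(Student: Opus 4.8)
The plan is to reduce the matrix identity in Equation \ref{eqn:matrix_invariance} to a family of scalar invariances and then read each scalar off Lemma \ref{lemma:vertex_invariance}. Since a matrix has vanishing time-derivative for a.e.\ $t$ iff each of its entries is constant for a.e.\ $t$, it suffices to fix a pair $(u,v)$ of neurons indexing the active submatrices and to show that the corresponding entry of $\left[W_2^\top W_2\right]_{\text{active}} - \left[W_1 W_1^\top\right]_{\text{active}}$ does not vary in $t$. The heart of the argument is a dictionary between these entries and the conserved quantities of Lemma \ref{lemma:vertex_invariance}: in every case the two layers $W_1,W_2$ share a common set of ``middle'' neurons that serves as the index set of both Gram matrices, and for such neurons $u,v$ one has $\left[W_2^\top W_2\right]_{uv} = \sum_{b \in \OUT} w_{ub} w_{vb}$ and $\left[W_1 W_1^\top\right]_{uv} = \sum_{a \in \IN} w_{au} w_{av}$, where $\IN,\OUT$ are the common input and output neighborhoods. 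Hence the $(u,v)$ entry of the difference equals, up to an overall sign, exactly the left-hand side of Equation \ref{eqn:pov_invariance} when $u \neq v$ and of Equation \ref{eqn:one_vertex_invariance} when $u = v$, both of which are constant for a.e.\ $t$ under Assumptions \ref{assumption:stable_sign} and \ref{assumption:differentiable_loss}.

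First I would dispatch the diagonal entries $u=v$: these equal $\sum_{b \in \OUT_u} w_{ub}^2 - \sum_{a \in \IN_u} w_{au}^2$, constant for a.e.\ $t$ by Equation \ref{eqn:one_vertex_invariance}, provided $u$ is a genuine hidden neuron all of whose incident edges are learnable. For the off-diagonal entries I would invoke Equation \ref{eqn:pov_invariance}, whose hypotheses are $\IN_u = \IN_v$, $\OUT_u = \OUT_v$, and that $u,v$ carry the same activation pattern on every training example. The first two hold by construction once $u,v$ lie in the same layer or block. The activation-pattern hypothesis is precisely what forces the restriction to the \emph{active} submatrix: if both $u$ and $v$ are active (nonnegative preactivation) on all training examples, they share the all-active pattern and Equation \ref{eqn:pov_invariance} applies. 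This is why the statement concerns $I_{\text{active}}$ rather than the full matrix, consistent with the paper's observation that the full-matrix version can fail.

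With this template in hand, each of the seven cases reduces to identifying the shared neurons and checking the learnable-edge and matching-neighborhood conditions. Case 1 (consecutive fully-connected layers) is immediate, as fully-connectedness gives equal $\IN$ and $\OUT$ sets for the middle layer. Case 3 (within a residual block) takes the block's hidden neurons as the middle set; their incoming edges come from $Y$ and outgoing edges go into $U$, all learnable regardless of the skip type, which is exactly why this case holds for all three ResNet variants. The between-block and block-to-fully-connected cases 4 and 7 instead take the block's input/output neurons as the shared set and bundle all their incident edges into the stacked matrices $\begin{bmatrix} U & Z \end{bmatrix}$ and $\begin{bmatrix} Y \\ Z \end{bmatrix}$; here $U$ and $Z$ label distinct (unshared) learnable edges, so the vertex-wise invariance still applies directly, but only because the skip $Z$ is learnable — this is precisely why these cases are stated for ResNetFree and excluded for the identity/diagonal variants. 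The convolutional cases 2, 5 and 6 require the extra ingredient of weight sharing: one kernel entry labels many edges, so the one-edge-one-weight reasoning behind Equation \ref{eqn:one_vertex_invariance} breaks, and I would invoke the edge-wise companion invariance (Lemma \ref{lemma:edge_invariance}), choosing the flattening so that the channel Gram matrices $W_1 W_1^\top$ and $W_2^\top W_2$ coincide with the conserved edge-wise quantities; case 6 combines both features, needing the ResNetFree bundle on one side and the edge-wise argument on the convolutional side.

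I expect the main obstacle to be exactly this architecture-specific bookkeeping rather than any new analytic idea: one must verify, for each flattening, rearrangement and skip structure, that (i) the (possibly transposed or stacked) matrix products genuinely compute the neighborhood sums $\sum w_{ub} w_{vb}$ and $\sum w_{au} w_{av}$ over the correct index sets, with the incoming/outgoing orientation matched to the layer order, and (ii) the hypotheses of the invariance lemmas are actually met — all incident edges learnable, which singles out ResNetFree for the between-block and mixed cases, and a common activation pattern, which singles out the active submatrix. The convolutional weight-sharing is the subtlest point, being the one regime where the plain vertex-wise statement is insufficient and one must pass to the edge-wise version.
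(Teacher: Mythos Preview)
Your plan for the non-convolutional cases (1, 3, 4, 7) is exactly right and matches the paper: reduce each entry of the Gram-matrix difference to Lemma~\ref{lemma:vertex_invariance}, using Equation~\ref{eqn:one_vertex_invariance} on the diagonal and Equation~\ref{eqn:pov_invariance} off-diagonal, after checking the shared-neighborhood and learnable-edge hypotheses. Your remarks on why the restriction to $I_{\text{active}}$ is forced and why cases 4 and 7 are stated only for ResNetFree are also correct.

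The gap is in the convolutional cases (2, 5, 6). Lemma~\ref{lemma:edge_invariance} conserves only a \emph{single scalar} per pair of layers, namely the difference of squared Frobenius norms $\sum_{e\in F}w_e^2-\sum_{f\in F'}w_f^2$. It cannot deliver the $c^2$ individual entries of $W_1W_1^\top - W_2^\top W_2\in\R^{c\times c}$; at best it controls the trace. Nor can you recover entries by applying it channel-by-channel: the edges carrying one output channel's weights do not form a layer in the sense of Definition~\ref{def:layer}, since removing them leaves the graph connected through the remaining channels, so condition~2 of that definition fails.

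What the paper actually does for these cases is prove a separate weight-sharing extension of the \emph{vertex-wise} lemma (Lemma~\ref{lemma:vertex_invariance_weight_sharing} in the appendix). A single kernel slice corresponds not to one neuron but to a family $\{v_{j,k}\}_{j\in[l]}$ indexed by spatial location, and the gradient of a shared weight aggregates contributions from all $l$ locations. The double-counting argument behind Lemma~\ref{lemma:vertex_invariance} must therefore be run over the union of (jagged) paths through every $v_{j,k}$; doing so for a pair of output channels $k,k'$ yields precisely the $(k,k')$ entry of the channel Gram identity. So you correctly identified that the plain vertex-wise statement is insufficient under weight sharing, but the remedy is a weight-sharing upgrade of that same lemma, not a detour through the edge-wise one.
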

We emphasize that the above theorem only makes \emph{local} requirements on the neural network, to have local parts that are either fully-connected, convolutional or a residual block. The only global architecture requirement is feedforward-ness. The first point of Theorem \ref{thm:matrix-wise_invariance} admits an extremely simple proof for the linear fully-connected network case in \citet{arora_linear} (Theorem 1)
. 

\paragraph{Significance of Theorem \ref{thm:matrix-wise_invariance}.} If we have a set of neurons that is active throughout training (which is vacuously true for linear layers), we can invoke an FTC and get $W_2(t)^\top W_2(t) - W_1(t) W_1(t)^\top = W_2(0)^\top W_2(0) - W_1(0) W_1(0)^\top$ for the submatrix restricted to these neurons. Assume for simplicity that the right hand side is $0$, then the singular values of  $W_1$ and  $W_2$ are the same for each of the cases listed in Theorem \ref{thm:matrix-wise_invariance}. If we can form a chain of matrices whose singular values are the same by iteratively invoking Theorem \ref{thm:matrix-wise_invariance}, then all matrices considered have the same singular values as the final fully-connected layer that connects to the output. Recall that our networks are scalar-valued, so the final layer is a row vector, which is rank $1$ and thus all layers considered in the chain have rank  $1$, which is useful in the next section.

\paragraph{Proof sketch of Theorem \ref{thm:matrix-wise_invariance}.} Given Lemma \ref{lemma:vertex_invariance}, we demonstrate the proof for the first point. The remaining  points admit the exact same proof technique but on different matrices, which require some bookkeeping. Let $W_1 \in \R^{b \times a}$ and $W_2 \in \R^{c \times b}$ be two consecutive fully-connected layers for some $a,b,c \in \mathbb{N}$ number of vertices in these layers. Applying Equation \ref{eqn:one_vertex_invariance} of Lemma \ref{lemma:vertex_invariance} to each of the $b$  shared neurons  between these two layers, one obtains the diagonal entries of Equation \ref{eqn:matrix_invariance} of the Theorem. Now, apply Equation \ref{eqn:pov_invariance} to each pair  among the $b$ shared neurons between these two layers to get the off-diagonal entries of Equation \ref{eqn:matrix_invariance}. 

Next, we define layers for architectures where weights are not necessarily organized into matrices, e.g., ResNet or DenseNet.
    \begin{definition}[Layer]\label{def:layer}
    Let $F \subset E$ be such that 1) for all $e\neq f \in F$, there is no path that contains both $e$ and $f$; and 2) the graph $(V,E\backslash F, w)$ is disconnected. Then $F$ is called a \emph{layer} of $G$.
  \end{definition}

  For this definition, we have the following invariance:
\begin{lemma}[Edge-wise invariance]
    \label{lemma:edge_invariance}
    Under Assumptions \ref{assumption:stable_sign} and \ref{assumption:differentiable_loss}, for all layers $F$ and  $F'$ that contain all learnable weights, it holds that  
    for a.e. time $t\geq 0$,     \begin{equation}
      \sum_{e \in F} w_{e}^2(t) - \sum_{f \in F'}w_{f}^2(t) =\sum_{e \in F}
      w_{e}^2(0) - \sum_{f \in F'}w_{f}^2 (0).
    \end{equation}
  \end{lemma}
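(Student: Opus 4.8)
The plan is to mimic the flow-balance argument behind Lemma~\ref{lemma:vertex_invariance}, replacing the single vertex by the cut $F$. First I would differentiate the conserved quantity: for a.e.\ $t$ at which the gradient flow is differentiable and the operational chain rule of the decomposition $\nu=\mu\circ h$ applies (which holds a.e.\ under Assumption~\ref{assumption:stable_sign}, with $\mu$ fixed),
\[
\frac{\D}{\D t}\Big(\sum_{e\in F}w_e^2-\sum_{f\in F'}w_f^2\Big)=2\sum_{e\in F}w_e\dot w_e-2\sum_{f\in F'}w_f\dot w_f,
\]
and $\dot w_e=-\partial\cR/\partial w_e$. Since $\cR(w)=\frac1n\sum_i\ell(\nu(y_ix_i))$ and $\ell$ is differentiable (Assumption~\ref{assumption:differentiable_loss}), the chain rule gives $\partial\cR/\partial w_e=\frac1n\sum_i\ell'(\nu(y_ix_i))\,\partial\nu(y_ix_i)/\partial w_e$. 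Hence it suffices to prove, for each fixed input $x$ (take $x=y_ix_i$), the \emph{layer flow-balance identity} $\sum_{e\in F}w_e\,\partial\nu(x)/\partial w_e=\sum_{f\in F'}w_f\,\partial\nu(x)/\partial w_f$.

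Next I would evaluate each side using the decomposition-based partial derivative in Equation~\ref{eqn:partial_derivative_decomp}. Writing $c_p(x):=[\nabla_w\mu(x;w)]_p\,x_p\prod_{f\in p}w_f$, that formula gives $w_e\,\partial\nu(x)/\partial w_e=\sum_{p\in\cP:\,e\in p}c_p(x)$; that is, multiplying by $w_e$ exactly restores the full path product $\prod_{f\in p}w_f$. Summing over $e\in F$ and exchanging the order of summation yields $\sum_{e\in F}w_e\,\partial\nu(x)/\partial w_e=\sum_{p\in\cP}|\{e\in F:e\in p\}|\,c_p(x)$. By property~(1) of Definition~\ref{def:layer}, no path contains two distinct edges of $F$, so the multiplicity $|\{e\in F:e\in p\}|$ is $0$ or $1$; the left side therefore collapses to $\sum_{p\ \mathrm{crosses}\ F}c_p(x)$, the sum over maximal paths using (exactly one) edge of $F$. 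The identical computation applies to $F'$.

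It remains to identify the two path sets. The crucial structural claim is that \emph{every} maximal input-to-output path crosses a layer exactly once, so both sides equal $\sum_{p\in\cP}c_p(x)$ and the identity holds. ``At most once'' is property~(1); for ``at least once'' I would use property~(2): a maximal path from an input $i$ to $o$ avoids $F$ iff all its edges lie in $E\setminus F$, i.e.\ iff $i$ and $o$ lie in the same connected component of $(V,E\setminus F)$. Thus ``all maximal paths cross $F$'' is equivalent to ``$F$ separates $o$ from every input,'' which is what the disconnection property~(2) supplies for the architecturally meaningful layers (this is also the role of requiring the layer to contain all the learnable weights of the cut). Granting this, $\sum_{p\ \mathrm{crosses}\ F}c_p(x)=\sum_{p\in\cP}c_p(x)=\sum_{p\ \mathrm{crosses}\ F'}c_p(x)$, proving the flow-balance identity and hence the lemma after integrating via the Fundamental Theorem of Calculus.

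The main obstacle is precisely this last graph-theoretic step: turning the bare disconnectedness of $(V,E\setminus F)$ into the statement that the component of $o$ contains no input, so that the set of paths crossing $F$ is all of $\cP$ and therefore independent of the particular layer. Everything else — the time derivative, the reduction across $\ell$, and the $0/1$ multiplicity from property~(1) — is the same bookkeeping used for Lemma~\ref{lemma:vertex_invariance}, now carried out for a cut rather than a single vertex. I would also verify that the ``a.e.\ $t$'' qualifier survives: as in Lemma~\ref{lemma:vertex_invariance}, the operational chain rule behind Equation~\ref{eqn:partial_derivative_decomp} is valid only for almost every $t$, which is why the conclusion is stated for a.e.\ $t$ and is still strong enough to invoke the Fundamental Theorem of Calculus.
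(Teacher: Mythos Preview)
Your proposal is correct and follows essentially the same double-counting argument as the paper: multiply each $\partial\nu/\partial w_e$ by $w_e$ to restore the full path product, then sum over $e\in F$ so that the result becomes a sum over $\cP$ that is independent of the particular layer. Your write-up is in fact more careful than the paper's, since you explicitly isolate the graph-theoretic step (that property~(2) of Definition~\ref{def:layer} must force \emph{every} input-to-output path to cross $F$, not merely that $(V,E\setminus F)$ is disconnected), whereas the paper simply asserts that ``by the definition of layer, one can double count $\cP$.''
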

  \paragraph{Significance of Lemma \ref{lemma:edge_invariance}.} A flattening of a convolutional parameter tensor and a stacking of matrices in a ResNetDiagonal and ResNetFree block forms a layer. This lemma implies that the squared Frobenius norm of these matrices in the same network differs by a value that is fixed at initialization. The lemma also gives a direct implicit regularization for networks with biases, by treating neurons with bias as having an extra in-edge whose weight is the bias, from an extra in-vertex which is an input vertex with fixed input value $1$.       

  \subsection{Proof sketch of Lemma \ref{lemma:vertex_invariance} and Lemma \ref{lemma:edge_invariance}}
The proofs of Lemma \ref{lemma:vertex_invariance}, Lemma \ref{lemma:edge_invariance} and Theorem \ref{thm:matrix-wise_invariance} share the technique of double counting paths, which we explain next. For simplicity, we assume here that we are working with a network that is differentiable everywhere in some domain that we are considering -- we give a full general proof  in the Appendix. The main proof idea was used in \citep{arora_linear} and involves simply writing down the partial derivative of the risk. We have, for some particular weight $w_e, e \in E$, via the smooth chain rule
 \begin{align}
   \frac{\partial \cR(w)}{\partial w_e} &= \frac{1}{n}\sum_{i = 1}^n l'(y_i \nu(x_i;w)) \cdot y_i \cdot  \frac{\partial \nu(w)}{\partial w_e}\\
                                        &=  \frac{1}{n}\sum_{i = 1}^n l'(y_i \nu(x_i;w)) \cdot y_i \cdot \sum_{p \in \cP, p \ni e} \left[\mu'(x_i;w) \right]_p  \cdot (x_i)_p\prod_{f \in p, f \neq e} w_f,
 \end{align}
 where in the second line, we invoke the decomposition Lemma \ref{thm:decomp} and emphasize that $\mu$ has no learnable parameters in the stable sign regime.  Now multiply $w_e$ to the above expression to get
 \begin{equation}\label{eqn:partial_derivative_explain}
 w_e\frac{\partial \cR(w)}{\partial w_e} =  \sum_{p \in \cP, p \ni e} \frac{1}{n}\sum_{i = 1}^n A_{i,p}(w),  \end{equation} where 
 $   A_{i,p}(w) = l'(y_i \nu(x_i;w)) \cdot y_i \cdot \left[\mu'(x_i;w) \right]_p \cdot (x_i)_p\prod_{f \in p} |w_f|
 $. Notice that $A_{i,p}(w)$ does not depend explicitly on the edge $e$, with respect to which we are differentiating (only through $w$). Thus, we sum over in-edges and out-edges of a particular  $v$ satisfying the assumption of Lemma \ref{lemma:vertex_invariance} to get
   \begin{equation}\label{eqn:partial_derivative_sum}
     \sum_{u \in \IN_v} w_{uv}\frac{\partial \cR(w)}{\partial w_{uv}} =  \sum_{p \in \cP, p \ni v} \frac{1}{n}\sum_{i = 1}^n A_{i,p}(w) = \sum_{b \in \OUT_v} w_{vb}\frac{\partial \cR(w)}{\partial w_{vb}}.
   \end{equation}
   Note that the only difference between Equations \ref{eqn:partial_derivative_explain} and \ref{eqn:partial_derivative_sum} is the set of paths that we are summing over, and we double count this set of paths.
   We use the definition of gradient flow to obtain $ \partial \cR(w) / \partial w_{e} = \D w_e(t) / \D t$ and integrate with respect to time using a FTC to get the first part of Lemma \ref{lemma:vertex_invariance}. More work is needed to get the second part of  Lemma \ref{lemma:vertex_invariance}, which is detailed in Appendix \ref{appendix:training_invariance}. 
  Finally, to get Lemma \ref{lemma:edge_invariance}, we double count the set of all paths  $\cP$.

  \subsection{Noninvariance of general ReLU layers}
  The restriction of Theorem \ref{thm:matrix-wise_invariance} to submatrices of active neurons may appear limiting, but does not extend to the general case. 
  With the same technique as above, we can write down the gradient for the Gram matrix $W_1^\top W_1$ for ReLU layers and show that it is not equal to its counterpart  $W_{2} W_2^\top$, thus giving a negative result:
  \begin{lemma}[Noninvariance in ReLU layers]\label{lemma:noninvariance}
   Even under Assumptions \ref{assumption:differentiable_loss} and \ref{assumption:stable_sign}, for a.e. time $t \geq 0$,
 \begin{equation}
   \frac{\D}{\D t} \left(W_2(t)^\top W_2(t) - W_1(t) W_1(t)^\top\right) \neq 0,
 \end{equation} for the different pairs of $W_1, W_2$ detailed in Theorem \ref{thm:matrix-wise_invariance}.
 \end{lemma}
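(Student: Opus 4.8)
The plan is to re-run the double-counting computation behind Lemma \ref{lemma:vertex_invariance}, but for the \emph{off-diagonal} entries of $W_2^\top W_2 - W_1 W_1^\top$ \emph{without} assuming that the two relevant neurons share an activation pattern, and then to isolate the residual that the hypothesis of Equation \ref{eqn:pov_invariance} is designed to annihilate. Note first that the diagonal of $W_2^\top W_2 - W_1 W_1^\top$ is invariant unconditionally by Equation \ref{eqn:one_vertex_invariance}, so any failure of invariance must come from the off-diagonal entries, which are precisely the entries governed by Equation \ref{eqn:pov_invariance} and its same-activation requirement. I would therefore fix a pair of shared neurons $u,v$ in the common layer, differentiate the entry $\sum_{b \in \OUT} w_{ub}w_{vb} - \sum_{a \in \IN} w_{au}w_{av}$ in time, and substitute gradient flow $\dot w_e = -\partial\cR/\partial w_e$ together with the decomposition $\partial\nu/\partial w_e = \sum_{p \in \cP,\, p \ni e}[\mu'(x_i;w)]_p\,(x_i)_p \prod_{f \in p,\, f \neq e} w_f$ from Lemma \ref{thm:decomp} and Equation \ref{eqn:partial_derivative_decomp}.

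The point is that the in-edge and out-edge contributions match path-by-path exactly as in the proof of Equation \ref{eqn:pov_invariance}, except that each matched pair carries a factor $[\mu'(x_i)]_p$ along a path through $u$ against the corresponding factor along a path through $v$; since these tree-derivative factors are fixed by the activation pattern, they cancel precisely on those examples $i$ where $u$ and $v$ fire identically and leave a residual on the examples where the patterns disagree. I would make this fully explicit on the minimal instance that already exhibits the phenomenon: a scalar input, one hidden layer of two ReLU neurons $h_j = \sigma(a_j x)$, and scalar output $\nu = b_1 h_1 + b_2 h_2$, so that $W_1 = (a_1,a_2)^\top$ and $W_2 = (b_1,b_2)$. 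Writing $s_{ij} := \1[a_j x_i > 0]$ for the activation indicator and $g_i := \ell'(y_i\nu(x_i))\,y_i$, a direct computation yields the closed form
\[ \frac{\D}{\D t}\bigl(b_1 b_2 - a_1 a_2\bigr) = -\frac{1}{n}\,(a_1 b_2 - a_2 b_1)\sum_{i=1}^{n} g_i\, x_i\,(s_{i1} - s_{i2}). \]
When $s_{i1} = s_{i2}$ for every $i$ this vanishes, recovering Equation \ref{eqn:pov_invariance}; the obstruction is exactly the activation mismatch $s_{i1} - s_{i2}$, which is generic once neurons $u,v$ fire differently.

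To convert ``generically nonzero'' into a statement holding for a.e.\ $t$, I would pick a single training point $x_1 > 0$, $y_1 = 1$, and an initialization with $\sgn(a_1) = +1$, $\sgn(a_2) = -1$ (kept fixed throughout by Assumption \ref{assumption:stable_sign}), so that $s_{11} = 1$, $s_{12} = 0$ and the mismatch is the constant $s_{11} - s_{12} = 1$; the right-hand side then collapses to $-(a_1 b_2 - a_2 b_1)\,\ell'(\nu(x_1))\,x_1$. For exponential or logistic loss $\ell'$ never vanishes, $x_1 > 0$, and at a generic initialization $a_1 b_2 - a_2 b_1 \neq 0$, so the derivative is nonzero at $t=0$ and hence, by continuity, on a time interval of positive measure --- already contradicting $\frac{\D}{\D t}(\cdots) = 0$ a.e. The same template embeds into the remaining cases of Theorem \ref{thm:matrix-wise_invariance}: after the appropriate flattening (convolutional layers) or stacking (the ResNet variants) the identical defect term reappears, so one counterexample per case follows by bookkeeping.

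I expect the main obstacle to be upgrading ``nonzero at $t=0$'' to ``nonzero for a.e.\ $t$'', i.e.\ ruling out the conspiracy in which the prefactor $a_1 b_2 - a_2 b_1$ and the loss-weighted sum vanish together on a set of positive measure. For exponential loss the gradient-flow trajectory solves an ODE with real-analytic right-hand side on each activation cell, and in the construction above the activations stay fixed, so both factors are real-analytic in $t$ on the stable-sign interval; a real-analytic function is either identically zero or vanishes on a discrete set, and since the product is nonzero at $t=0$ it cannot be identically zero. This gives nonvanishing for all but discretely many $t$, which is the claimed a.e.\ statement, while Assumption \ref{assumption:stable_sign} guarantees the sign pattern --- hence $\mu$, the activation indicators $s_{ij}$, and the stability of the regime itself --- remain fixed over the interval under consideration.
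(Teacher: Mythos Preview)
Your argument shares the paper's core idea: both compute the time derivative of the off-diagonal entry and show that an activation mismatch between the two shared neurons $u,v$ destroys the path-by-path cancellation underlying Equation~\ref{eqn:pov_invariance}. The paper carries this out at the level of a general network, deriving closed forms (Equations~\ref{eqn:low_rank_update_lhs} and~\ref{eqn:low_rank_update_rhs}) for $\frac{\D}{\D t}\sum_a w_{au}w_{av}$ and $\frac{\D}{\D t}\sum_b w_{ub}w_{vb}$ under ``$u$ active, $v$ inactive'', and then argues only informally---via an ``asymmetric'' choice of in/out weights---that the two expressions differ. Your route is more concrete and, on one point, strictly more rigorous: you specialize to a minimal two-neuron instance, obtain the explicit residual $-(a_1b_2-a_2b_1)\cdot\tfrac{1}{n}\sum_i g_i x_i(s_{i1}-s_{i2})$, and then use real-analyticity of the flow on the fixed activation cell (for exponential loss) to promote nonvanishing at $t=0$ to nonvanishing for a.e.\ $t$, a step the paper's own proof does not supply. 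In your specific construction one can even avoid analyticity: with $a_2<0<a_1$, $b_2>0$, $b_1>0$ and a single point $x_1>0$, the dead neuron freezes $(a_2,b_2)$ while $a_1,b_1$ increase monotonically, so $a_1b_2-a_2b_1>0$ for \emph{all} $t$ and the residual never vanishes. What the paper's more general closed form buys is the observation in the subsequent Remark that, for a single training example, $\frac{\D}{\D t}(W_k^\top W_k)$ is a rank-at-most-$2$ update.
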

  Despite the negative result, the closed form of the gradient for the Gram matrices can be shown to be low rank with another subgradient model. Details may be found in Appendix \ref{appendix:training_invariance}.

\section{Consequences: Low rank phenomenon for nonlinear nonhomogeneous deep feedforward net}

We apply the results from previous parts to prove a low-rank bias result for a large class of feedforward networks. To the best of our knowledge, this is the first time such a result is shown for this class of deep networks, although the linear fully-connected network analogue has been known for some time. 
In light of Theorem \ref{thm:matrix-wise_invariance}, we define a matrix representation of a layer: \begin{definition}[Matrix representation of a layer]
  The matrix representation for a ResNetFree block $r(x;U,Y,Z)$ is $\begin{bmatrix} U & Z \end{bmatrix}$; for a $T^{a,b,c} \in  \R^{a \times b \times c}$ convolutional tensor it is the flattening to an element of $\R^{a \times (b \times c)}$; and for a fully-connected layer it is the weight matrix itself.
\end{definition}
\begin{theorem}[Reduced alignment for non-homogeneous networks]\label{thm:low_rank_non_homogeneous}
  Under Assumptions \ref{assumption:stable_sign} and \ref{assumption:differentiable_loss}, let $\nu$ consist of an arbitrary feedforward neural network $\eta$, followed by $K \geq 0$ \emph{linear} convolutional layers $(T^{a_k,b_k,c_k})_{k \in [K]}$, followed by $M \geq 0$ layers that are either \emph{linear} ResNetFree blocks or \emph{linear} fully-connected layers; and finally ending with a linear fully-connected layer $Fin$. For $j \in [K+M]$, denote by $W^{[j]}$ the matrix representation of the $j$-th layer after $\eta$, $N_r(j)$ the number of ResNetFree blocks between  $j$ and  $Fin$ exclusively and $V_c(j) := \max\dim W^{[M+1]} \cdot \prod_{j < k \leq M} \min(a_k,b_k)$ if $j \leq M$ and  $1$ otherwise. Then there exists a constant $D\geq 0$  fixed at initialization such that for a.e. time $t > 0$,
  \begin{equation}\label{eqn:finite_reduced_alignment}
    \frac{1}{8^{N_r(j)}V_c(j)}\|W^{[j]}(t)\|^2_F -  \|W^{[j]}(t)\|^2_2\leq D,\; \forall j \in [K+M].
  \end{equation} Furthermore, assume that $\|W^{[j]}\|_F\rightarrow \infty$ for some  $j \in [K+M]$, then we have, as $t \to \infty$:
  \begin{equation}\label{eqn:reduce_alignment}
    1 / \min\left(\mathrm{rank}(W^{[k]}), 8^{N_r(j)}V_c(j) \right) \leq \|W^{[k]}(t)\|^2_2 / \|W^{[k]}(t)\|^2_F \leq 1,
  \end{equation}
  In particular, for the last few fully-connected layers $j$ with $N_r(j) = 0$ and $V_c(j) = 1$, we have:
\begin{equation}\label{eqn:rank_1}
 \left\|\frac{W^{[j]}(t)}{\|W^{[j]}(t)\|_F}-u_j(t)v_j^\top(t) \right\|_F \xrightarrow{t \rightarrow \infty} 0, \forall k \in [M], \qquad |\ip{v_{j+1}}{u_{j}}| \xrightarrow{t \rightarrow \infty} 1,  \end{equation} where $u_j$ and  $v_j$ are the left and right principal singular vectors of  $W^{[j]}$.
\end{theorem}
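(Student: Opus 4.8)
The plan is to propagate the rank-$1$ structure of the final layer $Fin$ backwards through the chain of linear layers using the matrix-wise invariances of Theorem~\ref{thm:matrix-wise_invariance}, tracking two scalars per layer: the squared Frobenius norm and the squared operator norm. Since every layer after $\eta$ is linear, all neurons are active at all times, so the $[\cdot]_{\text{active}}$ submatrices are the full Gram matrices and each adjacent pair $(W_1,W_2)$ in the chain satisfies $\frac{\D}{\D t}(W_2^\top W_2 - W_1 W_1^\top)=0$ for a.e.\ $t$. First I would invoke the Fundamental Theorem of Calculus (justified as in Lemma~\ref{lemma:vertex_invariance}, since $w(t)$ is absolutely continuous and the Gram entries are polynomial in $w$) to integrate this to $W_2^\top W_2 - W_1 W_1^\top = C$, a constant matrix fixed at initialization. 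The base case is that $Fin \in \R^{1\times n}$ is a row vector, hence rank $1$ with $\|Fin\|_F=\|Fin\|_2$.

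Next I would extract two scalar recursions from $W_2^\top W_2 = W_1 W_1^\top + C$. Taking traces gives $\|W_2\|_F^2 - \|W_1\|_F^2 = \Tr(C)$, and Weyl's inequality on the top eigenvalue (both Grams being square of the shared dimension) gives $\bigl|\,\|W_2\|_2^2 - \|W_1\|_2^2\,\bigr| \le \|C\|_2$. Telescoping both relations from $Fin$ back to any layer $j$ shows that $\|W^{[j]}\|_F^2$ and $\|W^{[j]}\|_2^2$ each equal $\|Fin\|_2^2$ up to an additive constant determined by initialization. For the terminal run of fully-connected layers (where $N_r(j)=0$ and $V_c(j)=1$), subtracting the two immediately yields $\|W^{[j]}\|_F^2 - \|W^{[j]}\|_2^2 \le D$, i.e.\ Equation~\ref{eqn:finite_reduced_alignment} with factor $1$.

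The main obstacle is the bookkeeping that produces the factors $8^{N_r(j)}$ and $V_c(j)$, which arises because the \emph{matrix representation} whose norms appear in Equation~\ref{eqn:finite_reduced_alignment} is not the arrangement used by the invariance. For a convolutional tensor $T^{a,b,c}$ the representation is the flattening in $\R^{a\times(b\times c)}$, whereas Theorem~\ref{thm:matrix-wise_invariance} controls the Grams of the $\R^{c\times(a\times b)}$ flattening; the Frobenius norm is reshaping-invariant, but the operator norms of two flattenings of the same tensor differ by a factor bounded by a product of shared dimensions, accumulating to $V_c(j)$ across the convolutional block. Likewise, for a ResNetFree block the representation $\begin{bmatrix}U & Z\end{bmatrix}$ must be related to the within-block invariance (Theorem~\ref{thm:matrix-wise_invariance}, case 3) on $(Y,U)$ and the between-block invariance (case 4) on $\begin{bmatrix}Y\\Z\end{bmatrix}$; splitting and recombining these stacked/augmented matrices costs a bounded multiplicative constant per constituent matrix, giving $8^{N_r(j)}$ (three matrices per block). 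Carrying the additive constant $C$ through these rescalings is the delicate part and is where I expect to spend most effort; the upshot is $\frac{1}{8^{N_r(j)}V_c(j)}\|W^{[j]}\|_F^2 \le \|W^{[j]}\|_2^2 + D$.

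Finally I would derive Equations~\ref{eqn:reduce_alignment} and \ref{eqn:rank_1} from Equation~\ref{eqn:finite_reduced_alignment}. Because consecutive Frobenius norms always differ by additive constants (traces are reshaping-invariant, independent of the factor story), divergence of $\|W^{[j]}\|_F$ for one $j$ forces divergence for all layers, so the single-layer hypothesis suffices. The bound $\|W^{[k]}\|_2^2/\|W^{[k]}\|_F^2 \le 1$ is immediate; the lower bound combines the trivial $\|W\|_F^2 \le \mathrm{rank}(W)\,\|W\|_2^2$ with the rearrangement $\|W^{[k]}\|_2^2/\|W^{[k]}\|_F^2 \ge 1/(8^{N_r}V_c) - D/\|W^{[k]}\|_F^2$, whose last term vanishes as $\|W^{[k]}\|_F\to\infty$, producing the $\min$ in the limit. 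For the terminal fully-connected layers the factor is $1$, so $\sum_{i\ge 2}\sigma_i^2(W^{[j]}) \le D$ stays bounded while $\sigma_1^2/\|W^{[j]}\|_F^2 \to 1$, giving $\|W^{[j]}/\|W^{[j]}\|_F - u_j v_j^\top\|_F \to 0$; and since the top eigenvectors of the adjacent Grams $W^{[j]}(W^{[j]})^\top$ and $(W^{[j+1]})^\top W^{[j+1]}$ differ by the bounded $C$ while their top eigenvalue diverges and the eigengap (second eigenvalue bounded by $D$) also diverges, a Davis--Kahan eigenvector-perturbation argument forces $|\ip{v_{j+1}}{u_j}| \to 1$.
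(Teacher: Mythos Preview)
Your proposal is correct and follows essentially the same route as the paper: equate Frobenius norms across layers via the invariances (the paper uses Lemma~\ref{lemma:edge_invariance} directly rather than tracing Theorem~\ref{thm:matrix-wise_invariance}, but these are equivalent), telescope operator norms with per-layer multiplicative losses, subtract, and then divide and appeal to eigenvector perturbation for the rank-$1$/alignment conclusions. The one place your sketch diverges is the justification of the constants: the factor $1/8$ per ResNetFree block is \emph{not} a ``three matrices per block'' count but is obtained in the paper's Lemma~\ref{lemma:representation_cheap_resnet} by chaining two Weyl inequalities with an AM--GM step to compare $\|[U\;\,Z]\|_2^2$ against $\|[Y;Z]\|_2^2$ (using the within-block invariance $\|Y\|_2^2 \ge \|U\|_2^2 - D$ in the middle), and the convolutional reshaping factor $1/\min(a_k,b_k)$ comes from a tensor-flattening norm inequality (Lemma~\ref{lemma:representation_cheap_convolutional}) rather than a generic dimension bound---so when you execute the ``delicate part'' you flagged, those are the arguments to reach for.
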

\begin{corollary}\label{thm:low_rank}
  For fully-connected networks with ReLU activations where the last $K$ layers are linear layers, trained with linearly separable data under logistic loss $\ell$, under the assumptions that  $\cR(w(0))< \ell(0)$ and the limiting direction of weight vector (which exists \citep{jitel_convergence}) has no $0$ entries,  Equation \ref{eqn:rank_1} holds for the last $K$ layers. 
\end{corollary}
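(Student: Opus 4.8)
The plan is to obtain the corollary as a direct specialization of Theorem~\ref{thm:low_rank_non_homogeneous}: I would verify each of its hypotheses for the pure fully-connected ReLU network with a linear tail, and then read off Equation~\ref{eqn:rank_1} for the last $K$ layers. Four things must be checked: (i) Assumption~\ref{assumption:differentiable_loss}; (ii) Assumption~\ref{assumption:stable_sign}; (iii) divergence of the relevant Frobenius norms, which is what activates the asymptotic part of the theorem; and (iv) the structural identities $N_r(j)=0$ and $V_c(j)=1$ for the last $K$ layers. Since $\eta$ is taken to be the ReLU part and the final $K$ layers are linear fully-connected, with no convolutional or ResNetFree blocks, the theorem applies with its convolutional count and its number of residual blocks both zero.

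First I would dispatch the two assumptions. The logistic loss $\ell(x)=\log(1+e^{-x})$ is smooth, so Assumption~\ref{assumption:differentiable_loss} holds. For Assumption~\ref{assumption:stable_sign} I would invoke \citet{jitel_convergence}: a bias-free fully-connected network with ReLU and linear activations is positively homogeneous in its parameter vector, the data are linearly separable, and the initialization satisfies $\cR(w(0))<\ell(0)$, so their result gives both $\|w(t)\|_2\to\infty$ and directional convergence $w(t)/\|w(t)\|_2\to\overline{w}$. Together with the hypothesis that $\overline{w}$ has no zero entry, this is exactly Assumption~\ref{assumption:directional_convergence}, which in turn implies Assumption~\ref{assumption:stable_sign} by Claim~\ref{proof:dir_conv_implies_stable_sign}.

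Next I would establish the Frobenius-norm divergence. Because $\overline{w}_e\neq 0$ for every edge $e$ and $\|w(t)\|_2\to\infty$, each coordinate obeys $|w_e(t)|=\|w(t)\|_2\,(|\overline{w}_e|+o(1))\to\infty$, so the squared Frobenius norm of every layer --- being a sum of such coordinates --- diverges; in particular $\|W^{[j]}\|_F\to\infty$ for each of the last $K$ layers. Finally, for these layers there is no ResNetFree block between them and $Fin$ and there are no convolutional layers, so $N_r(j)=0$ and $V_c(j)=1$. Applying Theorem~\ref{thm:low_rank_non_homogeneous}, Equation~\ref{eqn:reduce_alignment} collapses to $1\le\|W^{[k]}(t)\|_2^2/\|W^{[k]}(t)\|_F^2\le 1$ since $\min(\mathrm{rank}(W^{[k]}),1)=1$, which forces asymptotic rank-one and yields Equation~\ref{eqn:rank_1}, as claimed.

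The main obstacle I anticipate is step (iii) read together with (ii): transferring the \emph{directional} convergence of the global parameter vector into divergence of each \emph{individual} layer's norm. This is precisely where the no-zero-entries hypothesis is indispensable --- without it a layer's entries could all shrink (relatively) to zero and its Frobenius norm need not diverge, so the asymptotic regime of Theorem~\ref{thm:low_rank_non_homogeneous} would never be entered and Equation~\ref{eqn:rank_1} could fail. A secondary point requiring care is confirming that the homogeneity degree and separability conditions match the exact hypotheses of \citet{jitel_convergence} so that directional convergence may legitimately be cited.
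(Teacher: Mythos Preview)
Your proposal is correct and follows the same overall strategy as the paper: verify the hypotheses of Theorem~\ref{thm:low_rank_non_homogeneous} and read off Equation~\ref{eqn:rank_1}. The one place you diverge is in establishing Frobenius-norm divergence of the linear layers. You argue coordinate-wise: since directional convergence to a limit with no zero entry gives $|w_e(t)|=\|w(t)\|_2\,(|\overline{w}_e|+o(1))\to\infty$ for every edge $e$, each layer's Frobenius norm diverges. The paper instead invokes Lemma~\ref{lemma:edge_invariance}, which says the squared Frobenius norms of any two layers differ by a constant fixed at initialization; summing over all $L$ layers yields $L\|W_L(t)\|_F^2=\|w(t)\|_2^2+\text{const}$, so divergence of the global parameter norm (from \citet{lyulihomogeneous,jitel_convergence}) forces every layer's norm to diverge, and then Theorem~\ref{thm:low_rank_non_homogeneous} is applied. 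Your route is more elementary and bypasses the layer-wise invariance lemma; the paper's route decouples the divergence step from the no-zero-entries hypothesis and would still go through if stable signs were obtained some other way.
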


\paragraph{Significance} Equation \ref{eqn:finite_reduced_alignment} and its limiting counterpart Equation \ref{eqn:reduce_alignment} quantify a low-rank phenomenon by providing a lower bound on the ratio of the largest squared singular value (the operator norm) and the sum of all squared singular values (the Frobenius norm). This lower bound depends on the \emph{number} (not dimensions) of ResNetFree layers ($N_r$) and \emph{certain dimensions} of convolutional layers ($V_c$). When the dimensions of ResNetFree layers are large, $\max \dim W^{[M+1]}$ is small and the number of input channels of convolutional layers are large, this lower bound is strictly better than the trivial lower bound of $1 / \text{rank}(W^{[k]})$. This is a quantification of the \emph{reduced alignment} observed in \citep{jitel_alignment}. In particular, for the last few fully connected layers (Equation \ref{eqn:rank_1}, Corollary \ref{thm:low_rank}), the lower bound matches the upper bound of $1$ in the limit of  Frobenius norm tending to infinity and the limiting weight matrices have rank  $1$ and adjacent layers align. 

\section{Concluding remarks and Future directions}
In this paper, we extend the proof of the low rank phenomenon, which has been widely observed in practice, beyond the linear network case. In particular, we address a variety of nonlinear architectural structures, homogeneous and non-homogeneous, which in this context have not been addressed theoretically before.
To this end, we decomposed a feedforward ReLU/linear activated network into a composition of a multilinear  function with a tree network. If the weights converge in direction to a vector with non-zero entries, the tree net is eventually fixed, allowing for chain rules to differentiate through. This leads to various matrix-wise invariances between fully-connected, convolution layers and ResNet blocks, enabling us to control the singular values of consecutive layers. In the end, we obtain a low-rank theorem for said local architectures. 

Proving convergence to the stable sign regime for a wider set of architectures will strengthen Theorem~\ref{thm:low_rank_non_homogeneous}. Another direction is to connect our low-rank bias results to the  max-margin implicit regularization literature, which has been shown for linear networks and, more recently, certain $2$-homogeneous architectures \citep{jitel_convergence}. 


\subsubsection*{Acknowledgments}
This work was partially funded by NSF CAREER award 1553284 and NSF award 2134108. The authors thank the anonymous reviewers for their insightful feedback. We would also like to thank Matus Telgarsky for fruitful discussions on their related papers and on the Clarke subdifferential, and Kaifeng Lyu for pointing out an error in an earlier version of this paper. 


{\RaggedRight
  \bibliography{iclr2022_conference}
  \bibliographystyle{iclr2022_conference}
}

\appendix

\section{Illustration of Lemma \ref{thm:decomp}}\label{illustration:decomp}
We give a quick illustration of the two steps described in Section \ref{subsection:decomp_sketch_proof} for a fully-connected ReLU-activated network with $1$ hidden layer. Figure \ref{fig:tree-transform} describes the unrolling of the neural network (Figure \ref{fig:general_net}) into a tree network (Figure \ref{fig:unrolled}). Figure \ref{fig:pullback2} describes the weight pull-back in the hidden layer and Figure \ref{fig:pullback3} describes the weight pull-back in the input layer. It is clear that the inputs of the tree net in Figure \ref{fig:pullback3} are coordinates of the path enumeration function $h(x;w)$ in this example. Furthermore, weights in the tree net depend entirely on the signs of the original weights. The rest of the proof argues this intuition for general feed-forward neural nets. As a remark, in general, $\rho$ is a very large number - exponential in the number of layers for a fully-connected net with fixed width.
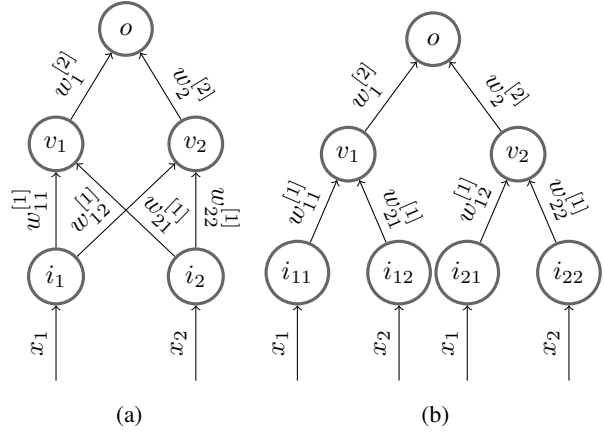
\begin{figure}[h]
\begin{center}
\begin{subfigure}[b]{0.24\linewidth}
\begin{tikzpicture}[
roundnode/.style={circle, draw=black!60, fill=white!5, very thick, minimum size=7mm},
dotdotnode/.style={circle, draw=white!60, fill=white!5, very thick, inner sep=0pt, minimum size=0.5mm}
]
\node[roundnode]      (maintopic)                              {$o$};
\node[roundnode]        (v21)       [below left =1cm and 0.4cm of maintopic] {$v_{1}$};
\node[roundnode]        (v22)       [below right =1cm and 0.4cm of maintopic] {$v_{2}$};
\node[roundnode]       (v11)       [below=of v21] {$i_{1}$};
\node[roundnode]        (v12)      [below=of v22] {$i_{2}$};
\node[dotdotnode] (x1) [below=of v11] {};
\node[dotdotnode] (x2) [below=of v12] {};

\draw[->] (x1) -- node[midway, above right, sloped, pos=0.2] {$x_1$} (v11);
\draw[->] (x2) -- node[midway, above right, sloped, pos=0.2] {$x_{2}$} (v12);
\draw[->] (v11) -- node[midway, above right, sloped, pos=0.1] {$w^{[1]}_{11}$} (v21);
\draw[->] (v11) -- node[midway, above right, sloped, pos=0.05] {$w^{[1]}_{12}$} (v22);
\draw[->] (v12) -- node[midway, above right, sloped, pos=0.52] {$w^{[1]}_{21}$} (v21);
\draw[->] (v12) -- node[midway, above left, sloped, pos=0.1, rotate=180] {$w^{[1]}_{22}$} (v22);
\draw[->] (v21) -- node[midway, above right, sloped, pos=0.06] {$w^{[2]}_{1}$} (maintopic);
\draw[->] (v22) -- node[midway, above right, sloped, pos=0.7] {$w^{[2]}_{2}$} (maintopic);
\end{tikzpicture}
\caption{}
\label{fig:general_net}
\end{subfigure}
\begin{subfigure}[b]{0.33\linewidth}
\begin{tikzpicture}[
roundnode/.style={circle, draw=black!60, fill=white!5, very thick, minimum size=7mm},
dotdotnode/.style={circle, draw=white!60, fill=white!5, very thick, inner sep=0pt, minimum size=0.5mm}
]
\node[roundnode]      (maintopic)                              {$o$};
\node[roundnode]        (v21)       [below left =1cm and 0.6cm of maintopic] {$v_{1}$};
\node[roundnode]        (v22)       [below right =1cm and 0.6cm of maintopic] {$v_{2}$};
\node[roundnode]       (v11)       [below left=1cm and 0.1cm of v21] {$i_{11}$};
\node[roundnode]       (v12)       [below right=1cm and 0.1cm of v21] {$i_{12}$};
\node[roundnode]       (v13)       [below left=1cm and 0.1cm of v22] {$i_{21}$};
\node[roundnode]       (v14)       [below right=1cm and 0.1cm of v22] {$i_{22}$};

\node[dotdotnode] (x1) [below=of v11] {};
\node[dotdotnode] (x2) [below=of v12] {};
\node[dotdotnode] (x3) [below=of v13] {};
\node[dotdotnode] (x4) [below=of v14] {};

\draw[->] (x1) -- node[midway, above right, sloped, pos=0.2] {$x_1$} (v11);
\draw[->] (x2) -- node[midway, above right, sloped, pos=0.2] {$x_{2}$} (v12);
\draw[->] (x3) -- node[midway, above right, sloped, pos=0.2] {$x_1$} (v13);
\draw[->] (x4) -- node[midway, above right, sloped, pos=0.2] {$x_{2}$} (v14);
\draw[->] (v11) -- node[midway, above right, sloped, pos=0] {$w^{[1]}_{11}$} (v21);
\draw[->] (v13) -- node[midway, above right, sloped, pos=0.05] {$w^{[1]}_{12}$} (v22);
\draw[->] (v12) -- node[midway, above right, sloped, pos=0.85] {$w^{[1]}_{21}$} (v21);
\draw[->] (v14) -- node[midway, above right, sloped, pos=1] {$w^{[1]}_{22}$} (v22);
\draw[->] (v21) -- node[midway, above right, sloped, pos=0.06] {$w^{[2]}_{1}$} (maintopic);
\draw[->] (v22) -- node[midway, above right, sloped, pos=0.7] {$w^{[2]}_{2}$} (maintopic);
\end{tikzpicture}
\caption{}\label{fig:unrolled}
\end{subfigure}    
\end{center}
\caption{Transformation of a feedforward network into a tree net. All nodes apart from the input nodes use ReLU activation. The two neural nets drawn here compute the same function. This idea has been used in \citet{khimpoh} to prove generalization bounds for adversarial risk.}\label{fig:tree-transform}.
\end{figure}

\begin{figure}[ht]
\begin{center}
  \begin{subfigure}[b]{0.34\linewidth}
\begin{tikzpicture}[
roundnode/.style={circle, draw=black!60, fill=white!5, very thick, minimum size=7mm},
dotdotnode/.style={circle, draw=white!60, fill=white!5, very thick, inner sep=0pt, minimum size=0.5mm}
]
\node[roundnode]      (maintopic)                              {$o$};
\node[roundnode]        (v21)       [below left =2cm and 0.6cm of maintopic] {$v_{1}$};
\node[roundnode]        (v22)       [below right =2cm and 0.6cm of maintopic] {$v_{2}$};
\node[roundnode]       (v11)       [below left=2cm and 0.1cm of v21] {$i_{11}$};
\node[roundnode]       (v12)       [below right=2cm and 0.1cm of v21] {$i_{12}$};
\node[roundnode]       (v13)       [below left=2cm and 0.1cm of v22] {$i_{21}$};
\node[roundnode]       (v14)       [below right=2cm and 0.1cm of v22] {$i_{22}$};

\node[dotdotnode] (x1) [below=of v11] {};
\node[dotdotnode] (x2) [below=of v12] {};
\node[dotdotnode] (x3) [below=of v13] {};
\node[dotdotnode] (x4) [below=of v14] {};

\draw[->] (x1) -- node[midway, above right, sloped, pos=0.2] {$x_1$} (v11);
\draw[->] (x2) -- node[midway, above right, sloped, pos=0.2] {$x_{2}$} (v12);
\draw[->] (x3) -- node[midway, above right, sloped, pos=0.2] {$x_1$} (v13);
\draw[->] (x4) -- node[midway, above right, sloped, pos=0.2] {$x_{2}$} (v14);
\draw[->] (v11) -- node[midway, above right, sloped, pos=0] {$w^{[1]}_{11}|w^{[2]}_{1}|$} (v21);
\draw[->] (v13) -- node[midway, above right, sloped, pos=0.05] {$w^{[1]}_{12}|w^{[2]}_{2}|$} (v22);
\draw[->] (v12) -- node[midway, above right, sloped, pos=0.85] {$w^{[1]}_{21}|w^{[2]}_{1}|$} (v21);
\draw[->] (v14) -- node[midway, above right, sloped, pos=1] {$w^{[1]}_{22}|w^{[2]}_{2}|$} (v22);
\draw[->] (v21) -- node[midway, above right, sloped, pos=0.06] {$\sgn(w^{[2]}_{1})$} (maintopic);
\draw[->] (v22) -- node[midway, above right, sloped, pos=0.7] {$\sgn(w^{[2]}_{2})$} (maintopic);
\end{tikzpicture}
\caption{}
\label{fig:pullback2}
\end{subfigure}    
\begin{subfigure}[b]{0.35\linewidth}
\begin{tikzpicture}[
roundnode/.style={circle, draw=black!60, fill=white!5, very thick, minimum size=7mm},
dotdotnode/.style={circle, draw=white!60, fill=white!5, very thick, inner sep=0pt, minimum size=0.5mm}
]
\node[roundnode]      (maintopic)                              {$o$};
\node[roundnode]        (v21)       [below left =2cm and 0.6cm of maintopic] {$v_{1}$};
\node[roundnode]        (v22)       [below right =2cm and 0.6cm of maintopic] {$v_{2}$};
\node[roundnode]       (v11)       [below left=2cm and 0.1cm of v21] {$i_{11}$};
\node[roundnode]       (v12)       [below right=2cm and 0.1cm of v21] {$i_{12}$};
\node[roundnode]       (v13)       [below left=2cm and 0.1cm of v22] {$i_{21}$};
\node[roundnode]       (v14)       [below right=2cm and 0.1cm of v22] {$i_{22}$};

\node[dotdotnode] (x1) [below=2cm of v11] {};
\node[dotdotnode] (x2) [below=2cm of v12] {};
\node[dotdotnode] (x3) [below=2cm of v13] {};
\node[dotdotnode] (x4) [below=2cm of v14] {};

\draw[->] (x1) -- node[midway, above right, sloped, pos=0] {$x_1|w^{[1]}_{11}w^{[2]}_{1}|$} (v11);
\draw[->] (x2) -- node[midway, above right, sloped, pos=0] {$x_{2}|w^{[1]}_{21}w^{[2]}_{1}|$} (v12);
\draw[->] (x3) -- node[midway, above right, sloped, pos=0] {$x_1|w^{[1]}_{12}w^{[2]}_{2}|$} (v13);
\draw[->] (x4) -- node[midway, above right, sloped, pos=0] {$x_{2}|w^{[1]}_{22}w^{[2]}_{2}|$} (v14);
\draw[->] (v11) -- node[midway, above right, sloped, pos=0] {$\sgn(w^{[1]}_{11})$} (v21);
\draw[->] (v13) -- node[midway, above right, sloped, pos=0.05] {$\sgn(w^{[1]}_{12})$} (v22);
\draw[->] (v12) -- node[midway, above right, sloped, pos=0.85] {$\sgn(w^{[1]}_{21})$} (v21);
\draw[->] (v14) -- node[midway, above right, sloped, pos=1] {$\sgn(w^{[1]}_{22})$} (v22);
\draw[->] (v21) -- node[midway, above right, sloped, pos=0.06] {$\sgn(w^{[2]}_{1})$} (maintopic);
\draw[->] (v22) -- node[midway, above right, sloped, pos=0.7] {$\sgn(w^{[2]}_{2})$} (maintopic);
\end{tikzpicture}
\caption{}
\label{fig:pullback3}
\end{subfigure}    
\end{center}
\caption{Pulling back of weights in a tree net. All nodes apart from the input nodes  use the ReLU activation. The original net was drawn in Figure \ref{fig:general_net} and \ref{fig:unrolled}. This is possible due to the positive-homogeneity of the activation function. From Figure \ref{fig:pullback3}, one can recover the final tree net in Theorem \ref{thm:decomp} with weights from $\left\{ -1,+1 \right\}$ and input from the path enumeration function $h$ of the neural net by subdividing the edge incident to the input neurons and assign weights corresponding to $\sgn(h(x))$.}\label{fig:pullback}
\end{figure}
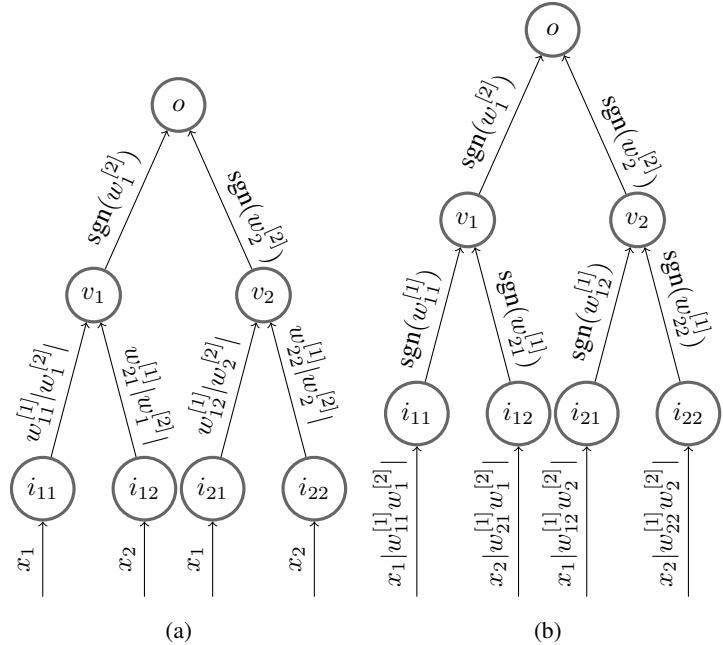

\section{Proof of Lemma \ref{thm:decomp}}\label{proof:decomp}
We first prove an absolute-valued version of Lemma \ref{thm:decomp} and show that the extension to Lemma \ref{thm:decomp} is straightforward. In other words, we first prove
\begin{lemma}[Absolute-valued decomposition]\label{lemma:absolute_decomp}
  For an arbitrary feed-forward neural network $\nu: \R^d \to \R$, there exists a tree network $\mu':\R^\rho \to \R$ such that $\nu = \mu' \circ h'$ where $h'$ is the absolute-valued path enumeration function defined as  $h'(x) = \left( x_p \prod_{e \in p} |w_e| \right)_{p \in \cP}$. Furthermore, the weights of $\mu'$ is in  $\{-1,1\}$ and only depends on the sign  of the weights of the original network $\nu$.
\end{lemma}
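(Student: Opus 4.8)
The plan is to follow the two-step strategy of the proof sketch: first unroll $\nu$ into a weight-sharing tree that computes the same function, and then push the \emph{magnitudes} of all weights down toward the leaves using positive homogeneity, leaving only signs behind on the edges.

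\textbf{Step 1 (unrolling).} I would define, by structural recursion over the DAG $G$, a rooted tree $T_v$ for every neuron $v$. For an input neuron $i_k$, $T_{i_k}$ is a single leaf carrying the coordinate $x_k$. For an internal neuron $v$ with $\IN_v = \{u_1,\dots,u_m\}$, the tree $T_v$ consists of a fresh copy of $v$ as root together with fresh, vertex-disjoint copies of $T_{u_1},\dots,T_{u_m}$ as its children, the edge into the copy of $u_j$ carrying the original weight $w_{u_jv}$ and the root copy of $v$ carrying the activation $\sigma_v$. Taking $\mu := T_o$ yields a directed tree whose underlying undirected graph is connected and acyclic, hence a tree network in the sense of Definition \ref{def:tn}; note that the same original weight may appear on several tree edges, which is exactly the intended weight sharing. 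A straightforward induction on the longest path to $v$ shows that the root of $T_v$ evaluates to the value $z_v$ computed at $v$ in $\nu$, so $T_o$ computes $\nu$; the same induction shows the leaves of $T_v$ are in bijection with the directed input-to-$v$ paths of $G$, so $T_o$ has exactly $\rho$ leaves, one per path $p \in \cP$.

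\textbf{Step 2 (pulling back magnitudes).} Here I would repeatedly apply the identity $a\,\sigma_v(z) = \sgn(a)\,\sigma_v(|a|\,z)$, valid for every $a \in \R$ since $\sigma_v$ is positively $1$-homogeneous (with $\sgn(0)=0$). Processing the edges of $T_o$ from the root downward—those incident to the output $o$ first—I replace the weight $a$ on an edge into a parent by its sign $\sgn(a)$ and multiply the factor $|a|$ onto every edge entering the corresponding child; by homogeneity this leaves the computed function unchanged. The invariant to establish by induction is: once all edges above a leaf-to-root path $p=(e_1,\dots,e_L)$ have been processed, every internal edge $e_j$ carries $\sgn(w_{e_j})$ while the bottom edge $e_1$ carries $\big(\prod_{j\ge 2}|w_{e_j}|\big)w_{e_1} = \sgn(w_{e_1})\prod_{j=1}^{L}|w_{e_j}|$, and the leaf still holds $x_k = x_p$.

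\textbf{Step 3 (reading off $\mu'$ and $h'$).} Finally, for each leaf I subdivide the bottom edge $e_1$: I declare the input fed to the tree along path $p$ to be $x_p\prod_{e\in p}|w_e| = h'(x)_p$, and assign the residual sign $\sgn(w_{e_1}) \in \{-1,+1\}$ to the remaining leaf edge. The resulting network $\mu'$ has all of its weights in $\{-1,+1\}$, each equal to the sign of a weight of $\nu$, and by Steps 1--2 it satisfies $\mu'(h'(x)) = \nu(x)$, which is exactly the claim. I expect the main obstacle to be Step 1: giving a clean recursive definition of the unrolling for an \emph{arbitrary} feedforward DAG—in which a vertex may have both fan-in and fan-out exceeding one, so copies proliferate—and proving both functional correctness and the leaf-to-path bijection by induction. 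The pull-back of Step 2 is then routine given homogeneity, modulo the bookkeeping of the accumulated magnitudes, and Step 3 is purely a relabeling.
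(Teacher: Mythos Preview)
Your proposal is correct and follows essentially the same two-step strategy as the paper: unroll the DAG into a tree network computing the same function, then pull back magnitudes toward the leaves via positive homogeneity, leaving only signs on the edges. The paper formalizes the unrolling via a Hasse diagram of the partial-path poset $\cH_G$ and proves correctness by induction on the longest path length, whereas you build $T_v$ by direct structural recursion on $v$; these yield the same tree, and your presentation is arguably more streamlined. Your Step~3 ``subdivision'' is likewise equivalent to the paper's final pull-back through the input neurons.
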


We first give some extra notation for this section. In general, we define:
\begin{itemize}
  \item $\wp(S)$ to be the finite power set of some finite set  $S$.
  \item  $|p| \in \mathbb{N}_{\geq 0}$ to be the cardinality of a set $p$. When  $p$ is a path then it is viewed as the number of edges.
\end{itemize}

To make it clear which graph we are referring to, define for an arbitrary feedforward neural network $\nu: \R^d \to \R$ with computation graph $G = (G[V], G[E], G[w])$:
\begin{itemize}
  \item $I_G = \{i^G_1,\ldots,i^G_d\}$ to be the set of input node of  $G$ and $O_G = \{o_G\}$ to be the set of the single output node of  $G$.
  \item $\cP_G$ to be the enumeration of all paths from any input node in $I_G$ to the output node  $o_G$.
  \item $\cH_G$ to be the enumeration of all paths from \emph{any} node $v \in G[V]$ to the output node $o_G$. Note that if $G$ has more than $2$ vertices then  $\cH_G \supset \cP_G$.
  \item Each $v \in G[V]$ to be equipped with a fix activation $\sigma_v$ that is positively-1-homogeneous. To be precise, $G$ is enforced to be a DAG, is connected and is simple (no self-loop, at most  $1$ edge between any pair of vertices). Each node $v$ of $G[V]$ is equipped with: an activation function  $G[\sigma](v)$ that is positively-1-homogeneous; a pre-activation function defined as:
     \begin{equation}
       (G[\PRE](v))(x) = \begin{cases}
         x_j &\text{ if } v \equiv i^G_j \text{ for some } j\in [d]\\
         \sum_{u \in \IN_v} \POST_v(x) w_{uv} &\text{ otherwise;}
       \end{cases}
    \end{equation}
    and a post-activation function defined as 
    \begin{equation}
      (G[\POST](v))(x) = \sigma_v((G[\PRE](v))(x))
    \end{equation}
    Note that $\POST_{o_G} = \nu$.
  \item $v^G_p, e^G_p$ to be the vertex and edge, respectively, furthest from $o_G$ on some path $p$ of $G$.
  \item $x^G_p$ to be the unique $x_j$ such that $i^G_j \in  p$.
\end{itemize}
\begin{definition}[Hasse diagram of inclusion]
  Let $S$ be a finite set and a set $\cS  \subset \wp(S)$ of elements in  $S$. A \emph{Hasse diagram} of $\cS$ is a directed unweighted graph $\textsc{Hasse}(\cS) = (\cS,E)$ where for any $p,q \in \cS$, $pq \in E$ iff $p \subseteq q$ and  $|q| - |p| = 1$. 
\end{definition}
\begin{definition}[Unrolling of feedforward neural networks]
The \emph{unrolled tree neural network} $\tau_G$ of  $G$ is a tree neural network  with computation graph $T_G$ where
\begin{itemize}
    \item Unweighted graph $(T_G[V], T_G[E]) = \textsc{Hasse}(\cH_G)$. In particular $T_G[V] = \cH_G$ and we identify  vertices of  $T_G$ with paths in  $G$.
    \item Weight function $T_G[w]: T_G[E] \ni pq \mapsto G[w](e_p)$.
    \item Activation  $T_G[\sigma](v) := G[\sigma](v_p)$.
  \end{itemize}  
\end{definition}
\begin{lemma}[Unrolled network computes the same function]\label{lemma:unrolled_same_function}
  Fix an arbitrary feedforward neural network  $\nu: \R^d \to \R$ with computation graph $G$. Let  $\tau_G$ be the unrolled tree network of  $G$ with computation graph  $T_G$. Then  $\nu = \tau_G$.
\end{lemma}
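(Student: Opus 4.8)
The plan is to prove a stronger, more convenient statement than $\nu = \tau_G$, namely that the unrolling preserves \emph{every} post-activation and not just the output. Concretely, for every tree node $p \in T_G[V] = \cH_G$ (a path ending at $o_G$, with furthest vertex $v^G_p$), I claim that
\[
 (T_G[\POST](p))(x) = (G[\POST](v^G_p))(x) \qquad \text{for all } x \in \R^d.
\]
Specializing this to the root $p_0 = \{o_G\}$ (the trivial path, for which $v^G_{p_0} = o_G$) then gives $\tau_G = T_G[\POST](p_0) = G[\POST](o_G) = \nu$, which is the lemma.

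Before inducting, I would isolate the one structural fact that drives everything: the children of a tree node $p$ with $v^G_p = v$ are in bijection with the in-edges of $v$ in $G$. Indeed, a Hasse neighbour $q \supsetneq p$ with $|q| = |p|+1$ is obtained by adjoining a single edge to the edge set $p$; since $q$ must again be a path to $o_G$ and $G$ is acyclic, the only admissible choice is to prepend an in-edge $uv \in E$ with $u \in \IN_v$ (any other adjoined edge would break the path or close a cycle). Each such $u$ gives a valid $q \in \cH_G$ with $v^G_q = u$, and by the definition of $T_G[w]$ the tree edge joining $p$ and $q$ carries the weight $w_{uv}$ of the newly added edge. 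The same observation shows each non-root node has a unique parent (delete its furthest edge $e^G_q$), so $T_G$ is genuinely a tree rooted at $p_0$ whose leaves are exactly the maximal paths, i.e.\ the full source-to-output paths.

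With this in hand the claim follows by induction on the vertices $v$ of $G$ in topological order (equivalently, from the leaves of $T_G$ up to its root). For the base case, a leaf $p$ is a maximal path and so starts at a source of $G$, i.e.\ an input node $v^G_p = i^G_j$; it receives the coordinate $x^G_p = x_j$ and carries the identity activation, so $T_G[\POST](p) = x_j = G[\POST](i^G_j)$. For the inductive step, take $p$ with $v^G_p = v$ not an input node. By the structural fact its children are the $q = p \cup \{uv\}$ for $u \in \IN_v$, with $v^G_q = u$ and tree-edge weight $w_{uv}$, and by the inductive hypothesis $T_G[\POST](q) = G[\POST](u)$. Hence
\[
 (T_G[\PRE](p))(x) = \sum_{u \in \IN_v} w_{uv}\,(T_G[\POST](q))(x) = \sum_{u \in \IN_v} w_{uv}\,(G[\POST](u))(x) = (G[\PRE](v))(x),
\]
and applying the common activation $\sigma_v = T_G[\sigma](p) = G[\sigma](v)$ yields $T_G[\POST](p) = \sigma_v(G[\PRE](v)) = G[\POST](v)$, independently of which path $p$ with start vertex $v$ was chosen. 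This closes the induction.

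I expect the only genuine subtlety to be the structural fact of the second paragraph: verifying that adjoining an edge to a path-to-$o_G$ inside a DAG can only mean prepending an in-edge at the start vertex, so that the Hasse children correspond exactly to $\IN_v$ and the weights line up. Once this child/in-edge correspondence (with matching weights) is pinned down, the pre-activation sum in the inductive step reproduces $G[\PRE](v)$ verbatim and the rest is a routine leaves-to-root induction; acyclicity of $G$ is precisely what guarantees both that no adjoined edge closes a cycle and that a topological order exists along which to run the argument.
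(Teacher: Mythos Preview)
Your argument is correct and takes a genuinely different route from the paper's. The paper proceeds by induction on the longest path length $\alpha_G$: at each step it creates $m$ copies of $G$ (one per in-neighbour of $o_G$), deletes the output, applies the inductive hypothesis to each copy, glues the resulting trees under a fresh root, and then explicitly exhibits an isomorphism $\pi$ from this glued object to $T_G$. You instead fix $G$ once and for all and run a single topological-order induction on its vertices, proving the stronger pointwise statement $T_G[\POST](p) = G[\POST](v^G_p)$ for every tree node $p$. The structural content of the paper's isomorphism is exactly your ``children of $p$ $\leftrightarrow$ in-edges of $v^G_p$, with matching weights'' observation, which you isolate up front rather than encode in a map $\pi$. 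Your approach is shorter and avoids both the copy-and-glue construction and the bookkeeping of verifying that $\pi$ preserves edges, weights, and activations; the paper's approach has the advantage of making the recursive self-similarity of the unrolling explicit, which connects more directly to the tree-transform viewpoint of \citet{khimpoh} that motivates the construction.
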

\begin{proof}
  We proceed with induction on $\alpha_G := \max_{p \in \cP} |p|$ the longest path between any input node and the output node of $G$. In the base case, set  $\alpha_G = 0$. Then $V_G = \{o_G\}$ is a singleton and the neural network  $\nu$ computes the activation of the input and return it. $\cH_G = V_T = \{p_{0}\}$ then is a singleton containing just the trivial path that has just the output vertex of  $G$ and no edges. The activation function attached to $p_0$ in  $T$, by construction, is the activation of  $o_G$. Thus,  $\tau$ also simply returns the activation of the input.

  Assume that $\tau_G = \nu$ for any  $\nu$ with graph $G$ such that $\alpha_G \leq t - 1$ for some $t \geq 1$; and for the $\tau$ constructed as described. We will show that the proposition is also true when  $\nu$ has graph  $G$ with  $\alpha_G = t$.  Fix such a $\nu$ and  $G$ that  $\alpha_G = t$. We prove this induction step by: 
  \begin{enumerate}
    \item First constructing $G'$ from  $G$ such that  $\nu' = \nu$ where  $\nu'$ is the neural network computed by $G'$.
    \item Then showing that  $T_G = G'$ by constructing an isomorphism  $\pi: G'[V] \to T_G[V]$ that preserves $E, w$ and  $\sigma$.
  \end{enumerate}

  \paragraph{The construction of $G'$} An illustration of the following steps can be found in Figure \ref{fig:gprime_constr}. Recall that  $\IN_{o_G} = \{v_1,\ldots,v_m\}$ is the set of in-vertices of  $o_G$.
  \begin{enumerate}
    \item Create $m$ distinct, identical copies of $G$: $G_1,\ldots,G_m$. 
    \item For each $j \in [m]$, remove from $G_j$ all vertices $u$ (and their adjacent edges) such that there are no directed path from $u$ to  $v_j$. 
    \item We now note that $\alpha_{G_j} \leq t - 1$ (to be argued) and invoke inductive hypothesis over $G_j$ to get an unrolled tree network $\tau_j$ with graph $T_j$ such that $\tau_j = \nu_j$ where  $\nu_j$ is the neural network computed by  $G_j$.  
    \item Finally, construct $G'$ by creating a new output vertex $o_{G'}$ and connect it to the output vertices $o_{T_j}$ for all $j \in [m]$. As a sanity check, since each  $T_j$ is a tree network, so is $G'$. More precisely,
      \begin{enumerate}
        \item $G'[V] = \{o_{G'}\} \cup \bigcup_{j = 1}^m T_j[V]$;
        \item $G'[E] = \{o_{G_j}o_{G'} \mid j \in [m]\} \cup \bigcup_{j = 1}^m T_{j}[E]$ 
        \item $G'[w](e) =
          G[w](v_jo_G)$ if $e \equiv o_{G_j} o_{G'}$ for some $j \in [m]$ and $T_j[w](e)$, where $e \in T_j[E]$, otherwise.
        \item  $G'[\sigma](v) = G[\sigma](o_G)$ if  $v \equiv o_G$ and  $T_j[\sigma](v)$, where $v \in T_j[V]$, otherwise. 
        \end{enumerate} 
    \end{enumerate}

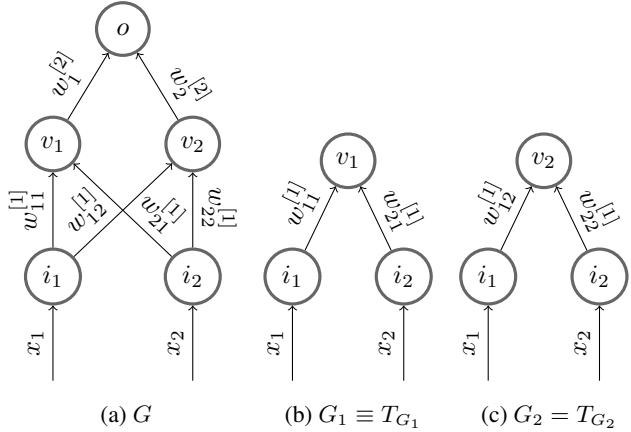
\begin{figure}[h]
\begin{center}
\begin{subfigure}[b]{0.24\linewidth}
\begin{tikzpicture}[
roundnode/.style={circle, draw=black!60, fill=white!5, very thick, minimum size=7mm},
dotdotnode/.style={circle, draw=white!60, fill=white!5, very thick, inner sep=0pt, minimum size=0.5mm}
]
\node[roundnode]      (maintopic)                              {$o$};
\node[roundnode]        (v21)       [below left =1cm and 0.4cm of maintopic] {$v_{1}$};
\node[roundnode]        (v22)       [below right =1cm and 0.4cm of maintopic] {$v_{2}$};
\node[roundnode]       (v11)       [below=of v21] {$i_{1}$};
\node[roundnode]        (v12)      [below=of v22] {$i_{2}$};
\node[dotdotnode] (x1) [below=of v11] {};
\node[dotdotnode] (x2) [below=of v12] {};

\draw[->] (x1) -- node[midway, above right, sloped, pos=0.2] {$x_1$} (v11);
\draw[->] (x2) -- node[midway, above right, sloped, pos=0.2] {$x_{2}$} (v12);
\draw[->] (v11) -- node[midway, above right, sloped, pos=0.1] {$w^{[1]}_{11}$} (v21);
\draw[->] (v11) -- node[midway, above right, sloped, pos=0.05] {$w^{[1]}_{12}$} (v22);
\draw[->] (v12) -- node[midway, above right, sloped, pos=0.52] {$w^{[1]}_{21}$} (v21);
\draw[->] (v12) -- node[midway, above left, sloped, pos=0.1, rotate=180] {$w^{[1]}_{22}$} (v22);
\draw[->] (v21) -- node[midway, above right, sloped, pos=0.06] {$w^{[2]}_{1}$} (maintopic);
\draw[->] (v22) -- node[midway, above right, sloped, pos=0.7] {$w^{[2]}_{2}$} (maintopic);
\end{tikzpicture}
\caption{$G$}
\label{fig:gprime_constr_original}
\end{subfigure}
\begin{subfigure}[b]{0.18\linewidth}
\begin{tikzpicture}[
roundnode/.style={circle, draw=black!60, fill=white!5, very thick, minimum size=7mm},
dotdotnode/.style={circle, draw=white!60, fill=white!5, very thick, inner sep=0pt, minimum size=0.5mm}
]
\node[roundnode]        (v21)        {$v_{1}$};
\node[roundnode]       (v11)       [below left= 1cm and 0.2cm of v21] {$i_{1}$};
\node[roundnode]        (v12)      [below right= 1cm and 0.2cm of v21] {$i_{2}$};
\node[dotdotnode] (x1) [below=of v11] {};
\node[dotdotnode] (x2) [below=of v12] {};

\draw[->] (x1) -- node[midway, above right, sloped, pos=0.2] {$x_1$} (v11);
\draw[->] (x2) -- node[midway, above right, sloped, pos=0.2] {$x_{2}$} (v12);
\draw[->] (v11) -- node[midway, above right, sloped, pos=0.1] {$w^{[1]}_{11}$} (v21);
\draw[->] (v12) -- node[midway, above right, sloped, pos=0.9] {$w^{[1]}_{21}$} (v21);
\end{tikzpicture}
\caption{$G_1 \equiv T_{G_1}$}
\end{subfigure}   
\begin{subfigure}[b]{0.18\linewidth}
\begin{tikzpicture}[
roundnode/.style={circle, draw=black!60, fill=white!5, very thick, minimum size=7mm},
dotdotnode/.style={circle, draw=white!60, fill=white!5, very thick, inner sep=0pt, minimum size=0.5mm}
]
\node[roundnode]        (v21)        {$v_{2}$};
\node[roundnode]       (v11)       [below left= 1cm and 0.2cm of v21] {$i_{1}$};
\node[roundnode]        (v12)      [below right= 1cm and 0.2cm of v21] {$i_{2}$};
\node[dotdotnode] (x1) [below=of v11] {};
\node[dotdotnode] (x2) [below=of v12] {};

\draw[->] (x1) -- node[midway, above right, sloped, pos=0.2] {$x_1$} (v11);
\draw[->] (x2) -- node[midway, above right, sloped, pos=0.2] {$x_{2}$} (v12);
\draw[->] (v11) -- node[midway, above right, sloped, pos=0.1] {$w^{[1]}_{12}$} (v21);
\draw[->] (v12) -- node[midway, above right, sloped, pos=0.9] {$w^{[1]}_{22}$} (v21);
\end{tikzpicture}
\caption{$G_2 = T_{G_2}$}
\end{subfigure}
\end{center}
\caption{Construction of $G'$. $G_1$ and  $G_2$ are the modified copies of $G$ in step $2$. In step $3$, the transformation $T_{G_i}$ happens to coincide with $G_i$ for $i = 1,2$ in this case. $G'$ is created in step  $4$ by adding an extra vertex  $o_{G'}$ and connecting it to $v_1$ and  $v_2$ with the appropriate weights and activation and can be seen in Figure \ref{fig:unrolled}.}\label{fig:gprime_constr}.
\end{figure}

  \paragraph{$G'$ is well-defined} We verify each steps in the above construction:
  \begin{enumerate}
    \item This step is well-defined.
    \item For any  $G_j$, as long as  $\alpha_G \geq 2$ (by definition), we always remove  $o_G$ from each  $G_j$ in each step; since  $v_j$ is an in-vertex of  $o_G$ and  $G$ is a DAG. Otherwise, this step is well-defined.
    \item Fix a  $j \in [m]$. By construction (and since we always remove  $o_G$ from  $G_j$ in the previous step),  $O_{G_j} = \{v_j\}$. If there is a path $p^*$ with length at least $t$ in  $G_j$, then since  $G_j[E] \subseteq G[E]$ and  $o_{G} \in G[V] \setminus G_j[V]$,  the path $p^* \cup \{o_G\}$ created by appending $o_G$ to  $p^*$ is a valid path with length  $t + 1$. This violates the assumption that  $\alpha_G = t$ and we conclude, by contradiction, that  $\alpha_{G_j} \leq t - 1$. This justifies the invocation of inductive hypothesis for $\nu_j$ to get a tree neural net  $\tau_j$. 
    \item The final step is well-defined.
  \end{enumerate}

  \paragraph{$\nu'$ computes the same function as $\nu$} Recall that  $\nu'$ is the neural network with graph  $G'$. We have for any input $x \in \R^d$
   \begin{align}
     \nu'(x) &= (G'[\POST](o_{G'}))(x) \\
             &= G'[\sigma](o_{G'})\left( \sum_{j = 1}^m G'[w](v_j o_{G'}) \cdot (G'[\POST](v_j))(x) \right) \\
             &= G[\sigma](o_{G})\left( \sum_{j = 1}^m G[w](v_j o_{G}) \cdot (T_j[\POST](v_j))(x) \right)\\
             &= G[\sigma](o_{G})\left( \sum_{j = 1}^m G[w](v_j o_{G}) \cdot (G[\POST](v_j))(x) \right) = \nu(x)
   \end{align} where we invoked the inductive hypothesis in the last line to get 
\begin{equation}
  (T_j[\POST](v_j))(x) = \tau_j(x) = \nu_j(x) = (G[\POST](v_j))(x),
\end{equation} and the rest are definitions.

  \paragraph{$G'$ is isomorphic to $T_G$} Although this should be straightforward from the construction, we give a formal proof. Consider the isomorphism $\pi: G'[V] \to T_G[V]$ given as
  \begin{equation}
    \pi(v) = \begin{cases}
      o_{T_G} = \{o_G\} \in \cH_G &\text{ if } v \equiv o_{G'}\\
      p \cup \{o_G\} &\text{ if } T_j[V] \ni v \equiv p \in \cH_{G_j} \text{ for some } j \in [m],
    \end{cases}
  \end{equation} where paths are viewed as set of vertices. The second case is well-defined since $p \cup \{o_G\} \in \cH_G$ for any $p \in \cH_{G_j}$ for any $j \in [m]$ by construction of $G_j$. 

  We can now verify that $\pi$ is an isomorphism between the two structures. Fix $pq \in G'[E]$. Consider two separate cases: $pq \in T_j[E]$ for some $j$ and  $pq \not \in T_j[E]$ for any  $j$. In the first case, by definition of $T_j$ as a Hasse diagram,  $p = \{v_p\} \cup q$ are paths in  $\cH_{G_j}$. Thus, $\pi(p) \pi(q) = (p \cup \{o_G\})(q\cup \{o_G\})$ satisfying $\pi(p) = \{v_p\} \cup \pi(q)$. Thus, by definition of Hasse diagram, $\pi(p) \pi(q) \in T_G[E]$. Furthermore, $G'[w](pq) = G[w](e_p) = T_G[w](\pi(p)\pi(q))$ In the second case, we have $pq = v_jo_G$ for some  $j \in [m]$. Thus, $\pi(p)\pi(q) = (\{v_j,o_G\})(\{o_G\}) \in T_G[E]$ also by definition of Hasse diagram. At the same time, $G'[w](pq) = G[w](v_jo_G) = T_G[w](\pi(p)\pi(q))$ by definition.
  
      Fix $v \in  G'[V]$. If $ v \equiv o_{G'}$ then  $G'[\sigma](v) = G[\sigma](o_G)$. At the same time,  $T_G[\sigma](\pi(v)) = G[\sigma](v_{\{o_G\}} = G[\sigma](o_G) = G'[\sigma](v)$. If $v \not \equiv o_{G'}$ then there is a $j \in [m]$ and a $p \in \cH_{G_j}$ such that $v = p \in T_j[V]$. Then by definition,
      \begin{align}
        G'[\sigma](v) = T_{j}[\sigma](v) = G[\sigma](v_{p}) = G[\sigma](v_p \cup \{o_{G'}\}) = T_G[\sigma](\pi(v)).
      \end{align}

      This completes the inductive proof that shows $\tau_G = \nu' = \nu$ for  $G$ with  $\alpha_G = t$. By mathematical induction, the claim holds for all  neural network $\nu$.
\end{proof}

\begin{lemma}[Pull-back of numerical weights]\label{lemma:pull_back}
  Fix an arbitrary feedforward neural network $\nu: \R^d \to \R$ with computation graph $G$. Let  $v \in G[V] \setminus (I_G \cup O_G)$ be an inner vertex of $G$ with $k$ in-edges  $e_1,\ldots,e_k \in G[E]$ and a single out-edge $f \in G[E]$. Then the network $\nu'$ with computation graph  $G' = (G[V],G[E],G'[w])$ defined as 
   \begin{equation}
     G'[w]: e \mapsto \begin{cases}
       G[w](e) |G[w](f)| &\text{ if } e \equiv e_j \text{ for some } j \in [k]\\
      \sgn(G[w](f)) &\text{ if } e \equiv f\\
      G[w](e) &\text{ otherwise.}
     \end{cases}
  \end{equation}
  computes the same function as $\nu$. In other words, we can pull  the numerical values of $G[w](f)$ through  $v$, into its in-edges; leaving behind only its sign.

  When fixing input $x$ to  $G$, one can extend this operation to  $i_j \in I_G$ for $j \in [d]$. By setting $G'[w](f) = \sgn(G[w](f))$ and update  $x_j$ to $x_j |G[w](f)|$.
\end{lemma}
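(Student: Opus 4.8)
The plan is to reduce the statement to the single positive-homogeneity identity $a\,\sigma(z) = \sgn(a)\,\sigma(|a|\,z)$, valid for every positively $1$-homogeneous activation $\sigma$ and every scalar $a$: since $|a|\geq 0$ we have $\sigma(|a|z) = |a|\sigma(z)$, whence $\sgn(a)\sigma(|a|z) = \sgn(a)|a|\sigma(z) = a\sigma(z)$. Writing $a := G[w](f)$ for the value on the unique out-edge of $v$, this identity is exactly what lets us move $|a|$ from $f$ across the activation $\sigma_v$ onto the in-edges of $v$ while recording only $\sgn(a)$ on $f$ itself.

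First I would establish, by induction along a topological order of $G$, that the post-activation functions of $G$ and $G'$ agree at every vertex except $v$, where $(G'[\POST](v))(x) = |a|\,(G[\POST](v))(x)$. The base case (input vertices) is immediate, since no in-edge weight of an input changed and the input values coincide. For the inductive step I would distinguish three cases for a vertex $z$. (i) If $z = v$, the only changed edges feeding $z$ are its in-edges $e_j = u_j v$, so $(G'[\PRE](v))(x) = \sum_{j}(G'[\POST](u_j))(x)\,w_{u_j v}|a| = |a|\,(G[\PRE](v))(x)$, using the inductive hypothesis on the unchanged sources $u_j$ (which precede $v$ and differ from $v$ since $G$ is a DAG), and therefore $(G'[\POST](v))(x) = \sigma_v(|a|\,(G[\PRE](v))(x)) = |a|\,(G[\POST](v))(x)$. (ii) If $z = w$ is the head of $f$, the contribution of $f$ to its pre-activation is $\sgn(a)\,(G'[\POST](v))(x) = \sgn(a)|a|\,(G[\POST](v))(x) = a\,(G[\POST](v))(x)$, matching the original; every other in-edge of $w$ is unchanged and has an unchanged source, so $(G'[\PRE](w))(x) = (G[\PRE](w))(x)$ and the post-activations agree. (iii) For any other $z$, no in-edge weight changed and, crucially because $v$ has a \emph{single} out-edge, $v$ is not among the sources feeding $z$, so all sources have unchanged post-activations by the inductive hypothesis.

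The crucial structural fact, which I would flag explicitly, is the single-out-edge hypothesis: it guarantees that $v$ influences exactly one downstream vertex, so the scaling factor $|a|$ introduced at $v$ must be cancelled in precisely one place (the head of $f$), making the pull-back well-defined. Applying case (iii) to the output vertex $o_G$, which differs from the inner vertex $v$, yields $\nu' = (G'[\POST](o_G)) = (G[\POST](o_G)) = \nu$, proving the main claim.

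For the input-vertex extension I would run the same identity with $v \equiv i_j$: here the post-activation of $v$ is simply the fixed input value $x_j$, so pulling $|a|$ back across $v$ amounts to replacing $x_j$ by $x_j|a|$ and leaving $\sgn(a)$ on $f$, and the identical three-case bookkeeping shows the function is unchanged for that fixed input $x$. I expect no genuine obstacle, since the entire content is the homogeneity identity; the only care required is the topological-order bookkeeping and ensuring the single-out-edge assumption does its work.
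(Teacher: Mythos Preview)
Your proposal is correct and takes essentially the same approach as the paper: both reduce the claim to the positive-homogeneity identity $a\,\sigma(z) = \sgn(a)\,\sigma(|a|\,z)$ and use the single-out-edge hypothesis to localize the change. The paper's proof is simply the compressed version of yours, arguing directly that it suffices to verify $G'[\PRE](b) = G[\PRE](b)$ at the unique out-neighbor $b$ of $v$ (since only edges incident to $v$ changed), whereas you spell out the full topological-order induction with the three cases; the underlying computation is identical.
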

\begin{proof}
  Let the single out-vertex of $v$ be  $b$. If suffices to show that  $G[\PRE](b) = G'[\PRE](b)$. Fix an input  $x$ to $\nu$. Let the $k$ in-edges of  $v$ be  $a_1, \ldots,a_k$. Since we only change edges incident to $v$,  $G[\POST]_{a_j} = G'[\POST]_{a_j}$ for all $j \in [k]$. We have:
  \begin{align}
    G'[\PRE](b) 
                &= \sgn(G[w](vb)) \cdot (G[\sigma](v))(G'[\PRE](v)) \\
                &= \sgn(G[w](vb)) \cdot (G[\sigma](v))\left(\sum_{j = 1}^k |G[w](vb)| G[w](a_{j}v) \cdot G[\POST](a_j)\right)\\
                &= \sgn(G[w](vb)) |G[w](vb)|  \cdot (G[\sigma](v))\left(\sum_{j = 1}^k G[w](a_{j}v) \cdot G[\POST](a_j)\right) \label{eqn:pull_back_positive_homogeneity} \\
                &= G[w](vb)  \cdot (G[\sigma](v))\left(\sum_{j = 1}^k G[w](a_{j}v) \cdot G[\POST](a_j)\right) = G[\PRE](b)
              ,\end{align} where \eqref{eqn:pull_back_positive_homogeneity} is due to positive homogeneity of $\sigma_v$ and the rest are just definitions.
\end{proof}

We can now give the proof of Lemma \ref{lemma:absolute_decomp}.
\begin{proof}[Proof of Lemma \ref{lemma:absolute_decomp}]
  Given an arbitrary feedforward neural network $\nu$ with computation graph $G = (G[V], G[E], G[w])$, we use Lemma \ref{lemma:unrolled_same_function} and get the unrolled tree network $T_G = (T_G[V], T_G[E], T_G[w])$ such that $\nu = \tau_G$. 

  Let  $\pi = \pi_{1},\pi_{2},\ldots,\pi_{\xi}$ be an ordering of $T_G[V] \setminus O_{T_G}$ (so $\xi = |\cH| - 1$) by a breadth first search on $T_G$ starting from  $o_{T_G}$. In other words, if  $d_{\text{topo}}(u,o_{T_G}) > d_{\text{topo}}(v,o_{T_G})$ then  $u$ appears after  $v$ in  $ \pi$, where $d_{\text{topo}}(a,b)$ is the number of edges on the unique path from $a$ to $b$ for some  $a,b \in T_G[V]$. Iteratively apply Lemma \ref{lemma:pull_back} to vertex $\pi_1,\ldots,\pi_{\xi}$ in $T_G$ while maintaining the same function. After  $\xi$ such applications, we arrive at a network  $\mu'_G$ with graph  $M_G$ such that  $\mu'_G = \tau_G = \nu$. Recall that the pull-back operation of Lemma \ref{lemma:pull_back} only changes the tree weights. Furthermore, the $\pi$ ordering is chosen so that subsequent weight pull-back does not affect edges closer to $o_{T_G}$. Therefore, at iteration $j$,
  \begin{enumerate}
    \item  $M_G[w](\pi_k q) = \sgn(G[w](e_{\pi_k}))$ for all $k \leq j$,  for some $q \in M_G[V]$ such that $(\pi_k)q \in M_G[E]$. 
    \item  if $\pi_j \not \in I_{M_G}$ then $M_G[w](r(\pi_j)) = G[w](e_r) \prod_{f \in \pi_j} |G[w](f)|$, for some $r \in M_G[V]$ such that $r(\pi_j) \in M_G[E]$; otherwise, $\pi_j$ is an input vertex corresponding to input  $x_{\pi_j}$, then $x_{\pi_j}$ is modified to $x_{\pi_j} \prod_{f \in \pi_j} |G[w](f)| = h'_G(x_{\pi_j})$ where $h'_G$ is the absolute-valued path enumeration function. 
  \end{enumerate}

This completes the proof.
\end{proof}

Now we present the extension to Lemma \ref{thm:decomp}:

\begin{proof}[Proof of Lemma \ref{thm:decomp}]
  Invoke Lemma \ref{lemma:absolute_decomp} to get a tree network $\mu'$ such that  $\nu = \mu' \circ h'$. Then one can subdivide each input edges (edges that are incident to some input neuron $i_j$) into two edges connected by a neuron with linear activation. One of the resulting egde takes the weight of the old input edge; and the other is used to remove the absolute value in the definition of the (basic) path enumeration function. 

  More formally, for all $p \in \cP$, recall that $i_p$ is a particular input neuron of $\mu'$ in the decomposition lemma (Lemma \ref{thm:decomp}). Since $\mu'$ is a tree neural network, we there exists a distinct node $u_p$ in  $\mu'$ that is adjacent to  $i_p$. Remove the edge  $i_pu_p$, add a neuron $u'_p$, connect $i_p u'_p$ and $u'_p u_p$, where the weight of the former neuron is set to $\sgn\left(\prod_{e \in p} w_e\right)$ and the latter to $w[\mu'](i_p u_p)$. It is straightforward to see that with $\mu$ constructed from above,  $\nu = \mu \circ h$ where  $h$ is the  path enumeration function.
\end{proof}

With slight modifications to the proof technique, one can show all the results for Theorem \ref{thm:matrix-wise_invariance}, Theorem \ref{thm:low_rank_non_homogeneous} and Corollary \ref{thm:low_rank} to the same matrix representation as presented in the paper but with the absolute signs around them.

  \section{Proof of Training Invariances}\label{appendix:training_invariance}

  \begin{claim}[Directional convergence to non-vanishing point implies stable sign]\label{proof:dir_conv_implies_stable_sign}
    Assumption \ref{assumption:directional_convergence} implies Assumption \ref{assumption:stable_sign}.
  \end{claim}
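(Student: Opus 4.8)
The plan is to leverage the finiteness of the edge set $E$ together with the entrywise sign-stabilization that directional convergence to a non-vanishing limit forces on each individual coordinate. First I would fix an arbitrary edge $e \in E$ and examine the scalar trajectory $w_e(t)/\|w(t)\|_2$, which by Assumption \ref{assumption:directional_convergence} converges to $\overline{w}_e \neq 0$ as $t \to \infty$. The whole argument is just a uniformization, over the finitely many edges, of the elementary fact that a real-valued trajectory converging to a nonzero limit has eventually constant sign.

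The key observation is exactly that sign-stabilization. If $\overline{w}_e > 0$, then for all sufficiently large $t$, say $t \geq T_e$, we have $w_e(t)/\|w(t)\|_2 > \overline{w}_e / 2 > 0$, and since $\|w(t)\|_2 > 0$ this yields $\sgn(w_e(t)) = +1 = \sgn(\overline{w}_e)$; the case $\overline{w}_e < 0$ is symmetric. The crucial point here is that dividing by the positive scalar $\|w(t)\|_2$ does not alter the sign, so coordinatewise sign-stability of the \emph{normalized} direction transfers directly to sign-stability of $w_e(t)$ itself.

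Next I would set $t_0 := \max_{e \in E} T_e$, taking the maximum over the finitely many edges; this is precisely where finiteness of $E$ enters, since an infinite edge set could push the common stabilization time to infinity. Then for every $t \geq t_0$ and every $e \in E$ we have $\sgn(w_e(t)) = \sgn(\overline{w}_e) = \sgn(w_e(t_0))$, which is to say $\sgn(w(t)) = \sgn(w(t_0))$. Choosing $t_N = \infty$ then verifies Assumption \ref{assumption:stable_sign} on the interval $[t_0, t_N)$, completing the implication.

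I do not anticipate a serious obstacle: the statement is essentially definitional once one recognizes that $\overline{w}_e \neq 0$ provides the margin needed for eventual sign constancy. The only point deserving minor care is ensuring $\|w(t)\|_2 > 0$ on the relevant tail so that the normalization is both well-defined and sign-preserving, but this is automatic as soon as the direction converges to a limit vector with nonzero entries (in particular $\overline{w} \neq 0$, so $\|w(t)\|_2$ cannot vanish along the convergent tail).
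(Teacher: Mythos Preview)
Your proof is correct and takes essentially the same approach as the paper: both argue that the normalized trajectory eventually lies in the open orthant of the limit direction $\overline{w}$, so signs stabilize. The only cosmetic difference is that the paper phrases this via a single ball of radius $\min_e |\overline{w}_e|/2$ around $\overline{w}$ contained in that orthant, whereas you argue coordinate-by-coordinate and then take $t_0 = \max_{e \in E} T_e$; these are equivalent.
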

  \begin{proof}
    Let $v$ be the direction that  $\frac{w(t)}{\|w(t)\|_2}$ converges to. Let $\cO$ be the orthant that $v$ lies in. Since $v$ does not have a  $0$ entry, the ball $\cB$ with radius  $\min_i |v_i|/2$ and center $v$ is a subset of the interior of $\cO$. Since $\frac{w(t)}{\|w(t)\|_2}$ converges to  $v$, there exists a time  $T$ such that for all  $s > T, \frac{w(s)}{\|w(s)\|_2} \in \cB$. Thus, eventually, $w(s) \in \cB$ where its signs stay constant. 
  \end{proof}

  \begin{claim}[Directional convergence does not imply stable activation]\label{proof:directional_lalignment_not_stable_activation}
    There exists a function $w: \R \rightarrow \R^d$ such that $w(t)$ converges in direction to some vector  $v$ but for all  $u \in w(\R)$, $w^{-1}(u)$ has infinitely many elements. This means that for some ReLU network empirical risk function $\cR(w)$ whose set of nondifferentiable points $D$ has nonempty intersection with  $w(\R)$, the trajectory $\left\{ w(t) \right\}_{t \geq 0}$ can cross a boundary from one activation pattern to another an infinite number of times. 
  \end{claim}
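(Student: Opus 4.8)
The plan is to give an explicit pathological trajectory together with a small ReLU network on which the activation pattern changes infinitely often, even though the normalized parameter converges. First I would dispatch the first sentence with a purely radial curve. Fix any nonzero $v\in\R^d$ and set $w(t)=(2+\sin t)\,v$. Because $2+\sin t\in[1,3]$ is always positive, $w(t)/\|w(t)\|_2=v/\|v\|_2$ for all $t$, so the direction is constant and hence convergent; and for each image point $u=sv$ with $s\in[1,3]$ the equation $2+\sin t=s$ has infinitely many roots, so $w^{-1}(u)$ is infinite. This already witnesses a function that converges in direction while every point of its image is attained infinitely often.

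The subtlety -- and what I expect to be the crux -- is that the radial curve alone cannot exhibit a genuine change of activation pattern, because for a bias-free (positively homogeneous) network the sign of any preactivation $z(w)$ is a function of the \emph{direction} only: $z(\lambda w)=\lambda^{k}z(w)$ for $\lambda>0$ gives $\sgn(z(w(t)))=\sgn\!\big(z(w(t)/\|w(t)\|_2)\big)$. Consequently, if the normalized parameter converges to some $v^{\ast}$ with $z(v^{\ast})\neq 0$, the activation pattern is eventually frozen; the \emph{only} route to infinitely many genuine toggles is to force the limiting direction $v^{\ast}$ to lie exactly on an activation boundary $\{z=0\}$ and to approach it from alternating sides. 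Managing this two-sided approach while preserving directional convergence is the main obstacle, and it dictates the shape of the example.

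To realize infinitely many crossings I would therefore use a curve whose direction converges to a boundary point. Take input dimension $2$, a single first-layer ReLU neuron with weight $w\in\R^2$, and a training example $x=(0,1)$, so the neuron's preactivation is $z(w)=\langle w,x\rangle=w_2$ with boundary $\{w_2=0\}$. Choosing $w(t)=(t,\sin t)$ gives $w(t)/\|w(t)\|_2\to(1,0)$, a point of the boundary, so the direction converges; meanwhile $z(w(t))=\sin t$ reverses sign at each $t=k\pi$, so the neuron alternates between active and inactive for $x$ infinitely often. Embedding this neuron in any scalar ReLU network, with the downstream weights chosen so that the contribution of $x$ is not annihilated, produces an empirical risk $\cR$ whose nondifferentiable set meets the image of the trajectory and is crossed infinitely often, establishing the claim. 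I would close by remarking that the two curves isolate the two facets asserted -- the radial one the infinite-preimage recurrence, the transversal one the unstable activation -- both compatible with directional convergence precisely because, by the homogeneity identity above, instability is possible exactly when the limiting direction sits on a boundary.
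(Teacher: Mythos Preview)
Your proposal is correct and in fact more careful than the paper's own argument. The paper's proof is a single line: fix $v\in\R^d$ and take $w(t)=v\,|t\sin t|$. Like your first curve $(2+\sin t)\,v$, this is purely radial, so the normalized direction is constant and every image point is attained infinitely often; the paper stops there and leaves the second sentence of the claim unargued.

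You go further. Your homogeneity observation---that along any purely radial curve the activation pattern of a bias-free ReLU network is frozen away from the origin---pinpoints why a radial example alone cannot exhibit genuine pattern \emph{crossings}: the paper's $v\,|t\sin t|$ meets the nondifferentiable set only at $0$ and lands in the same activation region on either side. Your second curve $(t,\sin t)$, whose limiting direction $(1,0)$ sits on the boundary $\{w_2=0\}$ of a first-layer neuron fed the input $x=(0,1)$, actually toggles the activation at each $t=k\pi$ while still converging in direction. The price is that this curve is injective (its first coordinate is $t$), so it does not have infinite preimages; you are right to separate the two facets into two examples. That is perfectly adequate for the claim as written: the existence statement in the first sentence is dispatched by the radial curve, and the activation-crossing content of the second sentence is witnessed---more convincingly than in the paper---by the transversal one.
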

  \begin{proof}
       Fix a vector $v \in \R^d$. Consider the function $t \mapsto  v| t \sin(t)|$.
  \end{proof}

  \begin{lemma}[Clarke partial subderivatives of inputs with the same activation pattern is the same]\label{lemma:stable_activation}
    Assume stable sign regime (Assumption \ref{assumption:stable_sign}). Let $p_1, p_2 \in \cP$ be paths of the same length  $L$.  Let $p_1 = \{v_1, \ldots, v_L\} \in V^L$ and $p_2 = \{u_1, \ldots, u_L\} \in V^L$. Assume that for each $i \in [L]$, we have $v_i$ and  $u_i$ having the same activation pattern for each input training example, where the activation of a neuron is  $0$ if it is ReLU activated and has negative preactivation; and is $1$ if it is linearly activated or has nonnegative preactivation. Then $\partial_{p_1} \mu(h) = \partial_{p_2} \mu(h)$ where $\partial$ is the Clarke subdifferential.
  \end{lemma}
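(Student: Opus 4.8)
The plan is to compute the Clarke partial subderivative $\partial_p\mu(h)$ explicitly by exploiting the tree structure of $\mu$ from Lemma \ref{thm:decomp}, show that all $\pm 1$ weights cancel, and thereby reduce $\partial_p\mu(h)$ to a product of activation indicators along the path $p$ — a quantity the hypothesis forces to coincide for $p_1$ and $p_2$.

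First I would use that $\mu$ is a \emph{tree} network: the input coordinate indexed by a path $p \in \cP$ reaches the output $o$ through a \emph{unique} path in $\mu$, with no interaction with the other input coordinates. Hence the chain rule for $\partial_p\mu(h)$ collapses to a single product over this tree-path rather than a sum over paths, which is what makes an \emph{equality} (rather than the usual Clarke inclusion) attainable. Each factor is either a $\pm 1$ edge weight or the subderivative $\sigma'_v$ of the activation at a neuron $v$ on the tree-path. By the construction in Lemma \ref{thm:decomp}, the tree-path neuron sitting at the $i$-th vertex of $p$ carries the activation of that same vertex in $\nu$, because the pull-back of Lemma \ref{lemma:pull_back} rescales preactivations only by positive factors and hence preserves their signs.

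Next I would do the sign bookkeeping. Along the tree-path for $p$, the internal $\pm 1$ weights multiply to $\sgn\!\big(\prod_{e\in p}w_e\big)$, while the subdivided input edge introduced in the proof of Lemma \ref{thm:decomp} contributes an extra factor $\sgn\!\big(\prod_{e\in p}w_e\big)$. The two combine to $+1$, so every sign cancels and
\[
\partial_p\mu(h) \;=\; \prod_{v\in p}\sigma'_v \;=\; Z_p(x;w),
\]
the indicator from Equation \ref{eqn:zp} that equals $1$ exactly when every ReLU on $p$ is active and $0$ otherwise. Since $p_1=\{v_1,\dots,v_L\}$ and $p_2=\{u_1,\dots,u_L\}$ have equal length and, by assumption, $v_i$ and $u_i$ have the same activation pattern on every training example for each $i\in[L]$, the two indicator products are identical, giving $\partial_{p_1}\mu(h)=\partial_{p_2}\mu(h)$.

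The main obstacle I expect is the Clarke subdifferential at neurons whose preactivation is exactly $0$, where the ReLU subderivative is the set $[0,1]$ instead of a single value. I would handle this by fixing a selection consistent with the $\{0,1\}$-valued activation-pattern convention of the statement (in which a zero preactivation counts as active, i.e.\ $\sigma'(0)=1$), and by arguing that matching activation patterns make this choice identical at $v_i$ and $u_i$, so the two subdifferentials agree as sets and not merely up to inclusion. Verifying that this selection is indeed a legitimate element of the Clarke subdifferential of the composed tree network — so that the single-product chain rule above is exact — is the step requiring the most care, and is where the tree structure (which prevents distinct paths from sharing intermediate neurons in a way that would reintroduce the inclusion gap) is essential.
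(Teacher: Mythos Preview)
Your approach is correct and takes a genuinely different route from the paper's. The paper (working with the absolute-valued decomposition of Lemma~\ref{lemma:absolute_decomp}) freezes all but the $p_1$ and $p_2$ input coordinates of $\mu$ to obtain a two-variable function $m:\R^2\to\R$, and then argues that $m$ is \emph{symmetric} in its two arguments: because $\mu$ is a tree, swapping the two inputs can only affect activation patterns along the two tree paths, and the same-activation-pattern hypothesis forces the $0$--$1$ path indicators $Z_p$ to be invariant under the swap, whence $m(a,b)=m(b,a)$. The equality of Clarke partials is then attributed to the definition of the Clarke subdifferential applied to a symmetric function, without ever naming $\partial_p\mu(h)$ explicitly. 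You instead compute $\partial_p\mu(h)$ directly along the unique tree path, cancel the $\pm 1$ weights, and arrive at the closed form $\partial_p\mu(h)=Z_p$. This is more informative --- it bridges the decomposition of Lemma~\ref{thm:decomp} with the $Z_p$ formulation of Equation~\ref{eqn:zp} that Section~\ref{subsection:path_activation} explicitly contrasts it against --- at the cost of more sign bookkeeping. On the non-differentiable boundary (zero preactivations) the two arguments are on similar footing: the paper declares the passage from symmetry of $m$ to equality of Clarke partials ``trivial'' without detail, and your proposed fix of ``fixing a selection consistent with the convention'' does not by itself yield equality of the full subdifferential \emph{sets} either; you have correctly identified this as the step requiring the most care.
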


  \begin{proof}
    In  this proof, we use the absolute-valued version of the decomposition lemma. 
    Fix a training example $j$ and some weight $w_0$ and  let the output of the path enumeration function be $h := (h_p = h(x_j;w_0))_{p \in \cP}$. Denote $\cX \subseteq \R^2$ the input space of all possible pairs of values of $(p_1,p_2)$ such that Assumption \ref{assumption:stable_sign} and the extra assumption that both paths have the same activation pattern on each neuron hold.  Let $m: \R^2 \to \R$ be the same function as the tree network  $\mu$ but with all but the two inputs at  $p_1,p_2$ frozen. We will show that $m$ is  symmetric in its input. Once this is establish, it is trivial to invoke the definition of the Clarke subdifferential to obtain the conclusion of the lemma.

    Let $(a,b) \in \cX \subseteq \R^2$. We thus show that if $(b,a) \in \cX$ then $m(a,b) = m(b,a)$. Recall that  $\mu$ is itself a ReLU-activated (in places where the corresponding original neurons are ReLU-activated) neural network. The fact that $\mu$ has a tree architecture means that for each input node, there is a unique path going to the output node. Thus, the set of paths from some input node in  $\mu$ to its output node can be identified with the set of inputs itself: $\cP$. Now let us considered the product of weights on some arbitrary path $p$. It is not hard to see that for each such path, the product is just $P_p = \prod_{e \in p} w_e$ since the input to $p$ is  $|P|$ and  going along the path collects all signs of $w_e$ for all  $e \in p$. 

    We now invoke the 0-1 form of ReLU neural network to get $m(a,b) = \mu(h) = \sum_{p \in \cP} Z_p(a,b) P_p(a,b)$ where $Z_p$ is  $1$ iff all neurons on  path $p$ is active in the  $\mu$ network (recall that we identify paths in  $\mu$ to input nodes). Consider that what changes between  $m(a,b)$ and  $m(b,a)$: since $\mu$ is a tree network, exchanging two inputs can only have effect on the activation pattern of neurons along the two paths from  these inputs. However, we restricted $\cX$ to be the space where activation pattern of each neuron in the two paths is identical to one another. Since  both  $(a,b)$ and  $(b,a)$ is in  $\cX$, swapping one for another does not affect the activation pattern of each neuron in the two paths! These neurons activation pattern is then identical to those in network  $\mu$ by construction and hence  $Z_p(a,b) = Z_p(b,a)$ for all $p \in \cP$ since activation pattern of each node of the $\mu$ network stays the same. Thus we conclude that  $m(a,b) = m(b,a)$. This completes the proof.
  \end{proof}

\begin{proof}[Proof of Lemma \ref{lemma:vertex_invariance}]
    This proof uses the absolute-value-free version of the decomposition lemma. This is just to declutter notations, as the same conclusion can also be reach using the other version, with some keeping track of weight signs. Recall that our real weight vector $w(t)$ is an arc \citet{jitel_convergence} which, by definition, is absolutely continuous. It then follows from real analysis that its component $w_e(t)$ is also absolutely continuous for any  $e \in E$. For absolutely continuous functions  $w_e(t)$ for some $e \in E$, invoke the Fundamental Theorem of Calculus (FTC) (Chapter 6, \citep{ftc}),  to get:
    \begin{align}
      \sum_{u \in \IN_v} w_{uv}^2(t) - \sum_{u \in \IN_v} w_{uv}^2(0) &= 2\sum_{u \in \IN_v}\int_{[0,t]} w_{uv}(s) \frac{\D w_{uv}}{\D t}(s) \D s 
    \end{align}

    We now proceed to compute $\frac{\D w}{\D t}(s)$. Since $w(t)$ is absolutely continuous and we are taking the integral via FTC, we only need to compute $\frac{\D w}{\D t}(s)$ for a.e. time $s$. By chain rule (see for example, the first line of the proof of Lemma C.8 in \citet{lyulihomogeneous}), there exists functions $(g_j)_{j = 1}^n$ such that $g_j \in \partial \nu_{x_j}(w) \subseteq \R^{|E|}$ for all $j \in [n]$ where $\nu_{x_j}(w) = \nu(x_j;w)$, and for a.e. time $s \geq 0$,
    \begin{equation}
       \frac{\D w}{\D t}(s) = \frac{1}{n}\sum_{j = 1}^n l'(y_j \nu(x_j;w(s))) \cdot y_j \cdot g_j
    \end{equation}

    Fix $j \in [n]$, by the inclusion chain rule, since $\nu_{x_j} = \mu \circ h_{x_j}$,  with $h$ and  $\mu$ also locally Lipschitz, we have by Theorem I.1 of \citet{lyulihomogeneous},
    \begin{equation}
      \partial \nu_{x_j}(w) = \partial (\mu \circ h_{x_j})(w) \subseteq \textsc{conv} \left\{ \sum_{p \in \rho} [\alpha]_{p} \beta_{j,p} \mid \alpha \in \partial \mu(h(w)),  \beta_{j,p} \in \partial [h_{x_j}]_p(w)\right\}.
    \end{equation}

    Thus there exists $(\gamma_a)_{a = 1}^A \geq 0, \sum_{a = 1}^A \gamma_a = 1$ and $(\alpha^a \in \partial \mu(h(w)), \beta_{j,p}^a \in \partial [h_{x_j}]_p(w))_{a = 1}^A$ such that: \[
      g_j = \sum_{a = 1}^A \gamma_a \sum_{p \in \rho} [\alpha^a]_p \beta_{j,p}^a 
  .\] 

    Here we use Assumption \ref{assumption:stable_sign} to deduce that eventually, all weights are non-zero in gradient flow trajectory to compute: \[
      \partial [h_{x_j}]_p(w) = \left\{ \frac{\D [h_{x_j}]_p(w)}{\D (w)} \right\} = \left\{ \left( \mathbbm{1}_{e \in p}  \cdot (x_j)_p \prod_{f \in p, f \neq e}w_f \right)_{e \in E} \right\} 
    .\] 

    Plug this back into $g_j$ to get:
     \[
       g_j = \left(\sum_{a = 1}^A \gamma_a \sum_{p \in \rho | e \in p} [\alpha]_p^{a}   \cdot (x_j)_p \prod_{f \in p, f \neq e}w_f \right)_{e \in E}
    .\] 

    Plug this back into $\frac{\D w}{\D t}(s)$ and to get, coordinate-wise, for a.e. $s \geq 0$,
    \begin{equation}\label{eqn:partial_derivative}
      \frac{\D w_e}{\D t}(s) = \frac{1}{n}\sum_{j = 1}^n l'(y_i \nu(x_i;w(s))) \cdot y_i \cdot \sum_{a = 1}^A\gamma_a \sum_{p \in \rho | e \in p} [\alpha^a]_p   \cdot (x_j)_p \prod_{f \in p, f \neq e} w_f(s). 
    \end{equation}

    Multiply both sides with $w_e$ gives:
     \begin{equation}
       w_{e}(s) \frac{\D w_e}{\D t}(s) =  \sum_{p \in \cP | e \in p} \frac{1}{n} \sum_{j = 1}^n d_{j,p}(w(s)),
    \end{equation} where 
    \begin{equation}
      d_{j,p}(w)= \ell'(y_i \nu(x_i;w)) \cdot y_i \cdot \sum_{a = 1}^A \gamma_a [\alpha]_p^{a} \cdot (x_j)_p \cdot \prod_{f \in p}|w_f(s)|.
    \end{equation} Note that $d$ does not depend on the specific edge  $e$ used in Equation \ref{eqn:partial_derivative} and also that the term given by $\beta$ does not depend on  $a$ and we can simply write  $\alpha_p$ for $\sum_{a = 1}^A \gamma_a [\alpha]_p^{a}$.

    Plugging back into the FTC to get:
    \begin{align}
      \sum_{u \in \IN_v} w_{uv}^2(t) - \sum_{u \in \IN_v} w_{uv}^2(0)  &= 2\sum_{u \in \IN_v} \int_{[0,t]}\sum_{p \in \cP | uv \in p} \frac{1}{n} \sum_{j = 1}^n d_{j,p}(w(s)) \D s \\
      &= 2 \int_{[0,t]} \sum_{p \in \cP | v \in p} \frac{1}{n} \sum_{j = 1}^n d_{j,p}(w(s)) \D s.
    \end{align}

    Finally, by an identical argument but applied to the set of edges $vb$ for some $b \in \OUT_v$, we have:
    \begin{align}
      \sum_{b \in \OUT_v} w_{vb}^2(t) - \sum_{b \in \OUT_v} w_{vb}^2(0)  &=2 \int_{[0,t]} \sum_{p \in \cP | v \in p} \frac{1}{n} \sum_{j = 1}^n d_{j,p}(w(s)) \D s\\
                                                                         &= \sum_{u \in \IN_v} w_{uv}^2(t) - \sum_{u \in \IN_v} w_{uv}^2(0), 
    \end{align}
    which completes the proof of the first part of the lemma.

    For the second part, recall that we have $2$ vertices  $u,v$ such that  $\IN_v = \IN_u = \IN$ and $\OUT_v = \OUT_u = \OUT$ with stable activation pattern for each training example. To make it more readable, we drop the explicit dependence on $t$ in our notation and introduce some new ones: for some $a \in \IN$, let $\cP_{I \to a}$ be the set of all paths from some input node in $I$ to node  $a$ and for some  $b \in \OUT$, let $\cP_{b \to o}$ be the set of all paths from $b$ to the output node  $o$. Then one can decompose the sum as
 \begin{align}\label{eq:single-edge}
 \frac{d}{dt} w_{au} = \sum_{j = 1}^n -\ell'(y \nu(x_j;w)) \cdot y \cdot \sum_{p_1 \in \cP_{I \to a}} \sum_{b \in \OUT} \sum_{p_2 \in \cP_{b \to o}}  (x_j)_{p_1}\cdot w_{ub}\cdot  \prod_{f \in p_1 \cup p_2}w_f \cdot \alpha_{p_1 \cup \{u\} \cup p_2},\end{align} where $\alpha_p$ is the partial Clarke subdifferential at input  $p$.

Recall that the end goal is to derive
\begin{align}
  \frac{d}{dt} w_{au} w_{av} &= w_{au} \frac{d}{dt} w_{av} + w_{av}\frac{d}{dt} w_{au}.\end{align}
Using equation~\eqref{eq:single-edge}, the second term on the right hand side becomes
  \begin{align}
    w_{av}\frac{d}{dt} w_{au} &= \sum_{j = 1}^n -\ell'(y \nu(x_j;w)) \cdot y \cdot \sum_{p_1 \in \cP_{I \to a}} \sum_{b \in \OUT} \sum_{p_2 \in \cP_{b \to o}} \\
                              &\left[(x_j)_{p_1}\cdot w_{av} \cdot \left(\prod_{f \in p_1 \cup p_2} w_f\right) \cdot w_{ub}\right] \cdot \alpha_{p_1 \cup \{u\} \cup p_2}(x_j;w)
  \end{align}
  where the product $w_{av} \cdot \left(\prod_{f \in p_1 \cup p_2} \cdot w_f\right) w_{ub}$ is a jagged path. Then one continues with the derivation to get
\begin{align}
  &\frac{d}{dt} \left(\sum_{a \in \IN} w_{au} w_{av}\right)= \sum_{a \in \IN} \frac{d}{dt} (w_{au} w_{av})\\ 
  =& \sum_{j = 1}^n -\ell'(y \nu(x_j;w)) \cdot y \cdot \sum_{a \in \IN}  \sum_{b \in \OUT}\sum_{p_1 \in \cP_{I \to a}} \sum_{p_2 \in \cP_{b \to o}} \\ 
   & \left((x_j)_{p_1}\cdot \prod_{f \in p_1 \cup p_2} w_f\right)\left(w_{av} \cdot w_{ub}\cdot \alpha_{p_1 \cup \{u\} \cup p_2}(x_j;w) + w_{au} \cdot w_{vb} \cdot \alpha_{p_1 \cup \{v\} \cup p_2}(x_j;w) \right)\label{eq:pov_lhs}
\end{align}

On the other hand 
\begin{align}
  &\frac{d}{dt} \left(\sum_{b \in \OUT} w_{ub} w_{vb}\right)= \sum_{b \in \OUT} \frac{d}{dt} (w_{ub} w_{vb})\\  
  =& \sum_{j = 1}^n -\ell'(y \nu(x_j;w)) \cdot y \cdot \sum_{a \in \IN}  \sum_{b \in \OUT}\sum_{p_1 \in \cP_{I \to a}} \sum_{p_2 \in \cP_{b \to o}} \\    & \left((x_j)_{p_1}\cdot \prod_{f \in p_1 \cup p_2} w_f\right)\left(w_{av} \cdot w_{ub}\cdot \alpha_{p_1 \cup \{v\} \cup p_2}(x_j,w) + w_{au} \cdot w_{vb} \cdot \alpha_{p_1 \cup \{u\} \cup p_2}(x_j;w) \right).\label{eq:pov_rhs}
\end{align}

This is where the more restrictive assumption that for each training example, $u$ and  $v$ have the same activation pattern  as each other (but the same activation pattern between $u$ and  $v$ of, say $x_1$, may differs from that on, say $x_2$). Under this assumption, we can invoke Lemma \ref{lemma:stable_activation} to get $\alpha_{p_1 \cup \{u\} \cup p_2}(x_j;w) = \alpha_{p_1 \cup \{v\} \cup p_2}(x_j;w)$ as a set. This identifies \ref{eq:pov_lhs} with \ref{eq:pov_rhs} and gives us:
\begin{equation}
  \frac{d}{dt} \left(\sum_{a \in \IN} w_{au}(t) w_{av}(t) - \sum_{b \in \OUT} w_{ub}(t) w_{vb}(t)\right) = 0.
\end{equation}

This holds for any time $t$ where  $u$ and  $v$ has the same activation pattern. If further, they have the same activation pattern throughout the training phase being considered, then one can invoke FTC to get the second conclusion of Lemma \ref{lemma:vertex_invariance}. Note, however, that we will be using this differential version in the proof of Theorem \ref{thm:matrix-wise_invariance}.

\end{proof}

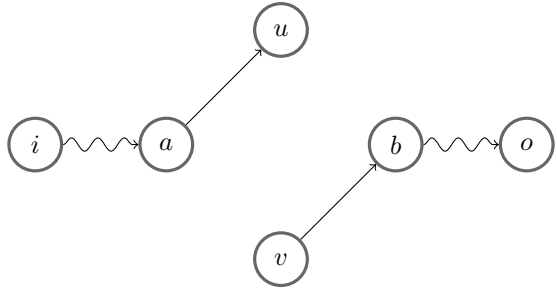
\begin{figure}[h]
\begin{center}
\begin{tikzpicture}[
roundnode/.style={circle, draw=black!60, fill=white!5, very thick, minimum size=7mm},
dotdotnode/.style={circle, draw=white!60, fill=white!5, very thick, inner sep=0pt, minimum size=0.5mm}
]

\node[roundnode]        (maintopic)        {$a$};
\node[roundnode] (ddin) [left=of maintopic] {$i$};

\node[roundnode]        (u) [above right=of maintopic]       {$u$};
\node[roundnode] (v) [below right=of maintopic] {$v$};

\node[roundnode]        (b1) [below right=of u]       {$b$};
\node[roundnode] (ddout) [right=of b1] {$o$};

\draw[->] (maintopic) -- (u);
\draw[->,decorate,decoration=snake] (ddin) -- (maintopic);
\draw[->,decorate,decoration=snake] (b1) -- (ddout);
 \draw[->] (v) -- (b1);
\end{tikzpicture}
\end{center}
\caption{Jagged path. Here the straight arrow denote a single edge in the graph while the snaked arrow denote a path with possibly more than one edge. $u$ and $v$ are nodes in the statement of the second part of Lemma \ref{lemma:vertex_invariance}, $a \in \IN = \IN_v = \IN_u$, $b \in \OUT = \OUT_v = \OUT_u$.}\label{fig:jagged_path}
\end{figure}
  \begin{proof}[Proof of Lemma \ref{lemma:edge_invariance}]
    The proof is identical to that of Lemma \ref{lemma:vertex_invariance}, with the only difference being the set $\cA$ that we double count. Here, set  $\cA$ to be  $\cP$. Then by the definition of layer (see Definition \ref{def:layer}), one can double count  $\cP$ by counting paths that goes through any element of a particular layer. The proof completes by considering (using the same notation as the proof of Lemma \ref{lemma:vertex_invariance}) for a.e. time $s \geq 0$,
     \begin{align}
       \sum_{e \in F} w_e(s)\frac{\D w_e}{\D t}(s) = \sum_{p \in \cP } \frac{1}{n} \sum_{j = 1}^n d_{j,p}(w(s)) \cdot (x_j)_p \cdot \prod_{f \in p}w_f(s),
    \end{align}
    for any layer $F$.
\end{proof}

Before continuing with the proof of Theorem \ref{thm:matrix-wise_invariance}, we state a classification of neurons in a convolutional layer. Recall that all convolutional layers in this paper is linear. Due to massive weight sharing within a convolutional layer, there are a lot more neurons and edges than the number of free parameters in this layer. Here, we formalize some concepts:
\begin{definition}[Convolutional tensors]
  Free parameters in a (bias-free) convolutional layer can be organized into a tensor $T \in R^{a \times b \times c}$ where $a$ is the number of input channels,  $b$ is the size of each $2$D filter and  $c$ is the number of output channels. 
\end{definition}
For example, in the first convolutional layer of AlexNet, the input is an image with $3$ image channels (red, blue and green), so  $a = 3$ in this layer; filters are of size $11 \times 11$ so  $b = 121$ is the size of each $2$D filter; and finally, there are $96$ output channels, so  $c = 96$. Note that changing the stride and padding does not affect the number of free parameters, but does change the number of neurons in the computation graph of this layer. Thus, we need the following definition:
\begin{definition}[Convolutional neuron organization]\label{definition:conv_neuron_org}
  Fix a convolutional layer with free parameters $T \in R^{c \times b \times a}$. Then neurons in this layer can be organized into a two dimensional array $\{v_{j,k}\}_{j \in [l], k \in [c]}$ for some $l \in \mathbb{N}$ such that 
  \begin{enumerate}
    \item For all $j,j' \in [l]$ and for all $k \in [c]$, $v_{j,k}$ and $v_{j',k}$ share all its input weights. More formally, there exists a bijection $\phi: \IN_{v_{j,k}} \to \IN_{v_{j',k}}$ such that:
      \begin{equation}
        w_{uv_{j,k}} \equiv w_{\phi(u)v_{j',k}}, 
      \end{equation} for all $u \in \IN_{v_{j_k}}$.
    \item For all $j \in [l]$ and for all $k,k' \in [c]$, $v_{j,k}$ and $v_{j,k'}$ has the same in-vertices and out-vertices. In other words,
      \begin{equation}
        \IN_{v_{j_k}} = \IN_{v_{j,k'}} =: \IN_{v_j} \text{ and } \OUT_{v_{j_k}} = \OUT_{v_{j,k'}} =: \OUT_{v_j}.
      \end{equation}
  \end{enumerate}
\end{definition}

For example, in the first convolutional of AlexNet in $\R^{96 \times 121 \times 3}$, the input image dimension is $224 \times 224 \times 3$ pixels and the stride of the $11 \times 11 \times 3$ filters in the layer is $4$, with no padding. Thus a single $2$D filter traverse the image $\left(\frac{224 - (11 - 4)}{4}\right)^2 =: l$ times, each corresponds to a different neuron in the layer. This process is then repeated $c$ times for each output channel, for a total of  $c \times l$ neurons in the convolutional layer.

\begin{lemma}[Extension of Lemma \ref{lemma:vertex_invariance} to weight sharing]\label{lemma:vertex_invariance_weight_sharing}
  Let $\cV := \{v_{j,k}\}_{j \in [l], k \in \{1,2\}}$ be a convolutional neuron organization. Recall that by definition \ref{definition:conv_neuron_org}, for all $j \in [l]$, $\IN_{v_{j,1}} = \IN_{v_{j,2}} =: \IN_j$ and $\OUT_{v_{j,1}} = \OUT_{v_{j,2}} =: \OUT_j$. We have for a.e. time $t \geq 0$,
  \begin{align}
    \sum_{(w,w') \in w_{\IN}(\cV)} |w(t) w'(t)| -  |w(0)w'(0)| = \sum_{(z,z') \in w_{\OUT}(\cV)} |z(t) z'(t)| - |z(0) z'(0)|,\end{align} where  \[
    w_{\IN}(\cV) := \{(w_1,w_2) \mid \forall j \in [l], \exists a_j \in \IN_{j}, w_{a_jv_{j,1}} \equiv w_1 \text{ and } w_{a_jv_{j,2}} \equiv w_2\},
    \] and similarly,
\[
    w_{\OUT}(\cV) := \{(w_1,w_2) \mid \forall j \in [l], \exists b_j \in \OUT_{j}, w_{v_{j,1}b_j} \equiv w_1 \text{ and } w_{v_{j,2}b_j} \equiv w_2\}.
    \]
\end{lemma}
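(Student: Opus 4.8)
The plan is to reduce the claim to its differential form and then mirror the proof of the second part of Lemma~\ref{lemma:vertex_invariance}, adapting the double-counting argument to account for weight sharing. Since each $w(t)$ is absolutely continuous, so are the products $w_1(t)w_2(t)$ and $z(t)z'(t)$ and, in the stable sign regime of Assumption~\ref{assumption:stable_sign} where $\sgn(w_1w_2)$ and $\sgn(zz')$ are constant, so are their absolute values. It therefore suffices to show $\frac{\D}{\D t}\big(\sum_{(w,w')\in w_{\IN}(\cV)}|w w'| - \sum_{(z,z')\in w_{\OUT}(\cV)}|z z'|\big)=0$ for a.e.\ $t\ge 0$ and then invoke the Fundamental Theorem of Calculus. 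The two new ingredients relative to Lemma~\ref{lemma:vertex_invariance} are (i) a free (shared) weight now drives several edges, so its gradient-flow velocity is a sum over the spatial positions $j\in[l]$, and (ii) the mixing of two distinct channels forces absolute values, which the constant signs let us factor out cleanly.

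For the computation I would use the absolute-valued decomposition (Lemma~\ref{lemma:absolute_decomp}), under which $\nu=\mu'\circ h'$ with $h'_p = x_p\prod_{e\in p}|w_e|$ and $\mu'$ having fixed $\pm1$ weights. Fix an input pair $(w_1,w_2)\in w_{\IN}(\cV)$, realized on the edges $\{a_j v_{j,1}\}_{j}$ and $\{a_j v_{j,2}\}_{j}$. Writing the gradient flow of the shared weight $w_1$ as $\frac{\D w_1}{\D t}=\sum_{j\in[l]}\frac{\D w_1}{\D t}\big|_{a_j v_{j,1}}$ and expanding each per-edge velocity over the outgoing paths exactly as in Equation~\ref{eq:single-edge} (with $u$ replaced by $v_{j,1}$), multiplication by $w_2$ followed by factoring out $\sgn(w_1w_2)$ turns each summand into $|w_2|\,|w_{v_{j,1}b}|\,\big(\prod_{f\in p_1\cup p_2}|w_f|\big)(x_i)_{p_1}$ weighted by the Clarke coefficient $\alpha_{p_1\cup\{v_{j,1}\}\cup p_2}$; this is a ``mixed jagged'' term that uses a channel-$2$ in-weight and a channel-$1$ out-weight and carries the path coefficient through $v_{j,1}$. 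Summing over all input pairs, positions $j$, examples $i$, and outgoing paths writes $\frac{\D}{\D t}\sum |w_1w_2|$ as two families of such terms: (I-a) channel-$2$-in $\times$ channel-$1$-out with $\alpha$ through $v_{j,1}$, and (I-b) channel-$1$-in $\times$ channel-$2$-out with $\alpha$ through $v_{j,2}$. The identical manipulation applied to $\frac{\D}{\D t}\sum|zz'|$, now expanding the shared output weights over incoming paths, yields (O-a) channel-$2$-out $\times$ channel-$1$-in with $\alpha$ through $v_{j,1}$ and (O-b) channel-$1$-out $\times$ channel-$2$-in with $\alpha$ through $v_{j,2}$; this is the analogue of writing both sides in the common form of Equations~\ref{eq:pov_lhs} and \ref{eq:pov_rhs}.

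It remains to match the two sides, which is the step I expect to be the main obstacle. First, by Definition~\ref{definition:conv_neuron_org} the neurons $v_{j,1}$ and $v_{j,2}$ share their in- and out-neighborhoods, so the paths $p_1\cup\{v_{j,1}\}\cup p_2$ and $p_1\cup\{v_{j,2}\}\cup p_2$ differ only at the middle (linear, hence always active) vertex; Lemma~\ref{lemma:stable_activation} then gives $\alpha_{p_1\cup\{v_{j,1}\}\cup p_2}=\alpha_{p_1\cup\{v_{j,2}\}\cup p_2}$. With this identification, family (I-a) equals (O-b) and (I-b) equals (O-a) term by term, provided the two enumerations cover the same multiset of ``mixed jagged'' objects. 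Verifying this is exactly the weight-sharing bookkeeping: in (I-a) the in-filter position is fixed by the chosen input pair while the out-filter position is the summation index $b$, whereas in (O-b) the out-filter position is fixed by the chosen output pair while the in-filter position is the summation index $a$. Using that a shared weight takes the same value across all spatial positions $j$, both enumerations range over the same tuples (in-filter position, out-filter position, spatial index $j$, incoming path $p_1$, outgoing path $p_2$, example $i$), with the same weight products and the same factor $c_i=-\ell'(y_i\nu(x_i))y_i$. Hence the two time-derivatives agree for a.e.\ $t$, and integrating via the Fundamental Theorem of Calculus yields the claimed identity.
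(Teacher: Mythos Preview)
Your proposal is correct and takes the same double-counting route as the paper: both the input-side and output-side time derivatives are rewritten as a common sum over ``mixed jagged'' paths through the pair $(v_{j,1},v_{j,2})$, which the paper simply packages as the set $\cA=\bigcup_{j}(\cA_j\cup\cA'_j)$ and plugs into the machinery of Lemma~\ref{lemma:vertex_invariance}. The only substantive difference is in how the $\alpha$-coefficients are matched: the paper dispatches this in one line by noting that the convolutional layer is linear (``all subgradients are gradients and evaluate to $1$''), whereas you route the same observation through Lemma~\ref{lemma:stable_activation} via ``linear $\Rightarrow$ always active $\Rightarrow$ same activation pattern''; your explicit handling of the absolute values via the stable sign assumption is also more careful than the paper's proof, which silently drops them.
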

Before we start the proof, some remarks are in order. Let $T_1 \in \R^{2 \times b \times a}$ be a convolutional layer and let $\cV = \{v_{j,k}\}_{j \in [l], k \in [2]}$ be its convolutional neuron organization. Then  
\begin{equation}
  w_{\IN}(\cV) = \{([T_1]_{1,j_2,j_1}, [T_1]_{2,j_2,j_1}) \mid j_1 \in [a], j_2 \in [b]\}
.\end{equation} If furthermore $T_2 \in R^{e \times d \times 2}$ is a convolutional layer immediately after $T_1$, then 
\begin{equation}
  w_{\OUT}(\cV) = \{([T_2]_{k_1,k_2,1}, [T_2]_{k_1,k_2,2}) \mid k_1 \in [e], k_2 \in [d]\}
.\end{equation} If instead of $T_2$, the subsequent layer to  $T_1$ is a fully-connected layer  $W \in \R^{d \times (l \times 2)}$ (recall that there are $2l$ neurons in  $T_1$ layer), then
\begin{equation}
  w_{\OUT}(\cV) = \{(W_{l_1,l_2 \times 1}, [T_2]_{l_1,l_2 \times 2}) \mid l_1 \in [d], l_2 \in [l]\}
.\end{equation}

\begin{proof}[Proof of Lemma \ref{lemma:vertex_invariance_weight_sharing}]
  As before, the proof is identical to that of Lemma \ref{lemma:vertex_invariance} with the exception being the set of paths that we are double counting over. For all $j \in [l]$, let 
  \begin{equation}
    \cA_j := \{p_1 \cup p_2 \mid p_1 \text{ is a path from $I$ to  $v_{j,1}$ }, p_2\text{ is a path from $v_{j,2}$ to $o$}\},
  \end{equation} and 
  \begin{equation}
    \cA'_j := \{p_1 \cup p_2 \mid p_1 \text{ is a path from $I$ to  $v_{j,2}$ }, p_2\text{ is a path from $v_{j,1}$ to $o$}\}.
  \end{equation}

  Let $\cA := \bigcup_{j \in [l]} \cA_j \cup \cA'_j$. Then by an identical argument as that of Lemma \ref{lemma:vertex_invariance}, we can show that
   \begin{align}
     \sum_{(w,w') \in w_{\IN}(\cV)} w(t) w'(t) -  w(0)w'(0) &= \int_{[0,t]} \sum_{p \in \cA} \frac{1}{n} \sum_{j = 1}^n d_{j,p}(w(s)) \D s \\
    &= \sum_{(z,z') \in w_{\OUT}(\cV)} z(t) z'(t) - z(0) z'(0).
  \end{align}
  
  Here the notation $d_{i,j}$ is well-defined since we do not have to specify a path for the subgradient  $\alpha_p$. This is because we are working with linear convolutional layers and thus all subgradients are gradients and is evaluated to  $1$.

\end{proof}

  \begin{proof}[Proof of Theorem \ref{thm:matrix-wise_invariance}]
  All  points in this theorem admit the same proof technique: First, form the matrices as instructed. Let $X = \{v_1, v_2, \ldots, v_m\} \subseteq V$ be the active neurons shared between the two layers. Check the conditions of the second part of Lemma \ref{lemma:vertex_invariance} and invoke the lemma for each pair $u,v \in X$. We now apply this to each points:
  \begin{enumerate}
    \item Let $W_1 \in \R^{b \times a}$ be a fully-connected layer from neurons in $V_1$ to neurons in $V_2$ and $W_2 \in \R^{c \times b}$ be a fully-connected layer from $V_2$ to $V_3$. Then we have  $\IN_u = V$ and $\OUT_{u}  = W$ for all $u \in U$. Furthermore, all weights around any $u$ are learnable for all $u$ in $U$. Invoke the second part of Lemma \ref{lemma:vertex_invariance} to get the conclusion.
    \item let $T_1 \in \R^{c \times b \times a}$ and $T_2 \in  \R^{e \times d \times c}$ be the convolutional tensors with convolutional neuron organization of $T_1$ (Definition \ref{definition:conv_neuron_org}) being $\{v_{j,k}\}_{j \in [l_1], k \in [c]}$. Form the matrix representation $W_1 \in \R^{c \times (ab)}$ and  $W_2 \in \R^{ (de)\times c}$ as per the theorem statement. 
      By Definition \ref{definition:conv_neuron_org}, for $k,k' \in [c]$, for all $j \in [l]$,  $ \IN_{v_{j,k}} = \IN_{v_{j,k'}}$ and $\OUT_{v_{j,k}} =  \OUT_{v_{j,k'}}$. Invoke Lemma \ref{lemma:vertex_invariance_weight_sharing} to get the conclusion.

    \item Let $r(x;U,Y,Z)$ be a residual block of either ResNetIdentity, ResNetDiagonal or ResNetFree. In all variants, skip connection affects neither the edges in $U \in \R^{b \times a}$ and $Y \in \R^{c \times b}$. Let $Y$ be fully-connected from neurons  $V_1$ to neurons $V_2$ and  $U$ be  fully-connected from neurons $V_2$ to neurons  $V_3$. Then for each $u \in V_2, \IN_u = V_1$ and  $\OUT_u = V_3$. Furthermore, all weights around any vertices in $V_2$ are learnable. Invoke the second part of Lemma \ref{lemma:vertex_invariance} to get the conclusion.

    \item  Let $r_i(x; U_i,Y_i,Z_i), i \in \left\{ 1,2 \right\} $ be consecutive ResNetFree block. Let $Y_1$ be fully connected from neurons $V_1$ to neurons  $V_2$,  $U_1$ be fully-connected from  $V_2$ to neurons  $V_3$, $Y_2$ be fully-connected from neurons $V_3$ to  $V_4$ and  $U_2$ be fully-connected from  $V_4$ to neurons  $V_5$. Then $\begin{bmatrix} U_1 & Z_1 \end{bmatrix}$ is fully-connected from $V_1\cup V_2$ to  $V_3$ and   $\begin{bmatrix} Y_2 \\ Z_2 \end{bmatrix} $ is fully-connected from $V_3$ to  $V_4 \cup V_5$. Invoke the first point to get the conclusion.

    \item Let the convolutional tensor be  $T \in \R^{c \times b \times a}$ with convolutional neuron organization $\{v_{j,k}\}_{j \in [l], k \in [c]}$ for some $l \in \mathbb{N}$. Let the adjacent fully-connected layer be $W \in \R^{d \times (l \times c)}$. Form the matrix representation $W_1 \in \R^{c \times ab}$ and $W_2 \in \R^{dl \times c}$ as per the theorem statement. Then we have for any  $k,k' \in [c]$ and for all  $j \in [l]$, $\IN_{v_{j,k}} = \IN_{v_{j,k'}} =: \IN_j$ and $\OUT_{v_{j,k}} = \OUT_{v_{j,k'}} =: \OUT_{j}$. Invoke Lemma \ref{lemma:vertex_invariance_weight_sharing} to get the conclusion.

    \item Let $r(x;U,Y,Z)$ be a residual block of either ResNetIdentity, ResNetDiagonal or ResNetFree. Let $Y$ be fully-connected from neurons  $V_1$ to neurons  $V_2$, $U$ be fully-connected from neurons  $V_2$ to neurons  $V_3$. Thus, $Z$ is fully-connected from  $V_1$ to  $V_3$. Then $W_2 = \begin{bmatrix} U & Z \end{bmatrix} $ is fully connected from $V_1$ to  $V_2 \cup V_3$.
      We invoke the fifth point to get the conclusion.
    \item first consider the case where the ResNetFree block is followed by the fully-connected layer. Let $r(x;U,Y,Z)$ be the first ResNetFree block with input neurons where $Y$ fully-connects neurons  $V_1$ to  $V_2$ and  $U$ fully connects  $V_2$ to  $V_3$. Then we have $\begin{bmatrix} U & Z \end{bmatrix}$ fully-connects $V_1 \cup V_2$ to  $V_3$. If the subsequent layer is a fully-connected layer then invoke the first point to get the conclusion; otherwise if the subsequent layer is a ResNetFree block $r(x;U',Y',Z')$ with $Y'$  fully-connects $V_3$ to $V_4$ and $U'$ fully-connects $V_4$ to $V_5$. Then $ \begin{bmatrix} Y\\Z \end{bmatrix}$ fully-connects $V_3$ to $V_4 \cup V_5$ and we one again invoke the first point to get the conclusion. 
        \end{enumerate}  
\end{proof}

\begin{proof}[Proof of Lemma \ref{lemma:noninvariance}]
  This is the continuation of the proof of Lemma \ref{lemma:vertex_invariance}. To obtain noninvariance, one only needs to show that when $u$ is active and $v$ inactive, the expression in \ref{eq:pov_lhs} and \ref{eq:pov_rhs} are not equal in general. For the sake of notation, we pick the case where the preactivation of $u$ is  strictly positive, that of $v$ is strictly negative, and further assume that the whole network is differentiable at the current weights  $w$ for all training examples. 

  In this case, it is not hard to see that the Clarke subdifferential $\partial_w \mu(h)$ is a singleton and contains the gradient of the $\mu$ network. Furthermore, for any path $p = (v_1,\ldots,v_L)$, the partial derivative $\frac{\partial \mu}{\partial p}$ is $1$ if all neurons on $p$ are active and $0$ otherwise. Thus, we have
 \begin{align}
  &\frac{d}{dt} \left(\sum_{a \in \IN} w_{au} w_{av}\right)\\ 
   =& \sum_{j = 1}^n -\ell'(y \nu(x_j;w)) \cdot y \cdot \sum_{a \in \IN}  \sum_{b \in \OUT}\sum_{\text{active }p_1 \in \cP_{I \to a}} \sum_{\text{active }p_2 \in \cP_{b \to o}} \left((x_j)_{p_1}\cdot \prod_{f \in p_1 \cup p_2} w_f\right) w_{av} \cdot w_{ub}\cdot  \end{align}

We can actually factorize this even further by noticing that the term $w_{av}$  does not depend on $b$ and  $w_{ub}$ does not depend on $a$. Rearranging the sum and factorizes give:
 \begin{align}
  &\frac{d}{dt} \left(\sum_{a \in \IN} w_{au} w_{av}\right) \\ 
   =& \sum_{j = 1}^n -\ell'(y \nu(x_j;w)) \cdot y \cdot   \left(\sum_{\text{active }p_1 \in \cP_{I \to v}} (x_j)_{p_1}\cdot \prod_{f \in p_1} w_f \right) \left(\sum_{\text{active }p_2 \in \cP_{u \to o}} \prod_{f \in p_2} w_f \right)\label{eqn:low_rank_update_lhs}.
\end{align}

On the other hand 
\begin{align}
  &\frac{d}{dt} \left(\sum_{b \in \OUT} w_{ub} w_{vb}\right)\\  
  =& \sum_{j = 1}^n -\ell'(y \nu(x_j;w)) \cdot y \cdot   \left(\sum_{\text{active }p_1 \in \cP_{I \to u}} (x_j)_{p_1}\cdot \prod_{f \in p_1} w_f \right) \left(\sum_{\text{active }p_2 \in \cP_{v \to o}} \prod_{f \in p_2} w_f \right) .\label{eqn:low_rank_update_rhs}\end{align}

  Take, for example, an asymmetric case where the in-edges of $v$ has much larger weights than that of  $u$ while out-edges of $v$ has much smaller weights than that of $u$, then \ref{eqn:low_rank_update_lhs} is much larger than   \ref{eqn:low_rank_update_rhs} and therefore the two expressions are not equal in the general case. A symmetric initialization scheme that prevents the above asymmetry may prevent this from happens, but this requires additional assumptions and is opened to future work.
\end{proof}
\begin{remark}
  Using the automatic differentiation framework while setting the gradient of ReLU to be $1$ if the preactivation is nonnegative while  $0$ otherwise, the same derivation of \ref{eqn:low_rank_update_lhs} can be achieved. Interestingly, if one only has a single example, then the final expression of \ref{eqn:low_rank_update_lhs} implies that the matrix $\frac{d}{dt} \left( W_k^\top W_k\right) $ has rank at most $2$, where $W_k$ is a weight matrix in, for example, ReLU fully-connected neural network. Controlling the eigenvalues under low-rank updates may allow us to bound singular values of the full weight matrices  $W_k$. The resulting bound would not be uniform over time, but improves with training and is thus a different kind of low rank result. This, however, is outside of the scope of this paper.
\end{remark}

\section{Proof of Theorem \ref{thm:low_rank_non_homogeneous}}
First we state a helper lemma
\begin{lemma}[Largest singular value of different flattening of the same tensor is close]\label{lemma:representation_cheap_convolutional}
  Let $T \in \R^{a,b,c}$ be an order $3$ tensor (say a convolutional weight tensor). Let  $T_1, T_2$ be the standard flattening of this tensor into an element in $\R^{c \times (a \times b)}$ and $\R^{(b \times c) \times a}$ respectively. Then,
  \begin{equation}
    \frac{1}{\min(a,b)} \|T_1\|^2_2 \leq \|T_2\|^2_2.
  \end{equation} 

\end{lemma}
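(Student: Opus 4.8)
The plan is to characterize both operator norms variationally as maxima of the same trilinear form in $T$, and then convert the optimizer of the first flattening into a feasible competitor for the second, paying exactly a factor governed by the rank of an intermediate matrix. Write $T_{ijk}$ for the entries, $i\in[a], j\in[b], k\in[c]$. The flattening $T_1\in\R^{c\times(ab)}$ groups the first two modes as columns, so
\[
\|T_1\|_2 = \max_{\|u\|_2=1,\ \|V\|_F=1}\ \sum_{i,j,k} T_{ijk}\, u_k\, V_{ij},\qquad u\in\R^c,\ V\in\R^{a\times b},
\]
since an operator-norm optimizer for a matrix is a pair of unit vectors, and the column vector reshapes into a Frobenius-unit matrix $V$. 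Likewise the flattening $T_2\in\R^{(bc)\times a}$ gives
\[
\|T_2\|_2 = \max_{\|x\|_2=1,\ \|W\|_F=1}\ \sum_{i,j,k} T_{ijk}\, x_i\, W_{jk},\qquad x\in\R^a,\ W\in\R^{b\times c}.
\]

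Next I would take a pair $(u^\ast,V^\ast)$ attaining $\|T_1\|_2$ and diagonalize the matricized middle factor. Let $V^\ast=\sum_{r=1}^R \sigma_r\, p_r q_r^\top$ be its singular value decomposition, with $p_r\in\R^a$, $q_r\in\R^b$ orthonormal families and $R=\mathrm{rank}(V^\ast)\le\min(a,b)$; note $\sum_{r}\sigma_r^2=\|V^\ast\|_F^2=1$. For each $r$, set $x:=p_r$ and $W:=q_r(u^\ast)^\top$, i.e.\ $W_{jk}=[q_r]_j\,u^\ast_k$. Then $\|x\|_2=1$ and $\|W\|_F=\|q_r\|_2\,\|u^\ast\|_2=1$, so this pair is feasible for the variational problem defining $\|T_2\|_2$. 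Denoting $c_r:=\sum_{i,j,k}T_{ijk}[p_r]_i[q_r]_j u^\ast_k$, feasibility (after flipping a sign if needed) yields $|c_r|\le\|T_2\|_2$.

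Finally I would recombine via the decomposition of $V^\ast$ and apply Cauchy–Schwarz on the singular values:
\[
\|T_1\|_2 = \sum_{i,j,k}T_{ijk}\,u^\ast_k V^\ast_{ij} = \sum_{r=1}^R \sigma_r c_r \le \Big(\sum_{r=1}^R \sigma_r\Big)\|T_2\|_2 \le \sqrt{R}\,\|T_2\|_2,
\]
where the last step uses $\sum_{r}\sigma_r\le\sqrt{R}\big(\sum_r\sigma_r^2\big)^{1/2}=\sqrt{R}$. Since $R\le\min(a,b)$, squaring gives $\|T_1\|_2^2\le\min(a,b)\,\|T_2\|_2^2$, which is the claim. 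The proof is mostly bookkeeping about which tensor modes are grouped into rows versus columns; the one genuinely load-bearing observation—and the step I would double-check most carefully—is that the middle factor $V^\ast$ lives in $\R^{a\times b}$ and therefore has rank at most $\min(a,b)$, as this rank bound is precisely the source of the $\min(a,b)$ constant.
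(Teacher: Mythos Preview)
Your proof is correct and self-contained, but the approach differs from the paper's. The paper simply invokes an existing inequality from the tensor-norm literature (Theorem~4.8 of \citet{tensor_norm}), computes the combinatorial quantity $\dim_T(\pi_1,\pi_2)/\dim(T)$ associated to the two partitions $\pi_1=\{\{c\},\{a,b\}\}$ and $\pi_2=\{\{b,c\},\{a\}\}$, and reads off the factor $1/\min(a,b)$. Your argument is instead a direct variational one: you write both operator norms as trilinear maxima, take the optimizer $(u^\ast,V^\ast)$ for $\|T_1\|_2$, SVD the matrix $V^\ast\in\R^{a\times b}$, and feed each rank-one piece $p_r q_r^\top$ (tensored with $u^\ast$) as a feasible competitor into the variational characterization of $\|T_2\|_2$; Cauchy--Schwarz on the singular values then produces the $\sqrt{\min(a,b)}$ factor from the rank bound. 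The paper's route is shorter given the citation but opaque about where $\min(a,b)$ comes from; your route is elementary, requires no external machinery, and makes transparent that the constant is exactly the rank bound on the mode that gets split between the two flattenings.
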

\begin{proof}
  Invoke Theorem 4.8 of \citet{tensor_norm} and using the same notation in the same paper, we have for $\pi_1 = \{\{c\}, \{a,b\}\}$ and  $\pi_2 = \{\{b,c\}, \{a\}\}$, 
  \begin{equation}\label{eqn:tensor_norm_derivation}
     \frac{\dim_{T}(\pi_1, \pi_2)}{\dim(T)} \|T_1\|_2^2 \leq \|T_2\|_2^2.
  \end{equation}

  All that is left is to compute the left hand side in term of $a,b,c$. By definition,  $\dim(T) = abc$ and  
  \begin{align}
    \dim_T(\pi_1,\pi_2) &= \dim_T(\{\{c\}, \{a,b\}\}, \{\{b,c\},\{a\}\})\\
                        &=\left[\max\left( D_T(\{c\}, \{b,c\}), D_T(\{c\}, \{a\} \right) \right] \\
                        &\cdot \left[ \max\left( D_T(\{a,b\}, \{b,c\}), D_T(\{a,b\}, \{a\}) \right)  \right] \\
                        &= \left[ \max(c, 0) \right] \cdot \left[ \max(a,b) \right] = c \max(a,b).   \end{align}

                        Plug this in \eqref{eqn:tensor_norm_derivation} to get the final result. 
\end{proof}

\begin{lemma}[Shuffling layer in linear ResNetFree block preserves largest singular value up to multiple by 8]\label{lemma:representation_cheap_resnet}
Recall that for a ResNetFree block $r(U,Y,Z)$ with $Y \in \R^{b \times a}$, $U \in \R^{c\times b}$ and $Z \in \R^{c \times a}$, 
there are two possible rearrangement of the weights $A = \begin{bmatrix} U & Z \end{bmatrix} $ and $B = \begin{bmatrix} Y \\Z \end{bmatrix}$. We have $\|B\|^2_2 \geq \frac{1}{8} \|A\|^2_2 - D'$ where $D' \geq 0$ is fixed at inialization. 
\end{lemma}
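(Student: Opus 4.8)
The plan is to exploit the matrix-wise invariance already established for linear residual blocks, which is precisely the tool that ties $U$ to $Y$; without such a tie the claim is simply false, since $U$ could grow while $Y$ and $Z$ stay bounded, making $\|A\|_2$ arbitrarily large relative to $\|B\|_2$. Concretely, because the block is \emph{linear}, every neuron is active for all inputs and for all time, so the ``within residual block'' case of Theorem \ref{thm:matrix-wise_invariance} (with $W_1 = Y$ and $W_2 = U$) applies with the active submatrix being the entire matrix. Integrating the vanishing time-derivative via the FTC yields a matrix $C := U(0)^\top U(0) - Y(0) Y(0)^\top$, fixed at initialization, with $U(t)^\top U(t) - Y(t) Y(t)^\top = C$ for a.e.\ time $t \geq 0$.

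The second ingredient is the positive-semidefinite block structure of the two rearrangements. Since $A = \begin{bmatrix} U & Z \end{bmatrix}$, we have $A A^\top = U U^\top + Z Z^\top$, a sum of PSD matrices, so $\|A\|_2^2 = \|A A^\top\|_2 \leq \|U\|_2^2 + \|Z\|_2^2$ by the triangle inequality. Since $B = \begin{bmatrix} Y \\ Z \end{bmatrix}$, we have $B^\top B = Y^\top Y + Z^\top Z$, again a sum of PSD matrices, and dropping either PSD summand can only decrease the operator norm; this gives the two lower bounds $\|Y\|_2^2 \leq \|B\|_2^2$ and $\|Z\|_2^2 \leq \|B\|_2^2$.

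It then remains to control $\|U\|_2^2$ through the invariance: from $U^\top U = Y Y^\top + C$ and the triangle inequality for the operator norm, $\|U\|_2^2 = \|Y Y^\top + C\|_2 \leq \|Y\|_2^2 + \|C\|_2 \leq \|B\|_2^2 + \|C\|_2$. Chaining the three estimates gives $\|A\|_2^2 \leq \|U\|_2^2 + \|Z\|_2^2 \leq 2\|B\|_2^2 + \|C\|_2$, and rearranging yields $\|B\|_2^2 \geq \tfrac12\|A\|_2^2 - \tfrac12\|C\|_2$. Since $\tfrac12 \geq \tfrac18$ and $\|A\|_2^2 \geq 0$, setting $D' := \tfrac12\|C\|_2 \geq 0$ establishes the claimed inequality with room to spare; the stated constant $8$ is merely a convenient loose bound, which is what propagates into the $8^{N_r(j)}$ factor of Theorem \ref{thm:low_rank_non_homogeneous}.

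The main obstacle is conceptual rather than computational: recognizing that the invariance from Theorem \ref{thm:matrix-wise_invariance} is the only thing preventing $\|A\|_2$ from decoupling entirely from $\|B\|_2$, and verifying that its hypotheses genuinely hold here. In particular, one must check that linearity forces the active index set to be the whole set and to be constant in time, so that the pointwise statement $\frac{\D}{\D t}\bigl(U^\top U - Y Y^\top\bigr) = 0$ can be upgraded, via the FTC, to the integrated identity $U^\top U - Y Y^\top \equiv C$ that the norm estimates above rely on.
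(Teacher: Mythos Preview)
Your proof is correct and uses the same two ingredients as the paper---the within-block invariance $U^\top U - YY^\top = C$ from Theorem \ref{thm:matrix-wise_invariance} (point 3), and the PSD block identities $AA^\top = UU^\top + ZZ^\top$, $B^\top B = Y^\top Y + Z^\top Z$---but your execution is more direct. The paper instead runs a longer chain of Weyl-type inequalities: it first lower-bounds $\lambda(B^\top B)$ by $\tfrac12(\lambda(Y^\top Y)+\lambda(Z^\top Z))$, squares, drops the cross term, swaps $\lambda(Y^\top Y)$ for $\lambda(UU^\top)$ via the invariance, and then uses AM--QM and Weyl again to recombine into $\lambda(AA^\top)$, picking up factors of $\tfrac14$ and $\tfrac12$ along the way to land at $\tfrac18$. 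Your route---upper-bounding $\|A\|_2^2$ by the triangle inequality, lower-bounding $\|B\|_2^2$ by dropping a PSD summand, and linking once through $\|U\|_2^2 \le \|Y\|_2^2 + \|C\|_2$---avoids all the squaring and recombination and yields the sharper constant $\tfrac12$ in place of $\tfrac18$. This would in turn improve the $8^{N_r(j)}$ factor in Theorem \ref{thm:low_rank_non_homogeneous} to $2^{N_r(j)}$.
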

\begin{proof}
  Recall that by point three of Theorem \ref{thm:matrix-wise_invariance}, we have matrix invariance \[
    U^\top(t)U(t) - Y(t)Y^\top(t) =  U^\top(0)U(0) - Y(0)Y^\top(0)
  .\] Note that we can obtain this form since we are only considering linear ResNetFree blocks, so all neurons are active at all time for all training examples. Thus, we can invoke a FTC to get this form from the differential from in theorem \ref{thm:matrix-wise_invariance}.

  By line $B.2$ in \citet{jitel_alignment}, 
  \begin{equation}\label{eqn:inner_resnet}
    \|Y\|_2^2 \geq \|U\|_2^2 - D,
  \end{equation} where $D = \| U^\top(0)U(0) - Y(0)Y^\top(0)\|^2_2$ is fixed at initialization.

  For positive semidefinite matrix $X$, denote $\lambda(X)$ to be the function that returns the maximum eigenvalue of  $X$. We have 
  \begin{align}
    \|B\|^2_2 &=  (\lambda(B^\top B))^2 \label{eqn:singular_def}\\
              &= \left(\lambda\left(Y^\top Y + Z^\top Z \right)\right)^2\\
            &\geq \frac{1}{4} \left( \lambda\left(Y^\top Y  \right) + \lambda\left( Z^\top Z \right) \right)^2 \label{eqn:weyl}  \\
            &\geq \frac{1}{4} \left( \lambda\left(Y^\top Y  \right)\right)^2 + \frac{1}{4}\left(\lambda\left( Z^\top Z \right) \right)^2 \\
            &\geq \frac{1}{4} \left( \lambda\left(U U^\top \right)\right)^2 + \frac{1}{4} \left(\lambda\left( Z Z^\top \right) \right)^2 - \frac{D}{4} \label{eqn:apply_inner}\\
            &\geq \frac{1}{8} \left( \lambda\left(U U^\top \right)+\lambda\left( Z Z^\top \right) \right)^2 - \frac{D}{4} \label{eqn:amgm_2}\\
            &\geq \frac{1}{8} \left(\lambda\left(U U^\top +  Z Z^\top \right)\right)^2 - \frac{D}{4} \label{eqn:weyl_2}\\
            &= \frac{1}{8} \left(\lambda\left(A A^\top\right)\right)^2 - \frac{D}{4} = \frac{1}{8}\|A\|^2_2 - \frac{D}{2},  \end{align}
            where  \ref{eqn:weyl} is by an application of Weyl's inequality for Hermitian matrices which states that $\lambda(C+D) \geq \lambda(C) + t$ where  $t$ is is the smallest eigenvalue of $D$, which is nonnegative since matrices here are all positive semidefinite; \ref{eqn:apply_inner} is a consequence of \ref{eqn:inner_resnet} and \ref{eqn:weyl_2} is the application of the inequality $\lambda(C+D) \leq \lambda(C) + \lambda(D)$ which is another of Weyl's inequality for Hermitian matrices. 
\end{proof}

\begin{proof}[Proof of Theorem \ref{thm:low_rank_non_homogeneous}]
  Fix $j \in [K+M]$, we first invoke Lemma \ref{lemma:edge_invariance} to bound the Frobenius norm of each of the final $K+M+1$ layers (counting the last layer $Fin$) via the last layer. Let  $W_j$ be the matrix representation of the  $j$-th layer among the last $M+1$ layer as described in Theorem \ref{thm:matrix-wise_invariance}. Note that even if a layer has more than one matrix representation in Theorem \ref{thm:matrix-wise_invariance}, their Frobenius norm is still the same because different representation merely re-organize the weights. Thus, we can pick an arbitrary representation in this step. However, the same is not true for the operator norm and we have to be a lot more careful in the next step. For each $j \in [K+M]$, we have
  \begin{equation}\label{eqn:frobenius_dif_nonhomogeneous}
    \|W_j(t)\|_F^2 - \|Fin(t)\|_F^2 = D_0,
  \end{equation} where $D_0 = \|W_j(0)\|_F^2 - \|Fin(0)\|_F^2 $ fixed at initialization. 

  Now, we bound the difference between the operator norm of $W_j(t)$ and  $Fin(t)$ by a telescoping argument. By Lemma \ref{lemma:representation_cheap_resnet}, switching from one matrix representation to the other for ResNet incurs at most a multiplicative factor of $8$ and an additive factor cost that depends only on the initialization. In each adjacent layer, the maximum number of switch between representation is one (so that it fits the form prescribed in Theorem \ref{thm:matrix-wise_invariance}). By matrix invariance between adjacent layers of the $K+M$ pairs of adjacent layers, 
  \begin{equation}
    \|W_l(t)\|_2^2 \geq C\|W_{l+1}(t)\|_2^2 - D_l,
  \end{equation} for $C = 1 / 8$ if the $k+1$ layer is a ResNetFree block (Lemma \ref{lemma:representation_cheap_resnet}), $C = 1 / \min(a_{k+1},b_{k+1})$ if the $k+1$ layer is a convolutional layer, $C = \max\dim W^{[M+1]}$ if $k = M$ (Lemma \ref{lemma:representation_cheap_convolutional}) and $C=1$ otherwise; for $D_l = \|W_{l+1}^\top(0)W_{l+1}(0) - W_{l}(0)W_{l}^\top(0) \|_2^2$.

  Telescope the sum and subtract from \ref{eqn:frobenius_dif_nonhomogeneous} to get the statement of the theorem. When Frobenius norm diverges, divide both sides by the Frobenius norm to get the ratio statement. Note that the bound $\|W\|_F^2 / \text{rank}(W) \leq \|W\|^2_2$ is trivial since the largest singular value squared is at least the average singular value squared.

  When the lower bound of \ref{eqn:reduce_alignment} is $1$, it matches the upper bound and thus the largest singular value dominates all other singular values. Alignment follows from the proof of Lemma 2.6 second point in \citet{jitel_alignment}.
\end{proof}

\section{Proof of Corollary \ref{thm:low_rank}}

\begin{proof}
  Let $L$ be the number of layers in the network. Under the conditions stated in Corollary \ref{thm:low_rank}, \citet{lyulihomogeneous} and \citet{jitel_convergence} showed that $\|w(t)\|_2$ diverges. Invoke Lemma \ref{lemma:edge_invariance} for all $k \in [L - 1]$ and sum up the results, we have $L \|W_L(t)\|^2_F = D'' + \sum_{j = 1}^L \|W_j(t)\|^2_F$ $\|W_k\|^2_F = D'' + \|w(t)\|^2_2$ where $D''$ is constant in  $t$. Thus $\|W_j(t)\|^2_F$ diverges for all $j \in [L]$. Since the sum of all but the largest singular value is bounded, but the sum of all singular values diverge, we conclude that the largest singular value eventually dominates the remaining singular values. Together with convergence in direction for these architecture \citet{jitel_convergence}, we have each matrix converges to its rank-$1$ approximation. 

  That all the feedforward neural networks with ReLU/leaky ReLU/linear activations can be definable in the same o-minimal structure that contains the exponential function follows from the work of \citet{jitel_convergence} and that definability is closed under function composition. 
\end{proof}

\end{document}